\newcommand{\hll}[1]{\colorbox{yellow}{#1}}
\newcommand{\trn}{\Dcal}
\newcommand{\dev}{\Vcal}
\newcommand{\deh}{\nabla_{\wb}h_{\wb}(\xb)}
\newcommand{\dehz}{\nabla h_{0}}
\newcommand{\wg}{\wb^*(\mub,\Scal)}
\newcommand{\sumv}{\sum_{q\in [Q]}\frac{\mu_q}{|V_q|} \sum_{j\in V_q}}
\newcommand{\xhdr}[1]{\vspace{1mm}\noindent{{\bfseries #1.}}}
\newcommand{\ehdr}[1]{\noindent{{\em #1: }}}
 \newcommand{\nn}{\nonumber}
\Crefname{ALC@unique}{Line}{Lines}
\newcommand{\set}[1]{\{#1\}}
\newcommand{\bnm}[1]{\left\| #1 \right\|}
\newcommand{\minz}[1]{\underset{#1}{\text{minimize}}}
\newcommand{\primal}{g}
\renewcommand{\obj}{f}
\setlist{nosep}
\title{Training Data Subset Selection for Regression\\ with Controlled Generalization Error}
\newcommand{\dual}{F}
\newcommand{\af}{\widehat{\alpha}_f}
\newcommand{\kf}{\widehat{\kappa}_f}
\newcommand{\full}{Full-selection\xspace}
\newcommand{\Full}{Full-selection\xspace}
\newcommand{\random}{Random-selection\xspace}
\newcommand{\Random}{Random-selection\xspace}
\newcommand{\fullc}{Full-with-constraints\xspace}
\newcommand{\Fullc}{Full-with-constraints\xspace}
\newcommand{\randomc}{Random-with-constraints\xspace}
\newcommand{\Randomc}{Random-with-constraints\xspace}
\newcommand{\ourcon}{\our-without-constraint\xspace}
\renewcommand{\ie}{i.e.}
\author[1]{Durga Sivasubramanian}
\author[2]{Rishabh Iyer}
\author[1]{Ganesh Ramakrishnan}
\author[1]{Abir De}
\affil[1]{%
  {IIT Bombay, $\{$ durgas, ganesh, abir $\}$@cse.iitb.ac.in}
}
\affil[2]{%
  {UT Dallas, rishabh.iyer@utdallas.edu}
}
\date{} 
\newcommand{\our}{\textsc{SelCon}\xspace}
\newcommand{\oursub}[1]{\vspace{-1.5mm}\subsection{#1}\vspace{-1mm}}
\begin{document}

\maketitle
\begin{abstract}
Data subset selection from a large number of training instances has been a successful approach toward efficient and cost-effective machine learning. However, models trained on a smaller subset may show poor generalization ability. In this paper, our goal is to design an algorithm for selecting a subset of the training data, so that the model can be trained quickly, without significantly sacrificing on accuracy. More specifically, we focus on data subset selection for $L_2$ regularized regression problems and provide a novel problem formulation which seeks to minimize the training loss with respect to both the trainable parameters and  the subset of training data, subject to error bounds on the validation set. We tackle this problem using several  technical innovations. First, we represent this problem with simplified constraints using the dual of the original training problem and show that the objective of this new representation is a monotone and $\alpha$-submodular function, for a wide variety of modeling choices. Such properties lead us to develop \our, an efficient majorization-minimization algorithm for data subset selection, that admits an approximation guarantee even when the training provides an imperfect estimate of the trained model. Finally, our experiments on several datasets show that \our
trades off accuracy and efficiency more effectively than the current state-of-the-art.

\end{abstract}

\vspace{-2mm}
\section{Introduction}
\vspace{-1mm}

\label{sec:intro}
Data-driven estimation of the underlying statistical model is the central challenge in any supervised machine learning (ML) problem.
Thanks to the law of large numbers~\cite{casella2002statistical}, such a training procedure
often demands a huge number of training 
examples to ensure statistical reliability of the learned model.
Therefore, the success of several machine learning models 
can be attributed to the availability of a massive amount of data
and thus to the high performance computing infrastructures, 
\eg,  GPUs, multicore processors, high storage disks, {\em etc.}, which are
required to store and process such data. 
These computational resources involve large expenses,
additional energy utilization and maintenance costs. 
Mitigation of such overheads without sacrificing the accuracy of the predictive model is a challenging task, which often entails a careful selection of a smaller number of training instances, so that the training algorithm can be run in an environment with limited resources~\cite{lucic2017training,mirzasoleiman2019coresets,boutsidis2013near,kaushal2019learning,killamsetty2020glister,wei2014fast,liu2015svitchboard,bairi2015summarization,kirchhoff2014submodularity}. 
However, current data selection techniques do not explicitly account for the generalization error which may be exacerbated in the presence of a small sized training dataset. As a consequence, they can suffer from high generalization error, especially for large datasets.
\oursub{Present work}  
%
In response to the above limitations, our goal is to select a subset from training data in such a way that the model can be quickly trained in an environment with limited resources, while at the same time, provide good
predictive power. More specifically, we make the following contributions.

\xhdr{Novel formulation of data selection}
In this work,  we focus on the regression problem and 
introduce a novel problem formulation (Section~\ref{sec:formulation}) - which encodes the task of data selection for regression, while ensuring that the error on validation set remains below an acceptable level. Such an explicit use of the validation set during training improves the generalization ability of the inferred model, as indicated in~\cite{ren2018learning,killamsetty2020glister}. 

More specifically, given a model class and a fixed validation set, we seek to minimize an $L_2$ regularized constrained squared error loss with respect to both the parameter vector and the subset of training data, subject to a set of error bounds
on different portions of the validation set. The use of such error bounds as optimization constraints enhances the generalization ability of the inferred model in  the face of small training data. 
Moreover, the presence of multiple error constraints in our setup can be  useful in several data selection problems; \eg, learning with heterogeneous data where each constraint limits the error for each cluster of data~\cite{rothenhausler2018anchor};
fair regression with bounded group loss, where each constraint limits the error on the protected group(s), \etc~\cite{agarwal2019fair}.

In general, our data selection problem is NP-hard due to the presence of both the training set and the model parameters as optimization variables. However, it permits us to reformulate it into
a new optimization task with simplified constraints,
by making use of the Lagrangian dual of the original training problem.
This new optimization problem can be seen as an instance of cardinality-constrained
set function minimization problem, where the objective corresponds to 
the optimal training loss as a function of the candidate data subset.


\xhdr{Characterizing the loss function of data selection}
Having represented the optimal training loss as a set function,
%
we show that this function is monotone, $\alpha$-submodular~\cite{gatmiry2018non,lehmann2006combinatorial,hassani2017gradient} and enjoys a bounded generalized curvature~\cite{NIPS2013_c1e39d91,zhang2016submodular} for a wide variety of models including a class of nonlinear functions (Section~\ref{sec:method}).
These technical results can be useful in other related data selection problems and
therefore, are of independent interest.

\xhdr{Approximation algorithm for data selection} 
Finally, to solve our data selection problem, we design \our, a new majorization-minorization algorithm (Algorithm~\ref{alg:selcon}, Section~\ref{sec:algo}) building upon the semi-differentials  proposed by~\cite{iyer2013fast,iyer2015polyhedral}, which minimizes the set function characterized above. \our enjoys an approximation guarantee even when the training algorithm provides an imperfect estimate. While doing so, we obtain a new family of modular upper bounds of an $\alpha$-submodular function, which extends the bounds proposed in~\cite{iyer2013fast} and therefore, can be of independent technical interests. Moreover, \our can minimize any monotone,  $\alpha$-submodular function, going beyond the particular instance in this work, which makes it useful from a broader perspective. 
%

We evaluate\footnote{\scriptsize Our code and data is available at \url{https://github.com/abir-de/SELCON}} our framework on several real-world datasets, and demonstrate that \our trades off the accuracy
and efficiency more effectively than several baselines and state-of-the-art. We also demonstrate that the use of constrained validation set error
maintains the generalization ability of the inferred model in the presence of small training data. 
Finally, we test our framework on the application of fair regression with bounded group loss, which 
shows that \our offers fair prediction along with an effective trade-off between accuracy and efficiency. 
\oursub{Related work}
%
Algorithms for data selection predominantly follow two approaches.
The first approach~\cite{wei2014fast,wei2014unsupervised,liu2015svitchboard,bairi2015summarization} selects diverse training examples by maximizing submodular \emph{proxy} functions, \eg, facility location, \etc,
and \emph{then} use them to train the underlying model. 
%
%
%
The second approach selects \emph{coresets} --
weighted subsets of training examples -- alongside training the model over them~\cite{lucic2017training,mirzasoleiman2019coresets,killamsetty2021grad,campbell2018bayesian,boutsidis2013near,kaushal2019learning}. The choice of a coreset depends strongly on the model as well as on the training loss. Therefore, coreset selection algorithms vary widely across different ML settings, \eg, SVM~\cite{clarkson2010coresets}, Bayesian inference~\cite{campbell2018bayesian}, k-means clustering~\cite{har2004coresets}, regression~\cite{boutsidis2013near}, deep learning~\cite{mirzasoleiman2019coresets,killamsetty2021grad}, \etc
However, they do not explicitly control the validation set error, which often constrains their predictive power.

Our work is related to robust and efficient learning methods~\cite{ren2018learning,zhang2018generalized,killamsetty2020glister}, that utilize the validation set to improve the training performance via a bi-level optimization. However, these approaches do not explicitly control the validation set error the way we do.
Our work is also related to subset selection problems in the context of human-assisted machine learning~\cite{ruha,cuha}, that aim to select a training subset to outsource to humans, rather than facilitating efficient learning. Moreover, unlike us, these setups do not consider any validation constraint.  Our work is also connected with batch active learning methods~\cite{wei2015submodularity,hashemi2019submodular,kulkarni2018active,sener2018active},
that aim to select examples from training data in order to minimize the labeling cost. In contrast, our setup has access to all the labels and it aims to select data to improve efficiency.

In recent years, there is a flurry of works on maximizing non-submodular functions~\cite{Horel2016,das2011submodular,bian2017guarantees,Kuhnle2018,gatmiry2018non,Hassidim2018,Hassidim2017}. However, there is a paucity of work on minimizing $\alpha$-submodular functions. Very recently, Halabi \emph{et al.}~\cite{el2020optimal} aim to minimize the difference between two monotone $\alpha$-submodular and $\beta$-submodular functions. However, they do not consider a cardinality constraint, which makes their approach  less relevant to our setting.


\vspace{-2mm}
\section{Problem formulation}
\vspace{-1mm}
\label{sec:formulation}
 In the following,  we first setup the notation and contextualize our problem. Thereafter, we formally present  our data selection problem which involves simultaneous
selection of a subset $\Scal$ of the training dataset $\Dcal$ and training of a regression model $y\approx h_{\wb}(\xb)$, subject to validation error constraints. We obtain an alternative representation of this problem, using the Lagrangian dual of the parameter estimation task. Finally we formally show that our data selection problem is NP-Hard. 

\oursub{Notation}
Let $\{\xb_i,y_i\}_{i\in \trn}$ be the set of training samples and
$\{\xb_j,y_j\}_{j\in \dev}$ the set of validation samples. Here, $\xb_\bullet\in \RR^d$ are the features and $y_\bullet \in \RR$ are the corresponding response (output) variables.  We also have a partition of $Q$ subsets over the validation set,  {\em i.e.}, $\dev =  V_1 \cup V_2 \cup \ldots \cup V_Q$. Unless otherwise stated, $\bnm{\cdot}$ denotes the $L_2$ norm, {\em i.e.},  $\bnm{\xb}=\sqrt{\xb^\top \xb}$.
%

\oursub{Our broad objective}
We are provided a modeling framework $h_{\wb}:\RR^d\to \RR$ which can approximate the relationship 
between $\xb$ and $y$, {\em i.e.}, $y\approx h_{\wb}(\xb)$, where $\wb$ is a trainable parameter vector. 
Given the aforementioned setup, one can learn $\wb$ using standard least square estimation. 
%
In principle, one might be tempted to estimate $\wb$ using the entire set of training examples $\trn$, which would possibly give a statistically sound estimate of $\wb$. 
However, if the size of $\trn$ is large, such exhaustive training may be inefficient in a typical computing environment.  To tackle this problem, our goal is to determine a smaller subset  of training samples $\Scal\subset \Dcal$ such that it allows for efficient training of the model without significant drop in accuracy.

\oursub{Problem setting for data selection}
Given the full training set $\set{\xb_i,y_i}_{i\in\trn}$ and
the validation set $\set{\xb_j,y_j}_{j\in\dev}$ along with its partitions $\Vcal = \cup_{q\in[Q]} V_q$ and
the model class $h_\wb$, we consider minimization of the $L_2$ regularized training loss, jointly with respect to  parameters $\wb$ and the candidate subset $\Scal$, subject to a set of
constraints that bound the mean squared errors (MSE) on the $Q$ partitioning subsets of the validation set, {\em i.e.},
\begin{align}
 \mini_{ \Scal\subset \trn, \wb}\ \ & \sum_{i\in\Scal}[ \lambda\bnm{\wb}^2 +  (y_i-h_{\wb}(\xb_i))^2], \nn \\
 \text{subject to, }&  \frac{\sum_{j\in V_q}(y_j-h_{\wb}(\xb_j))^2}{|V_q|}  \le  \delta, \   \forall q\in [Q],\nn\\
 & |\Scal| = k. \label{eq:opt-hard}
\end{align}
Here, $\lambda$ is the coefficient of the regularizer; the cardinality constraint limits
the number of training samples to be chosen; and the validation error constraints ensure that the predictor's loss
remains below some acceptable level $\delta$ for the subsets $\set{V_q}$ of the validation set\footnote{\scriptsize For the sake of brevity, we assumed the same value of $\delta$ across different validation subsets $\{V_q\}$.}. \\

\xhdr{Discussion on {multiple} validation error bounds}
Note that the absence of validation error constraints in the basic problem setting 
 may  result in efficient training, but might not generalize well owing to small size of the training  data. The validation error constraints in \eqref{eq:opt-hard} ameliorate this problem, by attenuating the generalization error which might have exacerbated in the face of a small sized training data.

We note that, in order to improve the generalization ability, one may consider bounding the MSE on the entire validation set as one single constraint, {\em viz.},  $\frac{1}{|\Vcal|}\sum_{j\in\Vcal}(y_j - h_{\wb}(\xb_j))^2 \le  \delta $, 
rather than constraining the MSE for multiple subsets of the validation set as in Eq.~\eqref{eq:opt-hard}. 
However, we envision the use of formulation~\eqref{eq:opt-hard} in several applications.
For example, in the case of fair regression with bounded group loss,
the validation set can be partitioned in a way that each subset $V_q$ corresponds to the sub-population for a protected group, so that
the individual MSE for each protected group remains small.
Our setup can also be useful in learning from heterogeneous data, wherein the heterogeneity could have arisen owing to multiple sources
of data, time-shifts in the distribution, \etc. To address such requirements, the validation set can be partitioned into different subsets, where each subset represents a partition with similar properties.
 
\oursub{A soft-constraint approach}
 It is evident that arbitrarily reducing $\delta$ would eventually make the error constraints  infeasible in the above optimization problem~\eqref{eq:opt-hard}. Therefore, we relax the constraints by
provisioning for some margin of violation of these  constraints. To this aim,
we introduce new slack variables $\xi_1,\xi_2,...,\xi_Q$ and replace each hard validation error (inequality) constraint in Eq.~\eqref{eq:opt-hard} by a soft constraint, {\em i.e.},  $\frac{1}{|V_q|}(y_j- h_{\wb} (\xb_j))^2 \le \delta + \xi_q$ similar to the soft-SVM formulation. Here $\xi_q$
measures the extent of error violation in the constraint $\frac{1}{|V_q|}(y_j- h_{\wb} (\xb_j))^2 \le\delta$. Finally, we minimize
the sum of regularized loss computed over the candidate set $\Scal$, along with a weighted sum of the slack variables that penalizes the constraint violation to yield the optimization problem in Eq.~\eqref{eq:opt-soft}, {\em i.e.},  
\begin{align}
 &\hspace{-3mm} \mini_{\Scal\subset \trn,\wb, \{\xi_q\}_{q\in[Q]}}\,   \sum_{i\in\Scal} [ \lambda\bnm{\wb}^2 \hspace{-1mm} +  (y_i-h_{\wb}(\xb_i))^2]  \hspace{-0.4mm}  +  \hspace{-0.4mm} C\hspace{-1mm} \sum_{q\in V_q} \xi_q,\hspace{-2mm}  \nn\\ 
 & \hspace{1mm}\text{subject to, }  \frac{\sum_{j\in V_q}(y_j-h_{\wb}(\xb_j))^2}{|V_q|}  \le  \delta + \xi_q \quad \forall q\in [Q], \nn\\ 
 & \qquad \qquad \quad \xi_q \ge 0\ \quad \forall \, q \in [Q] \text{ and, }\ |\Scal| = k \label{eq:opt-soft} 
\end{align}
where $\set{\xi_q}$ are the optimization variables in addition to the parameter vectors $\wb$ and
the candidate set $\Scal$ that were already specified in~\eqref{eq:opt-hard}. Through $C$, we can control the extent of penalization on the of violation of the validation set error. We note that
as $C\to\infty$, the above formulation becomes equivalent to its hard constrained counterpart~\eqref{eq:opt-hard}.

We may consider two possible approaches to solve the optimization problem in Eq.~\eqref{eq:opt-soft}.
In the first approach, we initially minimize the optimization problem~\eqref{eq:opt-soft}  with respect to $\Scal$ for fixed $\wb$ and $\set{\xi_q}$; and thereafter minimize the inner optimization objective with respect to $\wb$ and $\set{\xi_q}$. 
This can be viewed as an instance of minimizing the sum of $k$ smallest elements, which we expect to be intractable, since it is a concave minimization problem. In the second approach, given a fixed set $\Scal$ we first minimize~\eqref{eq:opt-soft} with respect to $\wb$ and $\set{\xi_q}$;
and thereafter, minimize this quantity with respect to $\Scal$. In this work, we focus on the second approach, which, as we will show in Section~\ref{sec:algo}, provides a tractable solution with an approximation guarantee.
For any given set $\Scal$, let the optimal
value of the parameters be $\wb^*(\Scal)$ and $\xi^* _q(\Scal)$.
%
We note that, if we define,
%
\begin{align}
\primal(\Scal) =  \sum_{i\in\Scal}[ & \lambda\bnm{\wb^*(\Scal)}^2 +  (y_i-h_{\wb^*(\Scal)}(\xb_i))^2]  + C\sum_{q\in[Q]} \xi^* _q(\Scal),\label{eq:gdef}
\end{align}
then, our  data selection problem becomes equivalent to
\begin{align}
 & \mini_{\Scal} \primal(\Scal), \ \ \text{subject to, } |\Scal| =  k.\label{eq:gopt} 
\end{align}
\vspace{-5mm}
\oursub{Representation of Eq.~\eqref{eq:opt-soft} with simplified constraints}
Next, we obtain an alternative representation of the data selection problem, by making use of the Lagrangian dual\footnote{\scriptsize The dual is formed with respect to the model parameters $\wb$ and $\set{\xi_q}$, which allows us to augment the validation error constraints and $\set{\xi_q}\ge 0$ into the new objective. However, it still remains as a constrained optimization problem with respect to $\Scal$. } of the optimization problem~\eqref{eq:opt-soft} for a fixed $\Scal$, as formalized in the following proposition (Proven in Appendix~\ref{app:proof-dual} in the supplementary material).
As we shall discuss, such a new representation becomes equivalent to Eq.~\eqref{eq:gopt} for convex loss functions.\looseness-1
\begin{proposition} \label{prop:dual}
Given a fixed training set $\Scal$, let $\mub=[\mu_q]_{q\in[Q]}$ be the Lagrangian multipliers for the constraints
$ \set{\frac{1}{{|V_q|}}{\sum_{j\in V_q}(y_j-h_{\wb}(\xb_j))^2}\le   \delta + \xi_q}_{q\in[Q]}$ in the optimization problem~\eqref{eq:opt-soft} and $\dual(\wb,\mub,\Scal)$ be defined as follows:
\begin{align}
\dual(\wb,\mub,\Scal)& =    \sum_{i\in\Scal}  [ \lambda\bnm{\wb}^2 +  (y_i-h_{\wb}(\xb_i))^2]  
  +  \sum_{q\in[Q]}  \hspace{-1mm} \mu_q  \left[ \frac{\sum_{j\in V_q}(y_j-h_{\wb}(\xb_j))^2}{|V_q|}  -  \delta\right] \label{eq:adef}  \hspace{-2mm}
\end{align}
Then, for the fixed set $\Scal$, the dual of the optimization problem~\eqref{eq:opt-soft} for estimating $\wb$ and $\set{\xi_q}$ is given by,
\begin{align}
 \maxi_{\bm{0}\le \mub \le C \bm{1}} \ \mini_{\wb} \ \ &  \dual(\wb,\mub,\Scal)\label{eq:aopt}
\end{align}
\end{proposition}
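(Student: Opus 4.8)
The plan is to obtain \eqref{eq:aopt} as the ordinary Lagrangian dual of \eqref{eq:opt-soft} taken with respect to the block $(\wb,\{\xi_q\})$ for the fixed set $\Scal$, following the same recipe as the soft-margin SVM dual. First I would attach a multiplier $\mu_q\ge 0$ to each soft validation-error constraint $\frac{1}{|V_q|}\sum_{j\in V_q}(y_j-h_{\wb}(\xb_j))^2-\delta-\xi_q\le 0$ and a multiplier $\nu_q\ge 0$ to each sign constraint $-\xi_q\le 0$, and write the Lagrangian
\begin{align}
L(\wb,\{\xi_q\},\mub,\{\nu_q\}) = {} & \sum_{i\in\Scal}\big[\lambda\bnm{\wb}^2+(y_i-h_{\wb}(\xb_i))^2\big] + \sum_{q\in[Q]}\mu_q\Big[\frac{1}{|V_q|}\sum_{j\in V_q}(y_j-h_{\wb}(\xb_j))^2-\delta\Big] \nn\\
& + \sum_{q\in[Q]}\big(C-\mu_q-\nu_q\big)\,\xi_q ,
\end{align}
so that the dual function is $\inf_{\wb,\{\xi_q\}} L$ and the dual problem is its supremum over $\mub\ge\bm{0}$ and $\nu_q\ge 0$ for all $q$.

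The key step is to minimize out the slack variables. Each $\xi_q$ enters $L$ only through the linear term $(C-\mu_q-\nu_q)\xi_q$ and, after dualization, ranges freely over $\RR$; hence $\inf_{\xi_q}L=-\infty$ unless $C-\mu_q-\nu_q=0$, and on the region where the dual function is finite every $\xi_q$-term vanishes. Combining $\nu_q=C-\mu_q$ with $\nu_q\ge 0$ and $\mu_q\ge 0$ gives exactly the box $0\le\mu_q\le C$, i.e.\ $\bm{0}\le\mub\le C\bm{1}$, and on that box $L$ reduces to $\dual(\wb,\mub,\Scal)$ with $\dual$ precisely as in \eqref{eq:adef}. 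The dual function is therefore $\min_{\wb}\dual(\wb,\mub,\Scal)$ there and $-\infty$ outside the box, and taking the supremum over the feasible multipliers yields $\max_{\bm{0}\le\mub\le C\bm{1}}\min_{\wb}\dual(\wb,\mub,\Scal)$, which is \eqref{eq:aopt}.

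I do not expect a real obstacle: the derivation is essentially the textbook SVM dual computation. The only two places that need care are (i) the elimination of the $\xi_q$'s, which is what upgrades the one-sided condition $\mu_q\ge 0$ to the two-sided box by forcing $\nu_q = C-\mu_q\ge 0$, and (ii) checking that the $\Scal$-dependent loss-plus-regularizer terms and the constants $-\mu_q\delta$ carry through unchanged into $\dual$. I would also point out explicitly that Proposition~\ref{prop:dual} asserts only that \eqref{eq:aopt} \emph{is} the Lagrangian dual of \eqref{eq:opt-soft} for a fixed $\Scal$; the equality of \eqref{eq:aopt} with $\primal(\Scal)$ in \eqref{eq:gdef}--\eqref{eq:gopt} is a strong-duality statement, valid when the chosen $h_{\wb}$ renders \eqref{eq:opt-soft} convex, and is invoked in that qualified form later in the paper.
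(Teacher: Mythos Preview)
Your proposal is correct and matches the paper's own proof essentially step for step: the paper also introduces multipliers $\mub\ge\bm{0}$ and $\bm{\nu}\ge\bm{0}$ for the two families of inequality constraints, eliminates the slacks by noting that the stationarity (equivalently, finiteness of the infimum in $\xi_q$) forces $\mub+\bm{\nu}=C\bm{1}$, and reads off the box $\bm{0}\le\mub\le C\bm{1}$ together with the reduced objective $\dual(\wb,\mub,\Scal)$. Your write-up is in fact more explicit than the paper's two-line appendix proof, and your closing caveat about strong duality being a separate statement is exactly the distinction the paper draws in Proposition~\ref{prop:fg}.
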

Let the inner minimization sub-problem of the above optimization problem have the solution $\wb^*(\mub,\Scal)$ for a given
$\mub$ and $\Scal$. If the corresponding outer maximization problem has the solution $\mub^* = \mub^*(\Scal)$ for a given $\Scal$,
then the above dual problem has an optimal solution at $(\wb^*(\mub^*(\Scal), \Scal), \mub^*(\Scal))$. 
To this end, given any set $\Scal$, we write the solution of this dual problem as the following set function.
\begin{align}
 \displaystyle\obj(\Scal) = \dual(\wb^*(\mub^*(\Scal),\Scal ), \mub^*(\Scal), \Scal)  \label{eq:def}
\end{align}
Subsequently, we aim to select $|\Scal|$ by solving the following optimization problem. 
\begin{align}
 \mini_{\Scal\subset \trn} f(\Scal)\qquad \text{subject to, \ } |\Scal| = k. \label{eq:fopt}
\end{align}
 
\xhdr{Relation between $f(\Scal)$ and $g(\Scal)$}
Given a fixed $\Scal$, the optimization problems~\eqref{eq:opt-soft} and~\eqref{eq:aopt} are equivalent for convex losses. However, they may not be equivalent for non-convex losses and, by weak-duality, $f(\Scal)$ would serve as a lower bound for $g(\Scal)$. This leads us to the following proposition.
\begin{proposition}
\label{prop:fg}
Given that $f(\cdot)$ and $g(\cdot)$ are defined in Eqs.~\eqref{eq:gdef} and~\eqref{eq:def} respectively,
$f(\Scal) \le g(\Scal)$ and the equality holds if the loss $(y-h_{\wb}(\xb))^2$ is convex with respect to $\wb$.  Hence, $\min_{\Scal, |\Scal|=k} f(\Scal) \le \min_{\Scal, |\Scal|=k} g(\Scal)$.
\end{proposition}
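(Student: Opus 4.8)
The plan is to prove the two inequalities separately, relying on the standard weak-duality relationship between a constrained optimization problem and its Lagrangian dual. First I would fix an arbitrary subset $\Scal$ with $|\Scal|=k$ and compare the values $\primal(\Scal)$ and $\obj(\Scal)$ directly through their defining optimization problems, namely~\eqref{eq:opt-soft} and~\eqref{eq:aopt}. Since the Lagrangian dual in~\eqref{eq:aopt} is obtained from the primal~\eqref{eq:opt-soft} by dualizing exactly the validation error constraints together with $\set{\xi_q\ge 0}$ (as noted in the footnote to Proposition~\ref{prop:dual}), weak duality immediately gives $\obj(\Scal)\le\primal(\Scal)$ for every fixed $\Scal$; this is where I would invoke Proposition~\ref{prop:dual}, which establishes that~\eqref{eq:aopt} is indeed the dual.

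Next I would argue the equality case. When the loss $(y-h_{\wb}(\xb))^2$ is convex in $\wb$, the objective of~\eqref{eq:opt-soft} is convex in $(\wb,\set{\xi_q})$, the constraints are convex (each validation-error constraint is a convex inequality once the loss is convex, and the slack constraints are linear), and a Slater-type condition holds — for any $\delta$ one can pick $\xi_q$ large enough to make the constraints strictly feasible. Hence strong duality applies for the fixed-$\Scal$ subproblem, so $\obj(\Scal)=\primal(\Scal)$. This pointwise comparison is the heart of the argument.

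Finally I would take the minimum over all feasible $\Scal$ on both sides. Since $\obj(\Scal)\le\primal(\Scal)$ holds for each individual $\Scal$ with $|\Scal|=k$, taking $\min_{|\Scal|=k}$ preserves the inequality: $\min_{|\Scal|=k}\obj(\Scal)\le\min_{|\Scal|=k}\primal(\Scal)$ — here I would spell out the elementary fact that if $a(\Scal)\le b(\Scal)$ for all $\Scal$ in a common index set, then $\inf a\le\inf b$. Combined with the equality in the convex case, this yields the final claim of the proposition.

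The main obstacle, and the only place requiring genuine care rather than bookkeeping, is justifying strong duality in the convex case — specifically verifying that a constraint qualification (Slater's condition) actually holds for the subproblem~\eqref{eq:opt-soft} with fixed $\Scal$. The soft-constraint formulation was introduced precisely to guarantee feasibility, so I expect this to go through cleanly by choosing the slacks large, but it should be stated explicitly rather than taken for granted; everything else (weak duality, monotonicity of $\min$) is routine and already implicit in the discussion preceding the proposition.
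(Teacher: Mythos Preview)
Your proposal is correct and matches the paper's approach. In fact, the paper does not supply a separate formal proof of this proposition at all: it simply states, in the paragraph immediately preceding the proposition, that for a fixed $\Scal$ the problems~\eqref{eq:opt-soft} and~\eqref{eq:aopt} are equivalent for convex losses and that otherwise weak duality gives $f(\Scal)\le g(\Scal)$. Your write-up is more careful than the paper itself---in particular your explicit verification of Slater's condition via the slack variables is a detail the paper leaves implicit.
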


\oursub{Differences with weighted sum of training and validation loss}

\xhdr{Weighted sum of training and validation loss} Instead of our model, one can consider minimizing a weighted combination of training and validation losses, as follows:
\begin{align}
 &\minz{\wb,\eta} \ \eta\sum_{i\in\Scal} \big[\lambda ||\wb||^2 +   (y_i-h_{\wb}(\xb_i))^2\big] +    (1-\eta)\ (k/|V|)\sum_{j\in V}(y_j-h_{\wb}(\xb_j))^2 \label{eq:etaw}
\end{align}
The multiplier $k/ |V|$ in the second term above ensures correct scaling w.r.t. the first term.  Now, along with  $\eta$ can be estimated in two ways.

\ehdr{$\eta$ is a hyperparameter} We can treat $\eta$ as hyperparameter and cross validate them on another validation set $V{'}$. However, due to the requirement for tuning this additional hyperparameter, this approach is extremely time consuming and therefore, is not suitable for  {efficient data selection}. 

\ehdr{$\eta$ is a trainable parameter}
In this alternative approach, we train the $\eta$ along with $\wb$. Such a setup uses no additional validation set $V'$. However, since ${\min}_{\eta\in [0,1]}  (a \eta + (1-\eta) b) = \min\set{a,b}$, the  problem~\eqref{eq:etaw} reduces to
\begin{align}
 \minz{\Scal, \wb} \ \min\bigg\{ & \sum_{i\in\Scal} \big[\lambda ||\wb||^2  
  +  (y_i-h_{\wb}(\xb_i))^2\big],  \frac{k}{|V|} \sum_{j\in V_q}(y_j-h_{\wb}(\xb_j))^2\bigg\}
\end{align}
Hence, it can latch on either minimizing only training set error \emph{or} only validation set error, which results in (i) training only on validation set  \emph{or,} (ii) selecting subset without controlling  generalization error. 
%

\xhdr{Our approach} In our work, the Lagrangian multipliers $\mub$ of the dual objective $F(\wb, \mub, \Scal)$ defined in Eq.~\eqref{eq:adef} can also be viewed as weights for validation error. However, we  neither treat them as hyperparameters, nor learn them by simply minimizing the objective as in Eq.~\eqref{eq:etaw} above. Rather, our formulation naturally casts a max-min optimization task described in Eq.~\eqref{eq:aopt}, that trains $\wb$ and $\mub$ in an adversarial manner. This also ensures that the validation error is not much higher than $\delta$. In contrast, the formulation in Eq.~\eqref{eq:etaw}    neither trains $\wb$ and $\eta$ using max-min optimization nor incorporates~$\delta$.

\oursub{Hardness analysis for our approach}

Given any fixed training subset $\Scal$, we can learn the optimal 
solution of the problem~\eqref{eq:opt-soft} using a standard optimization technique. In fact,
it can be computed in polynomial time if the loss $(y-h_{\wb}(\xb))^2$  is convex\todo{Citation?}. 
However, simultaneously determining the optimal set $\Scal^*$ and the optimal parameters $\wb^*$ for that optimal set is not possible in polynomial time, as suggested by the following proposition (proof in Appendix~\ref{app:proof-hardness} in the supplementary).
\vspace{-1ex}
\begin{proposition}\label{prop:hardness}
 Both the variants of the data selection problems~\eqref{eq:gopt} and~\eqref{eq:fopt} are NP-Hard.
 \end{proposition}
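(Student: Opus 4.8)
The plan is to exhibit a polynomial-time reduction from a known NP-hard problem to our data selection tasks~\eqref{eq:gopt} and~\eqref{eq:fopt}. A natural candidate is \textsc{Exact-$k$-Cover} or, more directly, a subset-selection problem such as \textsc{Sparse Linear Regression} / \textsc{Sparse Approximation}, which asks whether there is a subset of at most $k$ columns of a matrix whose span contains a given target vector (equivalently, achieves zero residual); this is well known to be NP-hard. Since~\eqref{eq:gopt} and~\eqref{eq:fopt} already carry a cardinality constraint $|\Scal| = k$, the cardinality-constrained structure of these problems matches the reduction target, and by Proposition~\ref{prop:fg} it suffices to establish hardness for one of them and transfer — though to get \emph{both}, I would actually pick an instance where $f$ and $g$ coincide, e.g. a linear model $h_{\wb}(\xb) = \wb^\top \xb$ so that the squared loss is convex in $\wb$ and Proposition~\ref{prop:fg} gives $f(\Scal) = g(\Scal)$ on every $\Scal$; then a single reduction settles both simultaneously.

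The key steps, in order, are: (i) fix the model class to linear regression and, if convenient, send $\lambda \to 0$ (or pick $\lambda$ small) and $C \to \infty$ (so the validation constraints are effectively hard, or simply take $Q=1$ with a vacuous $\delta$) so that $g(\Scal)$ reduces to the minimum residual sum of squares achievable by fitting a linear model on the rows indexed by $\Scal$; (ii) given an instance of the chosen NP-hard problem — say a ground set of elements and a family of sets, or a matrix $A$ and vector $b$ for sparse regression — build, in polynomial time, a training set $\{\xb_i, y_i\}_{i \in \trn}$ and a validation set $\{\xb_j, y_j\}_{j \in \dev}$ such that there is a subset $\Scal$ with $|\Scal| = k$ and $g(\Scal) = 0$ (equivalently $g(\Scal)$ below a threshold) if and only if the original instance is a YES-instance; (iii) verify the ``if'' direction (a solution to the combinatorial instance yields a zero-loss subset) and the ``only if'' direction (a low-loss subset of size $k$ decodes back to a solution), using the fact that for linear regression zero training loss on $\Scal$ means $y_i = \wb^\top\xb_i$ for all $i \in \Scal$ for some common $\wb$; (iv) conclude that solving~\eqref{eq:gopt} — and hence, via $f = g$ here, also~\eqref{eq:fopt} — would decide the NP-hard problem, so both are NP-hard.

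I expect the main obstacle to be step (ii): designing the gadget so that the \emph{constant} budget $k$, the \emph{global} sharing of a single parameter vector $\wb$ across all chosen points, and the presence of the $L_2$ regularizer and validation constraints together encode exactly the combinatorial choice, without spurious low-loss subsets sneaking in. Concretely, one must ensure that a subset achieves (near-)zero loss \emph{only} when it corresponds to a valid cover/support, which typically requires padding the feature space with indicator-like coordinates and adding ``consistency'' training points that force agreement; controlling what the regularizer does at the optimum (and arguing it can be made negligible, or absorbed) is the fiddly part. A cleaner alternative that sidesteps some of this is to reduce from a problem whose own hardness proof already produces a least-squares-with-sparsity instance, so that the translation into the notation of~\eqref{eq:opt-soft} is essentially cosmetic; in the write-up I would state the reduction source, give the explicit construction of $\Dcal$, $\dev$, the partition $\{V_q\}$, and the parameters $\lambda, C, \delta, k$, and then prove the two-way equivalence. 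The remaining parts (polynomial size of the construction, and the reduction of~\eqref{eq:fopt} once~\eqref{eq:gopt} is settled) are routine.
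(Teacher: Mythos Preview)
Your overall strategy --- specialize to linear $h_{\wb}(\xb)=\wb^\top\xb$ so that $f=g$ by Proposition~\ref{prop:fg}, then reduce from a known hard subset-selection problem --- is sound and matches the paper's. But the paper's execution is far simpler than your plan, in two respects.

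First, the paper sets $C=0$, not $C\to\infty$. With $C=0$ the slack penalty vanishes, so the validation constraints become free (choose $\xi_q$ large) and the problem collapses to
\[
\min_{\Scal\subset\trn,\,\wb}\ \sum_{i\in\Scal}\bigl[\lambda\|\wb\|^2+(y_i-\wb^\top\xb_i)^2\bigr],\quad |\Scal|=k,
\]
which is exactly the robust (least-trimmed-squares) regression problem, already known to be NP-hard. No gadget, no validation-set construction, no limiting arguments in $\lambda$ or $\delta$ are needed. Your choice of $C\to\infty$ forces you to design a validation set that is simultaneously satisfiable, which is the source of the ``fiddly part'' you anticipate; sending $C$ the other way eliminates it.

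Second, be careful with the sparse-regression reduction you sketch: the classical NP-hard \textsc{Sparse Approximation} problem selects $k$ \emph{columns} (features) whose span contains a target, whereas~\eqref{eq:gopt} selects $k$ \emph{rows} (samples) that share a common regressor $\wb$. These are not the same problem, and a column-selection gadget does not transfer directly. The correct off-the-shelf source is row selection --- robust regression / consistent estimation with adversarial outliers --- which is precisely what the paper invokes. Your closing remark about ``a problem whose own hardness proof already produces a least-squares-with-sparsity instance'' is the right instinct; just make sure the sparsity is on the sample index, not the coefficient vector.
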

\vspace{-1ex}
We will focus on minimizing $f(\Scal)$ rather than $g(\Scal)$, since that allows us to design a tractable algorithm with approximation guarantee for a wide class of modeling choices including nonlinear functions, and which works well in practice.  Moreover, since $f(\Scal)=g(\Scal)$ for convex losses, such an approximation guarantee also holds for $g(\Scal)$ in the specific case of linear regression.


\vspace{-2mm}
\section{Characterization of $f(\Scal)$}  
\vspace{-1mm}

\label{sec:method}
We next show that $f(\Scal)$
is monotone and $\alpha$-submodular and then, bound its generalized curvature, which 
would be subsequently used to design an efficient approximation algorithm for the optimization problem in Eq.~\eqref{eq:fopt}.
To help formally state the results, we begin with defining the following properties.
\begin{definition} Given a ground set $\Dcal$ and a set function $f:2^{\Dcal}\to \RR$, let us define $f(a \,|\,\Scal) = f(\Scal \cup\set{a})-f(\Scal)$. Then we have the following definitions.
\begin{enumerate}[leftmargin=*]
\item \emph{Monotonicity: }$f(\cdot)$ is monotone if $f(a \,|\,\Scal) \ge 0$ for all $\Scal\subset \Dcal$ and $a\in \Dcal\cp\Scal$.
\item \emph{$\alpha$-submodularity: }$f(\cdot)$ is $\alpha$-submodular with the submodularity parameter $\alpha >0$, if for $\Scal\subseteq \Tcal$  and $a\in \Dcal\cp\Tcal$, we have $f(a \,|\,\Scal)  \geq \alpha\, f(a \,|\,\Tcal)$~\cite{hashemi2019submodular,zhang2016submodular,el2020optimal}.
\item \emph{Generalized curvature: }Given a set $\Scal$,
    the generalized curvature of $f(\Scal)$ is defined as~\cite{NIPS2013_c1e39d91,zhang2016submodular}
    \begin{align}
        \kappa_f (\Scal) = 1-\min_{a\in \Dcal} \frac{f(a|\Scal\cp \set{a})}{ f(a|\emptyset)}.
    \end{align}
    \end{enumerate}
    \label{eq:deff}
\end{definition}
Note that, $\alpha$-submodularity is a natural extension of submodularity. An $\alpha$-submodular function $f(\Scal)$ is submodular if   $\alpha = 1$. Moreover, note that
an $\kappa_{f}(\Scal)\ge 1-1/\alpha$. For a general monotone function $f$,  $\alpha \le 1$. 

\oursub{Monotonicity of $\obj(\Scal)$}
We formalize the monotonicity of $\obj(\Scal)$, as defined in Eq.~\eqref{eq:def}, in the following proposition (proof in Appendix~\ref{app:proof-mon}).
\begin{proposition}\label{prop:mon}
For any model $h_{\wb}$, $f(\Scal)$
is monotone, {\em i.e.}, $f(\Scal\cup\set{a})-f(\Scal) \ge 0$ for all $\Scal\subset \Dcal$ and $a\in \Dcal\cp\Scal$.
\end{proposition}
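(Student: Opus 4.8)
The plan is to observe that enlarging the selected set $\Scal$ only \emph{adds} a non-negative term to the dual objective $\dual(\wb,\mub,\Scal)$ of Eq.~\eqref{eq:adef}, uniformly over every $\wb$ and every feasible $\mub$, and that the $\maxi_{\mub}\mini_{\wb}$ operator preserves such a pointwise domination. Since the feasible regions for $\wb$ and for $\mub\in[\bm 0,\,C\bm 1]$ do not depend on $\Scal$, this immediately yields $\obj(\Scal\cup\set{a})\ge\obj(\Scal)$.

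First I would record the elementwise inequality. Fix $\Scal\subset\Dcal$ and $a\in\Dcal\cp\Scal$. Reading off Eq.~\eqref{eq:adef}, for every $\wb$ and every $\mub$,
\begin{align}
\dual(\wb,\mub,\Scal\cup\set{a}) = \dual(\wb,\mub,\Scal) + \lambda\bnm{\wb}^2 + (y_a - h_{\wb}(\xb_a))^2 \ \ge\ \dual(\wb,\mub,\Scal), \nn
\end{align}
because $\lambda\ge 0$ and the squared residual $(y_a-h_{\wb}(\xb_a))^2$ is non-negative. Note that this step uses no structural property of $h_{\wb}$ at all.

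Next I would push this inequality through the two nested optimizations, in the order min-then-max. For any fixed $\mub$, since $\dual(\cdot,\mub,\Scal\cup\set{a})$ dominates $\dual(\cdot,\mub,\Scal)$ pointwise in $\wb$, we get $\mini_{\wb}\dual(\wb,\mub,\Scal\cup\set{a})\ge\mini_{\wb}\dual(\wb,\mub,\Scal)$. Let $\mub^*(\Scal)$ attain the outer maximum for the set $\Scal$, so that $\obj(\Scal)=\mini_{\wb}\dual(\wb,\mub^*(\Scal),\Scal)$ by Eq.~\eqref{eq:def}. Then
\begin{align}
\obj(\Scal\cup\set{a}) = \maxi_{\bm 0\le\mub\le C\bm 1}\ \mini_{\wb}\ \dual(\wb,\mub,\Scal\cup\set{a}) \ \ge\ \mini_{\wb}\ \dual(\wb,\mub^*(\Scal),\Scal\cup\set{a}) \ \ge\ \mini_{\wb}\ \dual(\wb,\mub^*(\Scal),\Scal) = \obj(\Scal), \nn
\end{align}
which is precisely the asserted monotonicity. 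If the inner infimum happens not to be attained, the same chain holds verbatim after replacing $\min$/$\max$ by $\inf$/$\sup$.

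There is essentially no hard step here; the only points worth stating carefully are (i) that the regularization coefficient $\lambda$ is non-negative, so that the summand contributed by $a$ is genuinely $\ge 0$; and (ii) that the domination must be applied through the composition in the right order --- first inside $\mini_{\wb}$ for each fixed $\mub$, then across $\mub$ --- so that we never need the inner minimizers $\wb^*(\mub,\Scal)$ and $\wb^*(\mub,\Scal\cup\set{a})$ to coincide (in general they do not). Convexity, smoothness, or any other regularity of $h_{\wb}$ plays no role in monotonicity; such assumptions will be needed only later, for $\alpha$-submodularity and the curvature bound.
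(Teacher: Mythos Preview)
Your proof is correct and follows essentially the same approach as the paper's: both exploit the pointwise inequality $\dual(\wb,\mub,\Scal\cup\{a\})\ge\dual(\wb,\mub,\Scal)$ and propagate it through the $\max_{\mub}\min_{\wb}$ operator via the same two inequalities (replace the outer maximizer by $\mub^*(\Scal)$, then use monotonicity of $\min_{\wb}$). The only difference is cosmetic: the paper tracks the additive term explicitly to obtain the quantitative lower bound $f(a\mid\Scal)\ge\lambda\bnm{\wb^*(\mub^*(\Scal),\Scal\cup\{a\})}^2+(y_a-h_{\wb^*(\mub^*(\Scal),\Scal\cup\{a\})}(\xb_a))^2$, which it later reuses in Lemma~\ref{lem:fbounds}.
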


\oursub{$\alpha$-submodularity of $\obj(\Scal)$}
Next, we set about to present our key results on $\alpha$-submodularity of
$f(\Scal)$ for different modeling choices of $h_\wb$. 
To this aim, we first characterize the submodularity parameter of $f(\Scal)$ for any bounded Hessian nonlinear model, 
in terms of $\lambda, C, \delta$ and some specific properties of the dataset (proof in Appendix~\ref{app:alpha-sub-key}). 
\begin{theorem}\label{thm:alpha-sub-nonlin}
Assume that $|y|\le y_{\max}$; 
$h_{\wb}(\xb)=0$ if $\wb=\bm{0}$, \ie, $h_{\wb}(\xb)$ has no bias term; 
$h_{\wb}$ is $H$-Lipschitz, \ie, $|h_{\wb}(\xb)|\le H\bnm{\wb} $;
the eigenvalues of the Hessian matrix of $(y- h_{\wb}(\xb))^2)$ have a finite upper bound, \ie,
$\displaystyle\mathrm{Eigenvalue}  (\nabla^2 _{\wb} (y- h_{\wb}(\xb))^2)$ $ \le  2\chi_{\max} ^2 $; and, 
define $ \ell^* = \min_{a\in\Dcal} {\min_{\wb}}\ \ \chi_{\max} ^2 \cdot \bnm{\wb}^2 +  (y_a-h_{\wb}(\xb_a))^2~>~0$. 
Then, for  $\lambda \ge \max\left\{ \chi_{\max} ^2,   {32(1+CQ)^2\ymx ^2 H^2}/{ \ell^*}  \right\}$,  
 $f(\Scal)$ is a $\alpha$-submodular set function, where 
 \begin{align}
    \alpha \ge \widehat{\alpha}_f = 1- \dfrac{32(1+CQ)^2\ymx ^2 H^2}{{\lambda \ell^*}},\label{eq:alphaNon}
\end{align}
\vspace{-2mm}
\end{theorem}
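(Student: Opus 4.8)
The plan is to verify the definition of $\alpha$-submodularity directly with $\alpha=\widehat{\alpha}_f$: for every $\Scal\subseteq\Tcal$ and $a\in\Dcal\setminus\Tcal$ I will show $f(a\,|\,\Scal)\ge\widehat{\alpha}_f\, f(a\,|\,\Tcal)$. By Proposition~\ref{prop:mon} both marginals are $\ge 0$, so it suffices to lower-bound $f(a\,|\,\Scal)$, upper-bound $f(a\,|\,\Tcal)$, and compare. Everything rests on one structural fact: writing $g_a(\wb)\defeq\lambda\bnm{\wb}^2+(y_a-h_{\wb}(\xb_a))^2$, the objective in \eqref{eq:adef} satisfies $\dual(\wb,\mub,\Scal\cup\{a\})=\dual(\wb,\mub,\Scal)+g_a(\wb)$, i.e.\ adding a point simply adds $g_a$ to $\dual$.

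\textbf{Lower bound.} Since $\min_{\wb}(u+v)\ge\min_{\wb}u+\min_{\wb}v$ and this is preserved under the outer $\max_{\mub}$ in \eqref{eq:def}, the structural fact gives $f(\Scal\cup\{a\})\ge f(\Scal)+\min_{\wb}g_a(\wb)$, hence $f(a\,|\,\Scal)\ge\ell_a^{*}\defeq\min_{\wb}g_a(\wb)$. I record two elementary facts about $\ell_a^{*}$: (i) because $\lambda\ge\chi^2_{\max}$, $\ell_a^{*}\ge\ell^{*}>0$ (so all the ratios below are well defined); (ii) using $h_{\bm 0}(\xb_a)=0$, $|h_{\wb}(\xb_a)|\le H\bnm{\wb}$ and $|y_a|\le y_{\max}$ we get $(y_a-h_{\wb}(\xb_a))^2\ge y_a^2-2y_{\max}H\bnm{\wb}$, so minimizing $\lambda t^2-2y_{\max}Ht+y_a^2$ over $t\ge 0$ yields $\ell_a^{*}\ge y_a^2-y_{\max}^2H^2/\lambda$.

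\textbf{Upper bound, and the crux.} Let $\mub^{\sharp}=\mub^{*}(\Tcal\cup\{a\})$ be the maximizing multiplier and $\wb^{\sharp}=\wb^{*}(\mub^{\sharp},\Tcal\cup\{a\})$ the corresponding inner minimizer, and let $\wb'=\wb^{*}(\mub^{\sharp},\Tcal)$ be the inner minimizer for the same multiplier but the set $\Tcal$. Because $\wb^{\sharp}$ minimizes $\dual(\cdot,\mub^{\sharp},\Tcal)+g_a$ while $f(\Tcal)\ge\min_{\wb}\dual(\wb,\mub^{\sharp},\Tcal)=\dual(\wb',\mub^{\sharp},\Tcal)$, a one-line cancellation gives $f(a\,|\,\Tcal)=f(\Tcal\cup\{a\})-f(\Tcal)\le g_a(\wb')$. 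The key estimate — and the step I expect to be the main obstacle — is that $\bnm{\wb'}=O(1/\lambda)$ (not the $O(1/\sqrt{\lambda})$ a crude objective-value comparison gives): from stationarity $\nabla_{\wb}\dual(\wb',\mub^{\sharp},\Tcal)=\bm 0$, i.e.\ $2|\Tcal|\lambda\wb'=-\sum(\text{squared-loss gradients})$, bounding the norm of each loss gradient by $2(H\bnm{\wb'}+y_{\max})H$ (here $\bnm{\nabla_{\wb}h_{\wb}(\xb)}\le H$ is inherited from the Lipschitz hypothesis) and noting the total weight of that sum is $\le|\Tcal|+\sum_q\mu^{\sharp}_q\le|\Tcal|+CQ$, one solves for $\bnm{\wb'}\le\tfrac{(|\Tcal|+CQ)Hy_{\max}}{|\Tcal|\lambda-(|\Tcal|+CQ)H^2}\le\tfrac{2(1+CQ)Hy_{\max}}{\lambda}=:\rho$, the last step using that the hypothesis forces $\lambda$ large (since $\ell^{*}\le y_{\max}^2$ it implies $\lambda\ge 32(1+CQ)^2H^2$, so the denominator is $\ge|\Tcal|\lambda/2$).

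\textbf{Conclusion.} Then $g_a(\wb')\le\lambda\rho^2+(y_{\max}+H\rho)^2$, and substituting $\rho$ and using $H^2\le\lambda$ gives $g_a(\wb')-\big(y_a^2-y_{\max}^2H^2/\lambda\big)\le\lambda\rho^2+2y_{\max}H\rho+H^2\rho^2+y_{\max}^2H^2/\lambda\le\tfrac{32(1+CQ)^2y_{\max}^2H^2}{\lambda}$ (the honest constant is about $13$, leaving slack for the crude steps); with fact (ii) this yields $g_a(\wb')-\ell_a^{*}\le\tfrac{32(1+CQ)^2y_{\max}^2H^2}{\lambda}$. Therefore
\[
\frac{f(a\,|\,\Scal)}{f(a\,|\,\Tcal)}\ \ge\ \frac{\ell_a^{*}}{g_a(\wb')}\ =\ 1-\frac{g_a(\wb')-\ell_a^{*}}{g_a(\wb')}\ \ge\ 1-\frac{g_a(\wb')-\ell_a^{*}}{\ell_a^{*}}\ \ge\ 1-\frac{32(1+CQ)^2y_{\max}^2H^2}{\lambda\ell^{*}}\ =\ \widehat{\alpha}_f ,
\]
using $g_a(\wb')\ge\ell_a^{*}$ for the middle inequality and $\ell_a^{*}\ge\ell^{*}$ for the last. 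The hypothesis $\lambda\ge 32(1+CQ)^2y_{\max}^2H^2/\ell^{*}$ is precisely what keeps $\widehat{\alpha}_f\ge 0$. (Note the argument never uses $\Scal\subseteq\Tcal$, so it in fact bounds $f(a\,|\,\Scal)/f(a\,|\,\Tcal)$ for arbitrary $\Scal,\Tcal$; only the stationarity-based $O(1/\lambda)$ bound on the minimizer norm is nontrivial, the rest being bookkeeping with the stated constants.)
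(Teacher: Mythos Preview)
Your argument is correct and follows the same skeleton as the paper's proof: lower-bound $f(a\,|\,\Scal)$ by $\min_{\wb}g_a(\wb)$, upper-bound $f(a\,|\,\Tcal)$ by $g_a(\wb')$ with $\wb'=\wb^{*}(\mub^{*}(\Tcal\cup\{a\}),\Tcal)$ (this is exactly Lemma~\ref{lem:fbounds}), and control the ratio via an $O(1/\lambda)$ bound on $\bnm{\wb'}$. The technical execution of the two supporting estimates, however, differs from the paper in ways worth recording.

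For the norm bound $\bnm{\wb'}\le 2(1+CQ)Hy_{\max}/\lambda$, the paper (Lemma~\ref{lem:wmax}) compares objective values $\dual(\wb',\mub,\Tcal)\le\dual(\bm 0,\mub,\Tcal)$, drops the nonnegative $h^2$ terms, and uses only the value bound $|h_{\wb}(\xb)|\le H\bnm{\wb}$. You instead read off the first-order condition $\nabla_{\wb}\dual(\wb',\mub^{\sharp},\Tcal)=\bm 0$ and bound each loss gradient; this needs $\bnm{\nabla_{\wb}h_{\wb}(\xb)}\le H$, which is the natural reading of ``$H$-Lipschitz'' but is formally stronger than the consequence the paper singles out after its ``i.e.''. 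For the ratio step, the paper Taylor-expands $g_a$ about its minimizer and invokes the Hessian hypothesis to get $g_a(\wb')\le g_a(\wmin)+(\lambda+\chi^2_{\max})\bnm{\wb'-\wmin}^2$. You bypass this entirely, upper-bounding $g_a(\wb')$ and lower-bounding $\ell_a^{*}$ directly from the Lipschitz condition; consequently your proof \emph{never uses the Hessian eigenvalue assumption} (it enters only through $\lambda\ge\chi^2_{\max}$ in relating $\ell_a^{*}$ to $\ell^{*}$), which is a genuine simplification. One cosmetic slip: you write $g_a(\wb')\le\lambda\rho^2+(y_{\max}+H\rho)^2$ but then subtract $y_a^2$ in the next line; the computation you actually carry out requires the sharper (and equally valid) bound $g_a(\wb')\le\lambda\rho^2+y_a^2+2y_{\max}H\rho+H^2\rho^2$, obtained by expanding $(y_a-h_{\wb'}(\xb_a))^2$ directly, so the displayed intermediate bound should be adjusted.
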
 
Note that as $\lambda\to \infty,$ we have $ \alpha \to 1$, which implies that for large $\lambda$, $f(\Scal)$ becomes close to submodular.

\noindent \emph{Proof sketch:} 
The proof of the above theorem consists of two steps.
In the first step, we show that $f(\Scal\cup\set{a})-f(\Scal) \ge \min_{\wb}\lambda ||\wb||^2 + (y_i- h_{\wb}(\xb_a))^2 $. 
Next, we derive that $f(\Tcal\cup\set{a})-f(\Tcal) \le \lambda\bnm{\wb^*(\mub^*(\Ta),\Tcal)}^2 +  (y_a-h_{\wb^*(\mub^*(\Ta),\Tcal)}(\xb_a))^2$. Finally, we use different
properties of $f(\cdot)$ and the data to get a lower bound on the ratio of the above two quantities.

For a linear model $h_{\wb}(\xb) = \wb^\top \xb$, we exploit additional properties of the underlying model to obtain a slightly tighter bound (Proven in Appendix~\ref{app:cor-alpha-sub-lin}).
\begin{proposition}\label{cor:alpha-sub-lin}
Given $0< y_{\min} \le |y|~\le~y_{\max}$,~$h_{\wb}(\xb)=\wb^\top \xb$,  
$\bnm{\xb}\le x_{\max}$,  we set the regularizing coefficient as $\lambda \ge\max\big\{\xm^2,   { {16}(1+CQ)^2\ymx ^2 \xm^2}/{    { y_{\min} ^2 } }.\big\}$. Then $f(\Scal)$ is a $\alpha$-submodular set function, where 
\begin{align}
   \alpha  \ge \widehat{\alpha}_f =   1 -  \dfrac{ {16}(1+CQ)^2\ymx ^2 \xm^2}{\lambda   { y_{\min} ^2 } }. \label{eq:alphaLin}
\end{align}
\end{proposition}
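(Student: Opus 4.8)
\medskip

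The plan is to specialize the two-step argument behind Theorem~\ref{thm:alpha-sub-nonlin} to the linear model $h_{\wb}(\xb)=\wb^\top\xb$, using the extra structure it affords in order to replace the dataset-dependent quantity $\ell^*$ with the cleaner lower bound $y_{\min}^2$. Concretely, recall that to establish $\alpha$-submodularity we need a lower bound on $f(a\,|\,\Scal) = f(\Scal\cup\{a\})-f(\Scal)$ and an upper bound on $f(a\,|\,\Tcal)$ for $\Scal\subseteq\Tcal$, and then control the ratio. For the linear model the Hessian of $(y-\wb^\top\xb)^2$ is $2\,\xb\xb^\top$, whose eigenvalues are bounded by $2\|\xb\|^2\le 2x_{\max}^2$; so the role of $\chi_{\max}^2$ in Theorem~\ref{thm:alpha-sub-nonlin} is played here by $x_{\max}^2$, and the Lipschitz constant $H$ becomes $x_{\max}$ as well (since $|\wb^\top\xb|\le\|\wb\|\,\|\xb\|\le x_{\max}\|\wb\|$). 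These substitutions already transform the constant $32(1+CQ)^2 y_{\max}^2 H^2$ of the nonlinear bound into $32(1+CQ)^2 y_{\max}^2 x_{\max}^2$; the remaining improvement — from $32$ to $16$ and from $\ell^*$ to $y_{\min}^2$ — comes from a sharper lower bound on the marginal gain available in the linear case.

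\medskip

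First I would reuse the lower bound from the proof of Theorem~\ref{thm:alpha-sub-nonlin}, namely $f(a\,|\,\Scal)\ge \min_{\wb}\big(x_{\max}^2\|\wb\|^2 + (y_a-\wb^\top\xb_a)^2\big)$; this follows because adding the point $a$ to $\Scal$ only adds the nonnegative term $\lambda\|\wb\|^2+(y_a-h_{\wb}(\xb_a))^2$ inside the regularized loss while the validation and slack terms can only grow, together with $\lambda\ge x_{\max}^2$. The key point for the linear model is that this minimization can be carried out more explicitly: the optimal $\wb$ for the single-point objective $x_{\max}^2\|\wb\|^2+(y_a-\wb^\top\xb_a)^2$ is $\wb = \frac{y_a}{x_{\max}^2+\|\xb_a\|^2}\xb_a$, giving minimum value $\frac{x_{\max}^2\, y_a^2}{x_{\max}^2+\|\xb_a\|^2}\ge \frac{x_{\max}^2\,y_{\min}^2}{2x_{\max}^2} = \frac{y_{\min}^2}{2}$, where I used $\|\xb_a\|\le x_{\max}$ and $|y_a|\ge y_{\min}$. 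Thus $f(a\,|\,\Scal)\ge y_{\min}^2/2$, which is the source of the factor-of-two sharpening and of the appearance of $y_{\min}^2$ in place of $\ell^*$. For the upper bound on $f(a\,|\,\Tcal)$ I would invoke verbatim the corresponding step of Theorem~\ref{thm:alpha-sub-nonlin}'s proof — $f(a\,|\,\Tcal)\le \lambda\|\wb^*(\mub^*(\Tcal\cup\{a\}),\Tcal)\|^2 + (y_a - h_{\wb^*}(\xb_a))^2$ — and then bound $\|\wb^*\|^2$ using the optimality/boundedness arguments there (the KKT stationarity of the max-min problem, the bound $|h_{\wb}(\xb)|\le H\|\wb\|$, and $|y|\le y_{\max}$), specializing $H$ to $x_{\max}$.

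\medskip

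Finally I would take the ratio: $f(a\,|\,\Tcal)/f(a\,|\,\Scal) \le \big(\text{upper bound}\big)/(y_{\min}^2/2)$, and show this is at most $1/\widehat\alpha_f$ with $\widehat\alpha_f$ as in \eqref{eq:alphaLin}; equivalently, $f(a\,|\,\Scal)\ge \widehat\alpha_f\, f(a\,|\,\Tcal)$. Rearranging, this reduces to checking that the "excess" part of the upper bound (the part beyond $f(a\,|\,\Scal)$ itself, essentially the term proportional to $32(1+CQ)^2 y_{\max}^2 x_{\max}^2/\lambda$ after the substitutions) is dominated by $\big(1-\widehat\alpha_f\big)$ times the whole, which is exactly the content of the stated lower bound on $\lambda$; the hypothesis $\lambda\ge 16(1+CQ)^2 y_{\max}^2 x_{\max}^2/y_{\min}^2$ guarantees $\widehat\alpha_f > 0$. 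The main obstacle — and the only place genuinely new work is needed relative to Theorem~\ref{thm:alpha-sub-nonlin} — is obtaining the clean, tight bound $\|\wb^*(\mub^*(\Tcal\cup\{a\}),\Tcal)\|^2 = O\!\big((1+CQ)^2 y_{\max}^2/\lambda\big)$ with the correct constant, since everything else is a direct specialization; I would derive it from the first-order optimality conditions of the inner minimization in \eqref{eq:aopt}, bounding the gradient contributions of each of the $|\Scal|$ training terms and $Q$ validation terms (each weighted by $\mu_q\le C$) and using $\lambda\ge x_{\max}^2$ to absorb the regularizer curvature, exactly mirroring the nonlinear argument but tracking constants carefully to land on $16$ rather than $32$.
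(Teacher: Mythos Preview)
Your overall strategy---specialize Theorem~\ref{thm:alpha-sub-nonlin} with $\chi_{\max}^2=H^2=x_{\max}^2$, compute the single-point minimum explicitly to replace $\ell^*$ by something in terms of $y_{\min}$, and sharpen the a~priori bound on $\|\wb^*(\mub,\Scal)\|$---is exactly the paper's. But your bookkeeping of where the factors of~$2$ come from is backwards, and if you execute precisely what you describe you will get the constant~$64$, not~$16$.

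The bound $\ell^*\ge y_{\min}^2/2$ is \emph{not} a sharpening: $\ell^*$ sits in the denominator of the $1-\alpha$ term, so replacing it by the smaller quantity $y_{\min}^2/2$ \emph{loses} a factor of~$2$. Plugging $H=x_{\max}$ and $\ell^*\ge y_{\min}^2/2$ into the constant $32$ of Theorem~\ref{thm:alpha-sub-nonlin} yields $64$. The entire gain---a factor of~$4$---must therefore come from the bound on $\|\wb^*(\mub,\Scal)\|$, and it does \emph{not} arise from ``mirroring the nonlinear argument but tracking constants carefully.'' The nonlinear proof of the $w_{\max}$ bound compares $F(\wg,\mub,\Scal)\le F(\bm{0},\mub,\Scal)$, drops the positive $h^2$ terms, and uses $|h_{\wb}(\xb)|\le H\|\wb\|$; run verbatim for the linear model with $H=x_{\max}$ this still gives $\|\wg\|\le 2(1+CQ)y_{\max}x_{\max}/\lambda$, hence $(2w_{\max})^2$ contributes $16$ and the final constant stays at~$32$ (then $64$ after the $\ell^*$ substitution).

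What the paper actually does for the linear case is abandon the Lipschitz comparison and instead write down the closed form
\[
\wg=\Bigl(\lambda|\Scal|\,I+\textstyle\sum_{i\in\Scal}\xb_i\xb_i^\top+\sum_q\tfrac{\mu_q}{|V_q|}\sum_{j\in V_q}\xb_j\xb_j^\top\Bigr)^{-1}\Bigl(\textstyle\sum_{i\in\Scal}y_i\xb_i+\sum_q\tfrac{\mu_q}{|V_q|}\sum_{j\in V_q}y_j\xb_j\Bigr),
\]
bound the norm of the right-hand vector by $(|\Scal|+CQ)x_{\max}y_{\max}$, and lower-bound the smallest eigenvalue of the matrix by $\lambda|\Scal|$. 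This gives $\|\wg\|\le (1+CQ)y_{\max}x_{\max}/\lambda$---exactly half the Lipschitz bound---so $(2w_{\max})^2$ contributes only~$4$, turning $32$ into~$8$; the subsequent $\ell^*\ge y_{\min}^2/2$ then takes $8$ to~$16$. If by ``first-order optimality conditions'' you meant solving $\nabla_{\wb}F=0$ to obtain this closed form, you are on the right track; but that is a genuinely different argument from the nonlinear one, not a careful re-tracking of its constants. (Incidentally, the target bound is $\|\wb^*\|^2=O((1+CQ)^2y_{\max}^2x_{\max}^2/\lambda^2)$, not $O((1+CQ)^2y_{\max}^2/\lambda)$ as you wrote.)
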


Subset selection for linear regression problems has been widely studied in 
literature~\cite{hashemi2019submodular,ruha}. Except for~\cite{ruha}, these approaches optimize measures associated with the covariance matrix, rather than explicitly
minimizing the training loss subject to the validation set error bound. While~\cite{ruha} also optimizes the training loss, it applies to a completely different application of learning under human assistance, which is why it does not take into account the validation set error bound. 

\oursub{Generalized curvature}
Next, we provide a unified bound on the generalized curvature ({\em c.f.}, Definition~\ref{eq:deff}) for both linear and nonlinear modeling choices of $h_{\wb}(\xb)$, as formalized in the following proposition (proven in Appendix~\ref{app:curv}).
\begin{proposition}\label{prop:curv}
Given the assumptions stated in Theorem~\ref{thm:alpha-sub-nonlin},  the generalized curvature 
$k_{f}(\Scal)$ for any set $\Scal$ satisfies $\kappa_f(\Scal) \le \widehat{\kappa}_f = 1-\dfrac{\ell^*}{(CQ+1)\ymx^2}$.
\end{proposition}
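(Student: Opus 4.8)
The plan is to unfold the definition $\kappa_f(\Scal) = 1 - \min_{a\in\Dcal} f(a\mid \Scal\cp\set{a})\,/\,f(a\mid\emptyset)$ and to lower bound every ratio occurring in the minimum. It therefore suffices to establish two estimates, valid for each $a\in\Dcal$: a lower bound $f(a\mid \Scal\cp\set{a})\ge\ell^*$ on the numerator, and an upper bound $f(a\mid\emptyset)\le(CQ+1)y_{\max}^2$ on the denominator. Granting these, every ratio is at least $\ell^*/\big((CQ+1)y_{\max}^2\big)$, hence so is the minimum over $a$, and passing to the complement yields $\kappa_f(\Scal)\le 1-\ell^*/\big((CQ+1)y_{\max}^2\big)=\widehat{\kappa}_f$.

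For the numerator I would reuse the marginal-gain inequality established inside the proof of Theorem~\ref{thm:alpha-sub-nonlin}: for any set $\Tcal$ and any $a\in\Dcal\cp\Tcal$,
\[
 f(a\mid\Tcal)\;=\;f(\Tcal\cup\set{a})-f(\Tcal)\;\ge\;\min_{\wb}\big(\lambda\bnm{\wb}^2+(y_a-h_{\wb}(\xb_a))^2\big).
\]
Since the hypotheses of Theorem~\ref{thm:alpha-sub-nonlin} force $\lambda\ge\chi_{\max}^2$, the right-hand side is at least $\min_{\wb}\big(\chi_{\max}^2\bnm{\wb}^2+(y_a-h_{\wb}(\xb_a))^2\big)\ge\ell^*$ by the definition of $\ell^*$. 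Taking $\Tcal=\Scal\cp\set{a}$ gives the desired numerator bound; taking $\Tcal=\emptyset$ moreover shows $f(a\mid\emptyset)\ge\ell^*>0$, so all the ratios are well defined.

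For the denominator, write $f(a\mid\emptyset)=f(\set{a})-f(\emptyset)$. First, $f(\emptyset)\ge 0$, since with $\Scal=\emptyset$ the dual objective $\dual(\wb,\mub,\emptyset)=\sum_{q\in[Q]}\mu_q\big[\tfrac{1}{|V_q|}\sum_{j\in V_q}(y_j-h_{\wb}(\xb_j))^2-\delta\big]$ vanishes at $\mub=\bm{0}$, so its max--min value is nonnegative. Next I would bound $f(\set{a})$ from above by evaluating the inner minimization at $\wb=\bm{0}$ and invoking the no-bias assumption $h_{\bm{0}}(\xb)=\bm{0}$: for every $\mub$ with $\bm{0}\le\mub\le C\bm{1}$,
\[
 \min_{\wb}\dual(\wb,\mub,\set{a})\;\le\;\dual(\bm{0},\mub,\set{a})\;=\;y_a^2+\sum_{q\in[Q]}\mu_q\Big[\tfrac{1}{|V_q|}\sum_{j\in V_q}y_j^2-\delta\Big]\;\le\;(1+CQ)\,y_{\max}^2,
\]
where I dropped the nonpositive terms $-\mu_q\delta$ and used $|y|\le y_{\max}$ together with $\mu_q\le C$. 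Since this bound is uniform in $\mub$, maximizing over $\mub$ gives $f(\set{a})\le(CQ+1)y_{\max}^2$, hence $f(a\mid\emptyset)\le(CQ+1)y_{\max}^2$. Combining with the numerator bound proves the proposition.

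The main obstacle is obtaining the \emph{strictly positive} lower bound $\ell^*$ on the numerator rather than the trivial bound $0$; this is exactly where the requirement $\lambda\ge\chi_{\max}^2$ (together with $\ell^*>0$) enters. The complementary ingredient --- that $\wb=\bm{0}$ is an admissible evaluation point yielding the clean constant $(CQ+1)y_{\max}^2$ --- relies on $h_{\wb}$ having no bias term, and the rest is bookkeeping.
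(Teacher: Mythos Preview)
Your proof is correct and follows the same overall decomposition as the paper: bound the numerator $f(a\mid\Scal\cp\set{a})$ below by $\ell^*$ via the marginal-gain lower bound (the paper's Lemma~\ref{lem:fbounds}), and bound the denominator $f(a\mid\emptyset)$ above by $(CQ+1)y_{\max}^2$ via evaluation at $\wb=\bm{0}$.

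The only difference is in how the denominator bound is organised. The paper expands $f(\set{a})-f(\emptyset)$ in full, then replaces $\mub^*(\emptyset)$ by $\mub^*(\set{a})$ in the subtracted term (using that $\mub^*(\emptyset)$ is the argmax for the empty-set problem), cancels the $\delta$-terms, drops a nonnegative validation term, and only then plugs in $\wb=\bm{0}$. You instead argue directly that $f(\emptyset)\ge 0$ (since $\mub=\bm{0}$ is feasible and gives value $0$) and that $f(\set{a})\le(CQ+1)y_{\max}^2$ (by the $\wb=\bm{0}$ evaluation), and subtract. Your route is shorter and avoids the argmax swap; the paper's route keeps the two terms coupled longer but lands at the same constant. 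Both are valid.
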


\vspace{-2mm}
\section{The \our algorithm }  
\vspace{-1mm}
\label{sec:algo}
In this section, we design  \our, an iterative approximation algorithm
to minimize $f(\Scal)$, by leveraging the semi-differential based approach proposed by Iyer \emph{et al.}~\cite{iyer2013fast}.
However, they only consider submodular optimization problems having access to an exact measurement of the objective. 
In contrast, \our works for $\alpha$-submodular functions and 
enjoys an approximation guarantee even when it can only access an imperfect estimate of the learned parameters.  

\vspace{-2mm}
\subsection{Outline of \our}
\vspace{-1mm}
At the very outset, \our is an iterative Majorization-Minimization
algorithm for minimizing a monotone $\alpha$-submodular function. We first develop
a modular upper bound of $f(\Scal)$. Then, at each iteration, we minimize this upper bound and
refine the estimate of the candidate set $\Scal$.

\xhdr{Modular upper bound of $f(\Scal)$}
Given an $\alpha$-submodular function $f$  and a fixed set $\widehat{\Scal}$, we can obtain 
the modular upper bound of $f(\Scal)$, as follows (see details in Appendix~\ref{app:modupper}).
\begin{lemma}\label{mod-upper-weaksubmod}
Given a fixed set $\Schat$ and an $\alpha$-submodular function $f(\Scal)$, let the modular function $\mfs$ be defined as follows:
\begin{align}
\hspace{-4mm}\mfs= &    f(\Schat) -   \sum_{i \in \Schat}  \alpha f(i | \Schat \backslash \set{i}) + \sum_{i \in \Schat \cap \Scal} \alpha f(i | \Schat \backslash \set{i}) +   \sum_{i \in \Scal \backslash \Schat} \frac{f(i | \emptyset)}{\alpha} .\hspace{-1mm}\label{eq:mdef}
\end{align}
Then, $f(\Scal) \le \mfs$  for all $\Scal\subseteq\Dcal$.
\end{lemma}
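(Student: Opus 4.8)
The plan is to establish the modular upper bound in Equation~\eqref{eq:mdef} by mimicking the classical Iyer--Jegelka--Bilmes construction for submodular functions, but carefully tracking the $\alpha$ factors wherever the submodularity inequality $f(i\,|\,\Scal)\ge \alpha f(i\,|\,\Tcal)$ is invoked for $\Scal\subseteq\Tcal$. The key idea is to write $f(\Scal)$ as a telescoping sum of marginal gains along a chain from $\Schat\cap\Scal$ up to $\Scal$, and separately account for the elements that $\Schat$ loses relative to $\Scal$, then bound each marginal gain by the appropriate "extreme" marginal (either $f(i\,|\,\emptyset)$ from above, or $f(i\,|\,\Schat\backslash\set{i})$ from below), using $\alpha$-submodularity to relate them.

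Concretely, first I would decompose the target set as $\Scal = (\Schat\cap\Scal) \cup (\Scal\backslash\Schat)$. Starting from $f(\Schat)$, I peel off the elements of $\Schat\backslash\Scal$ one at a time: for each such $i$, removing it decreases $f$ by a marginal gain $f(i\,|\,\cdot)$ evaluated at some set sandwiched between $\Schat\backslash\set{i}$ and the full $\Schat$; by monotonicity this is nonnegative, and by $\alpha$-submodularity it is at least $\alpha f(i\,|\,\Schat\backslash\set{i})$ because the conditioning set is a superset of $\Schat\backslash\set{i}$. This yields $f(\Schat\cap\Scal) \le f(\Schat) - \sum_{i\in\Schat\backslash\Scal}\alpha f(i\,|\,\Schat\backslash\set{i})$, which combined with $\sum_{i\in\Schat}\alpha f(i\,|\,\Schat\backslash\set{i}) = \sum_{i\in\Schat\cap\Scal}\alpha f(i\,|\,\Schat\backslash\set{i}) + \sum_{i\in\Schat\backslash\Scal}\alpha f(i\,|\,\Schat\backslash\set{i})$ accounts for the first three terms of $\mfs$. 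Then I add back the elements of $\Scal\backslash\Schat$ one at a time to go from $f(\Schat\cap\Scal)$ up to $f(\Scal)$: each such addition contributes a marginal $f(i\,|\,\cdot)$ where the conditioning set contains $\Schat\cap\Scal\supseteq\emptyset$, so $\alpha$-submodularity gives $f(i\,|\,\cdot)\le \frac{1}{\alpha} f(i\,|\,\emptyset)$, producing the final $\sum_{i\in\Scal\backslash\Schat}\frac{1}{\alpha}f(i\,|\,\emptyset)$ term. Summing the two chains gives $f(\Scal)\le\mfs$.

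The main obstacle I anticipate is handling the order-dependence of the telescoping sums correctly: unlike the submodular case where any permutation works, with $\alpha$-submodularity one must verify that at every step the conditioning set is on the correct side (superset vs.\ containing-the-empty-set) of the reference set so that the inequality goes the right way. For the "peeling" chain this is automatic since every intermediate set contains $\Schat\backslash\set{i}$; for the "adding" chain every intermediate set contains the empty set trivially, so in fact $f(i\,|\,\cdot)\le f(i\,|\,\emptyset)/\alpha$ holds regardless of order. A secondary point is to confirm that the bound is valid for all $\Scal\subseteq\Dcal$ including the edge cases $\Scal\subseteq\Schat$ (then $\Scal\backslash\Schat=\emptyset$ and only the peeling argument is needed) and $\Scal\supseteq\Schat$ (then $\Schat\backslash\Scal=\emptyset$). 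I would also double-check that $\alpha\le 1$, noted earlier in the excerpt, is what makes $f(i\,|\,\emptyset)/\alpha$ a genuine upper bound and $\alpha f(i\,|\,\Schat\backslash\set{i})$ a genuine lower bound, so the whole construction is consistent.
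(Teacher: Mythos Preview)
Your approach is essentially identical to the paper's: both establish the two chain inequalities $f(\Schat)-f(\Schat\cap\Scal)\ge \alpha\sum_{i\in\Schat\backslash\Scal}f(i\,|\,\Schat\backslash\{i\})$ and $f(\Scal)-f(\Schat\cap\Scal)\le \tfrac{1}{\alpha}\sum_{i\in\Scal\backslash\Schat}f(i\,|\,\emptyset)$ by telescoping from $\Schat\cap\Scal$ and then add them. One slip to correct in your peeling step: when you remove $i$ from an intermediate set $A\subseteq\Schat$, the conditioning set $A\backslash\{i\}$ is a \emph{subset} of $\Schat\backslash\{i\}$ (not a superset, and not ``between $\Schat\backslash\{i\}$ and $\Schat$''), and it is exactly this subset relation that lets $\alpha$-submodularity give $f(i\,|\,A\backslash\{i\})\ge \alpha\, f(i\,|\,\Schat\backslash\{i\})$ in the direction you need.
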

Note that when $\alpha=1$, {\em i.e.}, $f$ is submodular, the expression $\mfs$
coincides with the existing modular upper bounds for submodular functions~\cite{nemhauser1978analysis,iyer2013fast,iyer2012algorithms}. 
Given a $\Schat$, $\mfs$ is modular in $\Scal$. Therefore,
as suggested by Eq.~\eqref{eq:mdef}, in order to minimize this modular upper bound $m$ with respect to a $k$-member set $\Scal$, we need to compute the last two terms, {\em i.e.}, $\alphaf f(i | \Schat \backslash \set{i})$ for all $i\in\Schat$ and,  
$ f(i | \emptyset)/\alphaf$ for all $i\not\in\Schat$; and finally, choose the $k$ smallest elements 
based on these quantities. 

\xhdr{The iterative procedure}
We summarize \our in Algorithm~\ref{alg:selcon}. 
%
Given the current estimate of the candidate set $\Schat$, \our 
computes $\alphaf\fest(i | \Schat \backslash \set{i})$ for $i\in \Schat$ in line \fmodular and  $\fest(i | \emptyset)/\alphaf$ for $i\not\in \Schat$ in line \smodular. The algorithm next picks the $k$ smallest  values in line \srtl to minimize $m$ and update  $\Schat$. 
Note that computation of $f$ here requires
an estimate of the model parameters $\wb$ and the Lagrangian multipliers $\mub$. However, a training algorithm might only provide a noisy or imperfect estimate of these parameters. Hence, we can only compute $\fest(\bullet)$, an imperfect estimate of $f(\bullet)$. 
Appendix~\ref{app:conv-prop} presents the convergence properties of \our.

\begin{algorithm}[t]
\small
\begin{algorithmic} [1]
\REQUIRE Training data $\Dcal$, $\lambda$, $\alphaf$, initial subset $\Scal_0$ of size $k$\; initial model parameters.
\STATE $\Schat \leftarrow \Scal_0$
\FORALL {$i \in \Dcal$}
   \STATE    $(\west, \muest), \fest(\set{i})  \leftarrow \displaystyle  \mathrm{Train}(F(\wb, \mub, \{i\}))$   
   \vspace{0.3mm}
\ENDFOR
\vspace{2mm}
\FOR{$l \in [L]$}
    \STATE  $ (\west, \muest), \fest(\Schat)\leftarrow \displaystyle  \Train( \dual(\wb, \mub, \Schat))$  
    \FORALL {$i \in \Schat$}
    \vspace{0.5mm}
        \STATE  $\fest(\Schat \setminus \set{i}) \leftarrow \Train(\dual(\wb, \mub, \Schat\cp\set{i}))$
        \STATE  $m[i] \leftarrow \alphaf[\fest(\Schat ) - \fest(\Schat \setminus \set{i})] $ 
    \ENDFOR
    \STATE For all $i \notin \Schat$, set $m[i] \ =   \fest(i\, | \, \emptyset)/\alphaf$  
    \STATE Pick the $k$ smallest elements from $\set{m[i] | }_{i\in \Dcal}$ to update $\Schat$
    \STATE $\Scal^{(l)}\leftarrow \Schat$ 
\ENDFOR
\STATE Return $\Schat$, $\west, \muest$
 \caption{\our{} Algorithm}  \label{alg:selcon} 
 \end{algorithmic}
\end{algorithm}
\vspace{-2mm}
\subsection{Approximation guarantee}
\label{eq:approxguarantee}
\vspace{-1mm}
We now show that \our{}   admits a bounded approximation guarantee in the case of both perfect and imperfect estimates of the parameters $(\west, \muest)$.

\xhdr{Results with perfect parameter estimates} In the following, we present our first result on the approximation guarantee (proof in Appendix~\ref{app:a0}). 
\vspace{-1mm}
\begin{theorem}\label{thm:a0}
If the training algorithm in Algorithm~\ref{alg:selcon} (lines \ftrain, \strain, \ttrain)
provides perfect estimates of the model parameters, it obtains a set $\widehat{\Scal}$ which satisfies:
\begin{align}
    f(\widehat{\Scal}) \leq \frac{k}{\af (1 + (k-1)(1 - \kf)\af)} f(\Scal^*)
\end{align}
where $\alphaf$ and $\kappaf$ are as stated in Theorem~\ref{thm:alpha-sub-nonlin} and   Proposition~\ref{prop:curv} respectively.
\end{theorem}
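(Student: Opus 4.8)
The plan is to leverage the Majorization--Minimization structure of \our. First I would reduce the statement to analyzing a fixed point of one MM step. The modular surrogate $m_{f,\widehat{\Scal}}(\cdot)$ of Lemma~\ref{mod-upper-weaksubmod} upper-bounds $f$ everywhere and is \emph{tight} at $\widehat{\Scal}$: substituting $\Scal=\widehat{\Scal}$ in Eq.~\eqref{eq:mdef} collapses the last two sums and leaves exactly $f(\widehat{\Scal})$. Hence if $\widehat{\Scal}'$ is the $k$-subset minimizing $m_{f,\widehat{\Scal}}$ (which, under perfect parameter estimates, is precisely the ``pick the $k$ smallest values'' step of Algorithm~\ref{alg:selcon}), then $f(\widehat{\Scal}')\le m_{f,\widehat{\Scal}}(\widehat{\Scal}')\le m_{f,\widehat{\Scal}}(\widehat{\Scal})=f(\widehat{\Scal})$, so the iterates are monotone non-increasing under $f$; on a finite ground set they stabilize (with a fixed tie-breaking rule), and it suffices to prove the bound for a $\widehat{\Scal}$ that itself minimizes $m_{f,\widehat{\Scal}}$ over all $k$-subsets.

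Next I would extract the optimality inequality. From $f(\widehat{\Scal})=m_{f,\widehat{\Scal}}(\widehat{\Scal})\le m_{f,\widehat{\Scal}}(\Scal^*)$, expanding the right side via Eq.~\eqref{eq:mdef} and cancelling the common $f(\widehat{\Scal})$,
\begin{align}
\alpha\sum_{i\in\widehat{\Scal}\backslash\Scal^*}f(i\,|\,\widehat{\Scal}\backslash\set{i})\ \le\ \frac1\alpha\sum_{i\in\Scal^*\backslash\widehat{\Scal}}f(i\,|\,\emptyset)\ \le\ \frac1\alpha\sum_{i\in\Scal^*}f(i\,|\,\emptyset).\label{eq:coreineq-plan}
\end{align}
Since $\alpha$-submodularity gives $\alpha f(i\,|\,\widehat{\Scal}\backslash\set{i})\le f(i\,|\,\emptyset)$ for every $i$, adding the missing indices to the left upgrades Eq.~\eqref{eq:coreineq-plan} to $\alpha\sum_{i\in\widehat{\Scal}}f(i\,|\,\widehat{\Scal}\backslash\set{i})\le\frac1\alpha\sum_{i\in\Scal^*}f(i\,|\,\emptyset)$.

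The crux is then to relate both sides of this last inequality to $f(\widehat{\Scal})$ and $f(\Scal^*)$, respectively, using the curvature bound $\kappa_f(\cdot)\le\kf$ of Proposition~\ref{prop:curv}. For the right side I would order $\Scal^*$ so that the element $i^*$ with largest $f(i\,|\,\emptyset)$ comes first and telescope: $f(\Scal^*)\ge f(i^*\,|\,\emptyset)+\alpha(1-\kf)\sum_{i\in\Scal^*\backslash\set{i^*}}f(i\,|\,\emptyset)$, where the first marginal is exact and the remaining ones use $\alpha$-submodularity and the curvature bound; since $f(i^*\,|\,\emptyset)\ge\frac1k\sum_{i\in\Scal^*}f(i\,|\,\emptyset)$, this rearranges to $\sum_{i\in\Scal^*}f(i\,|\,\emptyset)\le\frac{k}{1+(k-1)\alpha(1-\kf)}\,f(\Scal^*)$, which is the source of both the $k$ in the numerator and the $(k-1)$ in the denominator of the claimed factor. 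For the left side, a similar telescoping of $f(\widehat{\Scal})=\sum_j f(b_j\,|\,\set{b_1,\dots,b_{j-1}})$, together with $\alpha$-submodularity and the curvature bound, lower-bounds $\sum_{i\in\widehat{\Scal}}f(i\,|\,\widehat{\Scal}\backslash\set{i})$ by a multiple of $f(\widehat{\Scal})$. Plugging both estimates into the upgraded inequality, noting the resulting factor is monotone in $\alpha$ so that $\alpha$ may be replaced by its certified value $\af$ from Theorem~\ref{thm:alpha-sub-nonlin}, and simplifying produces $f(\widehat{\Scal})\le\frac{k}{\af(1+(k-1)(1-\kf)\af)}f(\Scal^*)$.

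The main obstacle is the constant-chasing in this last step: getting the lower bound on $\sum_{i\in\widehat{\Scal}}f(i\,|\,\widehat{\Scal}\backslash\set{i})$ in terms of $f(\widehat{\Scal})$ tight enough that exactly one factor of $\af$ survives in the numerator, and running the ``maximum exceeds the average of $k$ terms'' argument on $\Scal^*$ so that the denominator $1+(k-1)(1-\kf)\af$ --- and not a weaker, $k$-free or linear-in-$k$, bound --- emerges. A minor point to dispose of is the role of $f(\emptyset)$, which is nonnegative by monotonicity (Proposition~\ref{prop:mon}) and is either normalized to zero or carried consistently through both telescopings. Everything else --- the monotone-descent argument, tightness of the surrogate, and the final algebra --- is routine.
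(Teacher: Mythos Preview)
Your fixed-point reduction and the right-hand bound $\sum_{i\in\Scal^*}f(i\,|\,\emptyset)\le\frac{k}{1+(k-1)(1-\kf)\af}\,f(\Scal^*)$ are both fine (the paper derives the same inequality by a different averaging trick), but the left-hand estimate in your final step is a genuine gap. After step~4 you have
\[
\af\sum_{i\in\widehat{\Scal}}f(i\,|\,\widehat{\Scal}\backslash\set{i})\ \le\ \frac{1}{\af}\sum_{i\in\Scal^*}f(i\,|\,\emptyset),
\]
and to land on the claimed factor you would need $\sum_{i\in\widehat{\Scal}}f(i\,|\,\widehat{\Scal}\backslash\set{i})\ge f(\widehat{\Scal})$. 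That inequality is false already for submodular functions: take $f(\Scal)=\min(|\Scal|,1)$, where every marginal $f(i\,|\,\widehat{\Scal}\backslash\set{i})$ vanishes for $|\widehat{\Scal}|\ge 2$. The only general lower bound available is the curvature-based one, $\sum_i f(i\,|\,\widehat{\Scal}\backslash\set{i})\ge(1-\kf)\sum_i f(i\,|\,\emptyset)\ge(1-\kf)\af\,f(\widehat{\Scal})$, which when substituted back gives a ratio worse than the target by a factor of $\frac{1}{\af^2(1-\kf)}$. Telescoping and ``constant-chasing'' will not close this; the quantity $\sum_i f(i\,|\,\widehat{\Scal}\backslash\set{i})$ is simply the wrong object to lower-bound by $f(\widehat{\Scal})$.

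The paper sidesteps this entirely by \emph{not} analyzing the fixed point of the surrogate $m_{f,\widehat{\Scal}}$ from Lemma~\ref{mod-upper-weaksubmod}. Instead it works with the simpler modular function $m^f(\Scal)=\frac{1}{\af}\sum_{i\in\Scal}f(i\,|\,\emptyset)$ and proves the two-sided sandwich $f(\Scal)\le m^f(\Scal)\le\beta_f\,f(\Scal)$ with $\beta_f=\frac{k}{\af(1+(k-1)(1-\kf)\af)}$: the lower side is a single application of $\alpha$-submodularity to the telescoping sum (one factor of $\af$), and the upper side is exactly your right-hand bound. The approximation then follows from $f(\widehat{\Scal})\le m^f(\widehat{\Scal})\le m^f(\Scal^*)\le\beta_f f(\Scal^*)$ for the minimizer of $m^f$, together with the observation that subsequent MM iterations only decrease $f$. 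If you want to repair your route, replace the left side of your core inequality by $\sum_{i\in\widehat{\Scal}}f(i\,|\,\emptyset)$ rather than $\sum_{i\in\widehat{\Scal}}f(i\,|\,\widehat{\Scal}\backslash\set{i})$; that object \emph{does} dominate $\af f(\widehat{\Scal})$ with the right constant.
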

\todo{Add exact expression for the bound (approximately even) with the data and add a discussion on it.}
\xhdr{Results with imperfect parameter estimates}
 Data-driven training algorithms may not provide the optimal value of  model parameters, even if the underlying loss function is convex. Therefore, in practice, \our can only access an imperfect estimate of $\widehat{\wb}, \widehat{\mub}$ in lines \ftrain, \strain and \ttrain. Submodular and weakly submodular optimization in the presence of imperfect estimates has been widely studied in literature~\cite{Qian2017a,el2020optimal,Hassidim2018,Hassidim2017,Horel2016,Singla2016}. However, to the best of our knowledge, they do not tackle the problem of cardinality-constrained minimization of an $\alpha$-submodular function.
In this context, a notable contribution of our work is that, \our also enjoys a relaxed approximation guarantee in these cases, which renders it practically useful. We formally state the result as follows (proven in Appendix~\ref{app:a1}).
\vspace{-1mm}
\begin{theorem}\label{thm:a1}
If the training algorithm (lines \ftrain, \strain, \ttrain) in Algorithm~\ref{alg:selcon} 
provides imperfect estimates, so that
$$\bnm{F(\west,\muest,\Scal)- F(\wb^*(\mub^*(\Scal),\Scal),\mub^*(\Scal),\Scal)} \le \epsilon\quad \text{for any}\  \Scal,$$
then Algorithm~\ref{alg:selcon} obtains a set $\widehat{\Scal}$ that satisfies:
\begin{align}
    f(\widehat{\Scal}) \leq \left(\dfrac{k}{\af(1 + (k-1)(1 - \kf)\af)} +  \frac{2k\epsilon}{\ell}\right) f(S^*),\nn
\end{align}
where $\ell= \min_{i\in\Dcal}\min_{\wb} \lambda||\wb||^2 + (y_i-h_{\wb}(\xb_i))^2 $, $\alphaf$ and  $\kappaf$ are obtained in Theorem~\ref{thm:alpha-sub-nonlin} and Proposition~\ref{prop:curv}, respectively.
\end{theorem}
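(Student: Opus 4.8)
The plan is to re-run the argument behind Theorem~\ref{thm:a0} with every evaluation of $f$ replaced by the estimate $\widehat f$ that \our actually computes, and then absorb the accumulated error. I would first record two facts. Writing $\widehat f(\Scal)=F(\widehat\wb,\widehat\mub,\Scal)$ for the quantity used inside Algorithm~\ref{alg:selcon} and $f(\Scal)=F(\wb^*(\mub^*(\Scal),\Scal),\mub^*(\Scal),\Scal)$ as in Eq.~\eqref{eq:def}, the hypothesis is exactly $|\widehat f(\Scal)-f(\Scal)|\le\epsilon$ for every $\Scal$. Moreover $f$ is nonnegative, since $f(\Scal)=\max_{\bm{0}\le\mub\le C\bm{1}}\min_{\wb}F(\wb,\mub,\Scal)\ge\min_{\wb}F(\wb,\bm{0},\Scal)=\min_{\wb}\sum_{i\in\Scal}\big[\lambda\bnm{\wb}^2+(y_i-h_{\wb}(\xb_i))^2\big]\ge0$; combining this with the marginal lower bound $f(a\,|\,\Scal)\ge\ell$ already established in the proof of Theorem~\ref{thm:alpha-sub-nonlin} (and used for Proposition~\ref{prop:mon}), for any $a\in\Scal^*$ we get $f(\Scal^*)=f(\Scal^*\setminus\{a\})+f(a\,|\,\Scal^*\setminus\{a\})\ge\ell$. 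This last inequality is what turns an additive $O(k\epsilon)$ error into a multiplicative term of the form $\tfrac{2k\epsilon}{\ell}f(\Scal^*)$.

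Next I would push Lemma~\ref{mod-upper-weaksubmod} through the estimates. Let $m_{\widehat\Scal}$ be the modular function of Eq.~\eqref{eq:mdef} formed from the true $f$, and $\widehat m_{\widehat\Scal}$ the one \our actually assembles from $\widehat f$. Each summand of Eq.~\eqref{eq:mdef} is either a value of $f$ or a marginal $f(i\,|\,\cdot)=f(\cdot\cup\{i\})-f(\cdot)$, hence perturbed by at most $2\epsilon$, weighted by a factor that is either $\af\le1$ or $1/\af$; since the two $\Scal$-dependent sums each range over at most $k$ indices, a term-by-term accounting gives $|m_{\widehat\Scal}(\Scal)-\widehat m_{\widehat\Scal}(\Scal)|\le c_0\,k\epsilon$ for all $\Scal$, with $c_0$ an absolute constant. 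With Lemma~\ref{mod-upper-weaksubmod} this yields an approximate majorization $f(\Scal)\le\widehat m_{\widehat\Scal}(\Scal)+c_0 k\epsilon$, and, by construction (the algorithm selects the $k$ smallest entries $m[i]$), the set output at any iteration minimizes the current $\widehat m$ over all $k$-subsets.

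Now let $\widehat\Scal$ be the returned set and $\widehat\Scal_{\mathrm{p}}$ the set entering the final iteration. Chaining the ingredients,
\begin{align}
f(\widehat\Scal)\ \le\ m_{\widehat\Scal_{\mathrm{p}}}(\widehat\Scal)\ \le\ \widehat m_{\widehat\Scal_{\mathrm{p}}}(\widehat\Scal)+c_0 k\epsilon\ \le\ \widehat m_{\widehat\Scal_{\mathrm{p}}}(\Scal^*)+c_0 k\epsilon\ \le\ m_{\widehat\Scal_{\mathrm{p}}}(\Scal^*)+2c_0 k\epsilon,\nn
\end{align}
where the first step is Lemma~\ref{mod-upper-weaksubmod}, the second and fourth are the perturbation bound, and the third is optimality of $\widehat\Scal$ for $\widehat m_{\widehat\Scal_{\mathrm{p}}}$. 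It remains to bound $m_{\widehat\Scal_{\mathrm{p}}}(\Scal^*)$ by $f(\Scal^*)$; for this I would isolate from the proof of Theorem~\ref{thm:a0} the purely combinatorial step establishing
\begin{align}
m_{\widehat\Scal}(\Scal^*)\ \le\ \frac{k}{\af\big(1+(k-1)(1-\kf)\af\big)}\,f(\Scal^*)\nn
\end{align}
for an \emph{arbitrary} base set $\widehat\Scal$, via $\af$-submodularity, the curvature estimate $\kf$ of Proposition~\ref{prop:curv}, and $|\Scal^*|=k$. Substituting, and using $f(\Scal^*)\ge\ell$ to write $2c_0 k\epsilon\le\tfrac{2c_0 k\epsilon}{\ell}f(\Scal^*)$, a sharper tracking of the perturbation constants (exploiting $\af\le1$ and which summands carry $1/\af$) replaces $2c_0$ by the stated constant and yields $f(\widehat\Scal)\le\big(\tfrac{k}{\af(1+(k-1)(1-\kf)\af)}+\tfrac{2k\epsilon}{\ell}\big)f(\Scal^*)$.

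The step I expect to be the main obstacle is the one just flagged: verifying that the bound on the modular surrogate at $\Scal^*$ is genuinely a one-step inequality valid for any base set, so that nothing in the argument relies on the majorization--minimization iterates having converged --- which cannot be assumed once $f$ is replaced by a noisy $\widehat f$. A close second is pinning the perturbation constant precisely: one must track which terms of Eq.~\eqref{eq:mdef} are scaled by $\af$ and which by $1/\af$ and combine $\af\le1$ with $f(\Scal^*)\ge\ell$ to land on exactly $\tfrac{2k\epsilon}{\ell}$. The nonnegativity of $f$ and the bound $f(\Scal^*)\ge\ell$ from the first paragraph, though routine, must be stated explicitly since they do real work.
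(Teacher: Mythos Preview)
The obstacle you flag is real and is precisely where your route stalls. You need $m_{\widehat\Scal}(\Scal^*)\le \tfrac{k}{\af(1+(k-1)(1-\kf)\af)}\,f(\Scal^*)$ for an \emph{arbitrary} base set $\widehat\Scal$, but no such inequality is established in the proof of Theorem~\ref{thm:a0}, and it fails in general: the bound of Lemma~\ref{mod-upper-weaksubmod} is tight at its base, $m_{\widehat\Scal}(\widehat\Scal)=f(\widehat\Scal)$, so when $\widehat\Scal\cap\Scal^*=\emptyset$ one has $m_{\widehat\Scal}(\Scal^*)=f(\widehat\Scal)-\af\sum_{i\in\widehat\Scal}f(i\,|\,\widehat\Scal\setminus\{i\})+\tfrac{1}{\af}\sum_{i\in\Scal^*}f(i\,|\,\emptyset)$, and the first two terms can be made arbitrarily large relative to $f(\Scal^*)$ by taking $f(\widehat\Scal)$ large and $\af$ small. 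Since the descent property of Appendix~\ref{app:conv-prop} is explicitly noted there to break under noisy evaluations, you cannot recover via ``later iterations only improve'' either.

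The paper sidesteps this by never invoking the general $m_{\widehat\Scal}$ in the noisy analysis. It works only with the base-set-free surrogate $m^f(\Scal)=\tfrac{1}{\af}\sum_{j\in\Scal}f(j)$, for which the sandwich $f(\Scal)\le m^f(\Scal)\le \tfrac{k}{\af(1+(k-1)(1-\kf)\af)}f(\Scal)$ is exactly what the proof of Theorem~\ref{thm:a0} establishes via $\alpha$-submodularity and curvature. The perturbation argument then collapses to singletons:
\[
f(\widehat\Scal)\ \le\ \tfrac{1}{\af}\sum_{j\in\widehat\Scal}f(j)\ \le\ \tfrac{1}{\af}\sum_{j\in\widehat\Scal}\big[\widehat f(j)+\epsilon\big]\ \le\ \tfrac{1}{\af}\sum_{j\in\Scal^*}\big[\widehat f(j)+\epsilon\big]\ \le\ \tfrac{1}{\af}\sum_{j\in\Scal^*}\big[f(j)+2\epsilon\big],
\]
using only $|\widehat f(j)-f(j)|\le\epsilon$ and optimality of $\widehat\Scal$ for $\sum_j\widehat f(j)$. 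There is no constant $c_0$ to track: exactly $k$ singleton evaluations enter, each perturbed by $\epsilon$, all carrying the single factor $1/\af$. Your first paragraph---nonnegativity of $f$ and $f(\Scal^*)\ge\ell$---is correct and is how the paper converts the resulting additive slack into the stated multiplicative term.
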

\vspace{-1mm}

\xhdr{Discussion on the approximation ratio} A trite calculation shows that, for the regime of $\lambda$ defined in Theorem~\ref{thm:alpha-sub-nonlin} and a small value of $\epsilon$, the approximation ratio of \our is $ O(y_{\max} ^4 / y_{\min}^4)$. While such a ratio may appear to be  conservative, there are several applications such as house price prediction or stock prediction, where $y_{\max} / y_{\min}$ may not be too high. Apart from that, one can always pre-process the dataset by adding an offset to $y$ and augmenting a constant in the feature $\xb$, to control this ratio, as illustrated  in Appendix~\ref{app:real}.
\todo{A reviewer might be extremely tempted to ask if we have performed even a single experiment in which we pre-process the dataset by adding an offset to $y$ and augmenting a constant in the feature $\xb$, to control the approximation ratio.}
Moreover, since our approximation ratio holds for any monotone $\alpha$-submodular function with bounded curvature,
it can be of independent technical interest.


\vspace{-3mm}
\section{Experiments}
\vspace{-1mm}
\label{sec:real}
 \newcommand{\craig}{CRAIG\xspace}
\newcommand{\glister}{GLISTER\xspace}
\newcommand{\ccr}{Community-and-crime\xspace}
\vspace{-1mm}

In this section, we present experimental results and analysis on several real-world datasets to evaluate the performance of \our
against several competitive baselines. Thereafter, we show that our framework is also practically useful in a fair regression setup, where the validation loss bounds are used to ensure that the error for each protected group is below an acceptable level of threshold. Appendix~\ref{app:real} contains additional experiments.
\vspace{-1mm}
\subsection{Experimental setup}
\vspace{-0.5mm}
\label{sec:exp-setup}
\xhdr{Datasets} We experiment with five real world datasets, {\em viz.}, Cadata (16718 instances), Law~(20800 instances),
NYSE-High (701348 instances),
NYSE-Close (701348 instances),
 and \ccr (1994 instances), all briefly described in Appendix~\ref{app:expsetup}.

\xhdr{Baselines} We compare \our \xspace against seven baselines. 
(1)~\emph{\Full:} It uses full data for training without any validation error constraint.
(2)~\emph{\Fullc:} It uses full data for training, subject to the same validation error constraints 
used in \our.
(3)~\emph{\Random:} It samples a training subset uniformly at random, but it does not employ any constraint on validation set.
(4)~\emph{\Randomc:} It is the same as \Random, except that it uses the constraints on validation set errors. 
(5)~\emph{CRAIG~\cite{mirzasoleiman2019coresets}:} This is a coreset based data selection method, that however, does not use any constraint on the validation set.
(6)~\emph{GLISTER~\cite{killamsetty2020glister}:} This is a data selection method that uses validation set to fine tune the trained model, which however, does not pose any explicit constraint on the validation set error.
(7)~\emph{\our-without-constraint:} Here, we solve the optimization problem~\eqref{eq:opt-soft}, without the validation error constraints.

\xhdr{Implementation details} In Algorithm~\ref{alg:selcon}, if we set the number of epochs for  $\Train(\;)$ in line~\strain to $T$,  this training routine runs for $N = LT$ epochs, where $L$ is the number of iterations of the for-loop (lines \sloop--\eloop).
To make a fair comparison, we used the same number of epochs $N$ and the same batch size $b$ across all baselines and \our for training the underlying model. 
Specifically, we set $N=2000$ for Cadata and  Law , $N=5000$ for the NYSE datasets; and, $b=\min\set{|\Scal|,1000}$ across all datasets.
Additionally, \our involves two more sets of  small scale optimization problems (lines \ftrain and  \ttrain respectively), where we set the number of epochs as $3$. Moreover for the optimization of $f(\Scal\backslash \set{i})$ in line \ttrain, we use the same batch size $b=\min\set{|\Scal|,1000}$ as stated earlier.  
In each experiment, we used (random) 89\% 
training, 1\% validation and 10\% test folds. We employed pytorch with the adam optimizer for all experiments.  Further details about the implementation are provided in Appendix~\ref{app:expsetup}.
\begin{figure*}[t]
\centering
{ \includegraphics[width=0.9\textwidth]{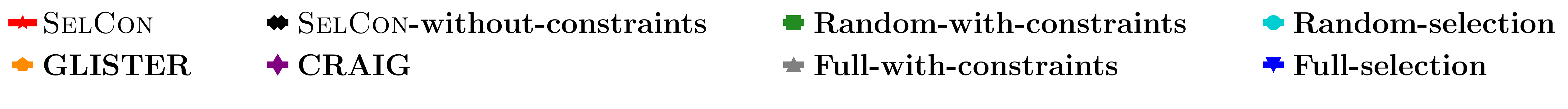}}\\ \vspace*{-5mm}
\hspace*{-0.7cm}
\subfloat{ \includegraphics[trim={0 40 0 0}, clip,width=0.20\textwidth]{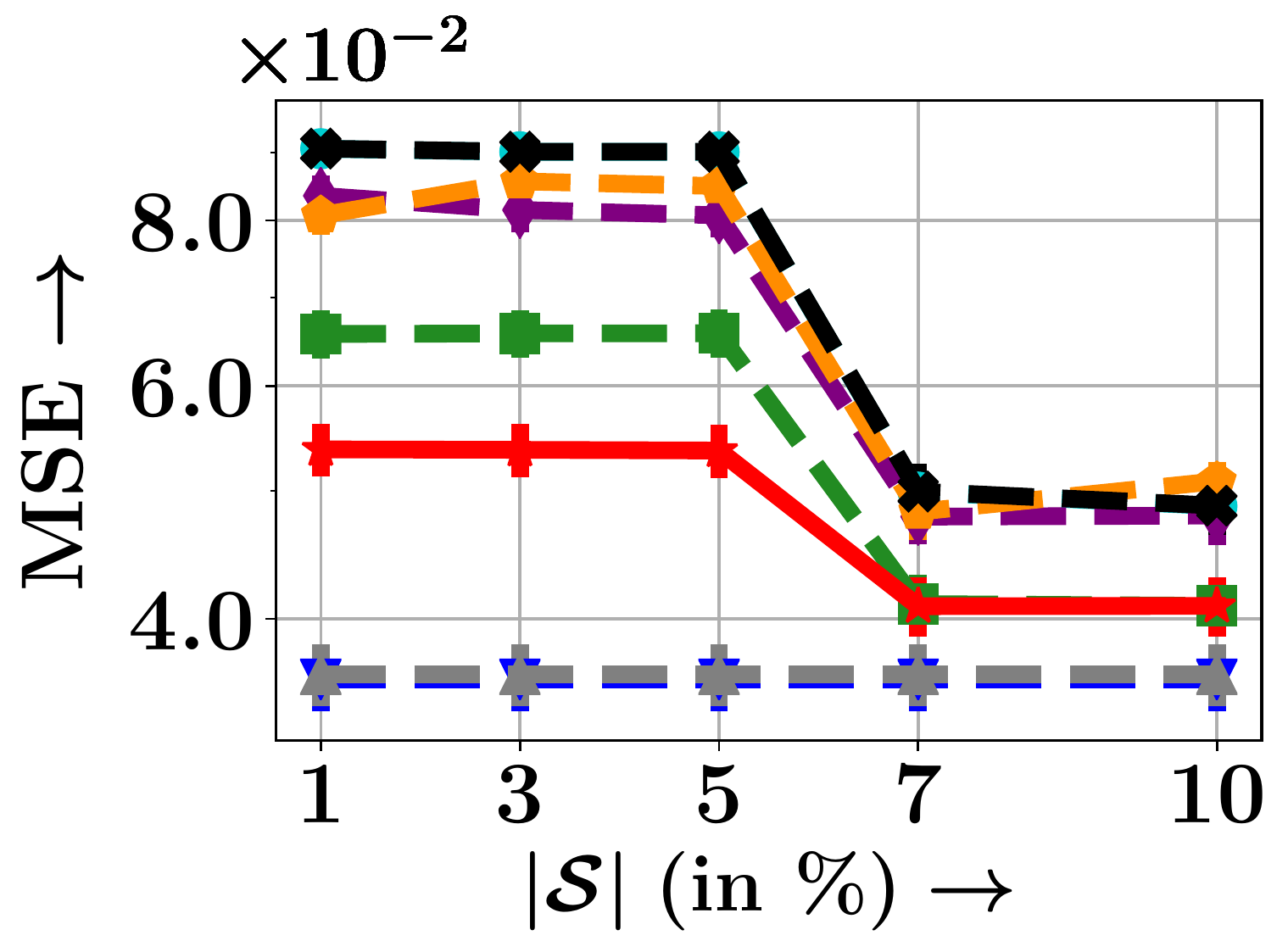}}\hspace{2mm}
\hspace*{0.2cm}\subfloat{\includegraphics[trim={0 40 0 0}, clip,width=0.20\textwidth]{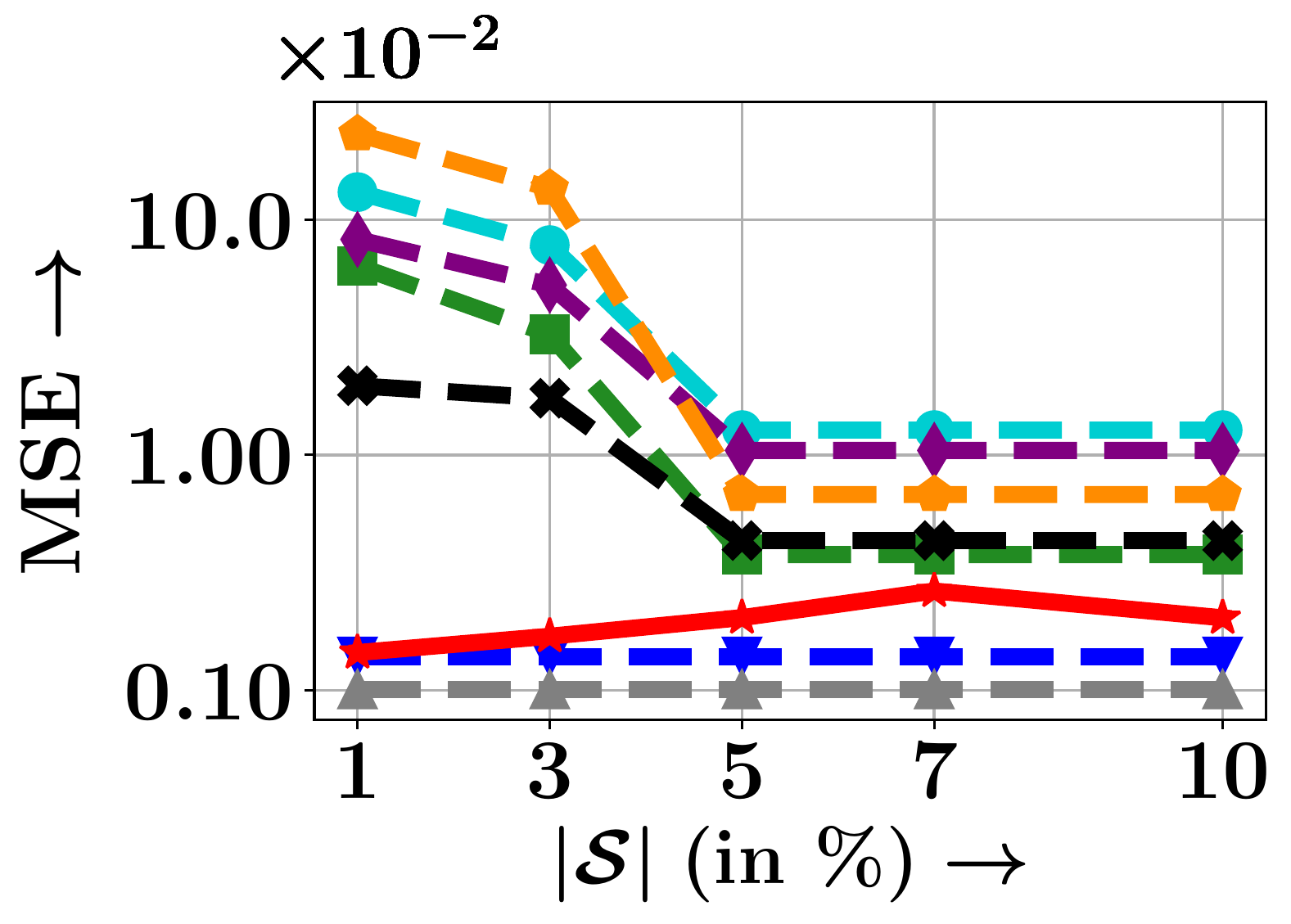}}\hspace{2mm}
\hspace*{0.2cm}\subfloat{ \includegraphics[trim={0 40 0 0}, clip,width=0.20\textwidth]{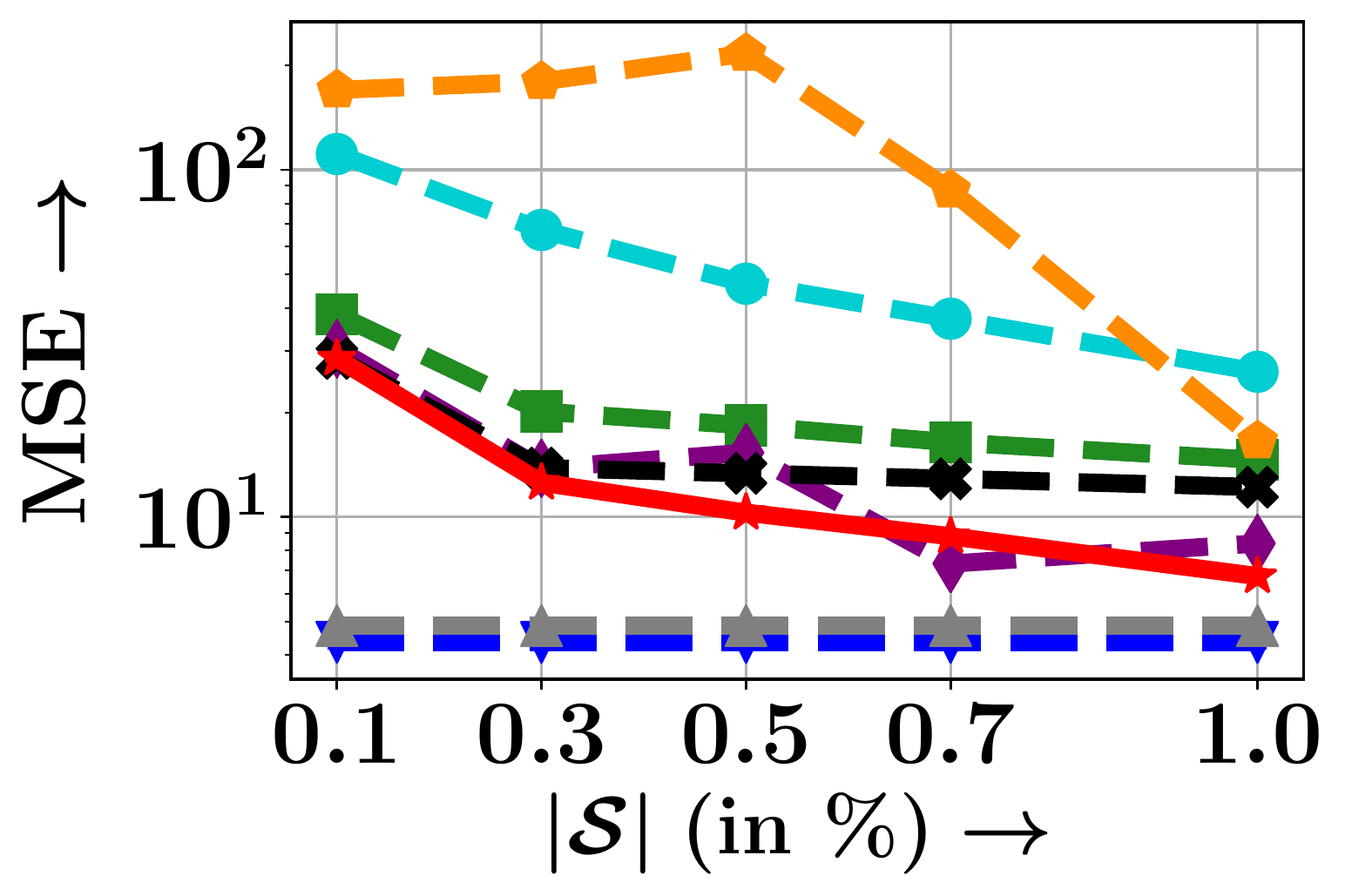}}\hspace{2mm}
\hspace*{0.2cm}\subfloat{ \includegraphics[trim={0 40 0 0}, clip,width=0.20\textwidth]{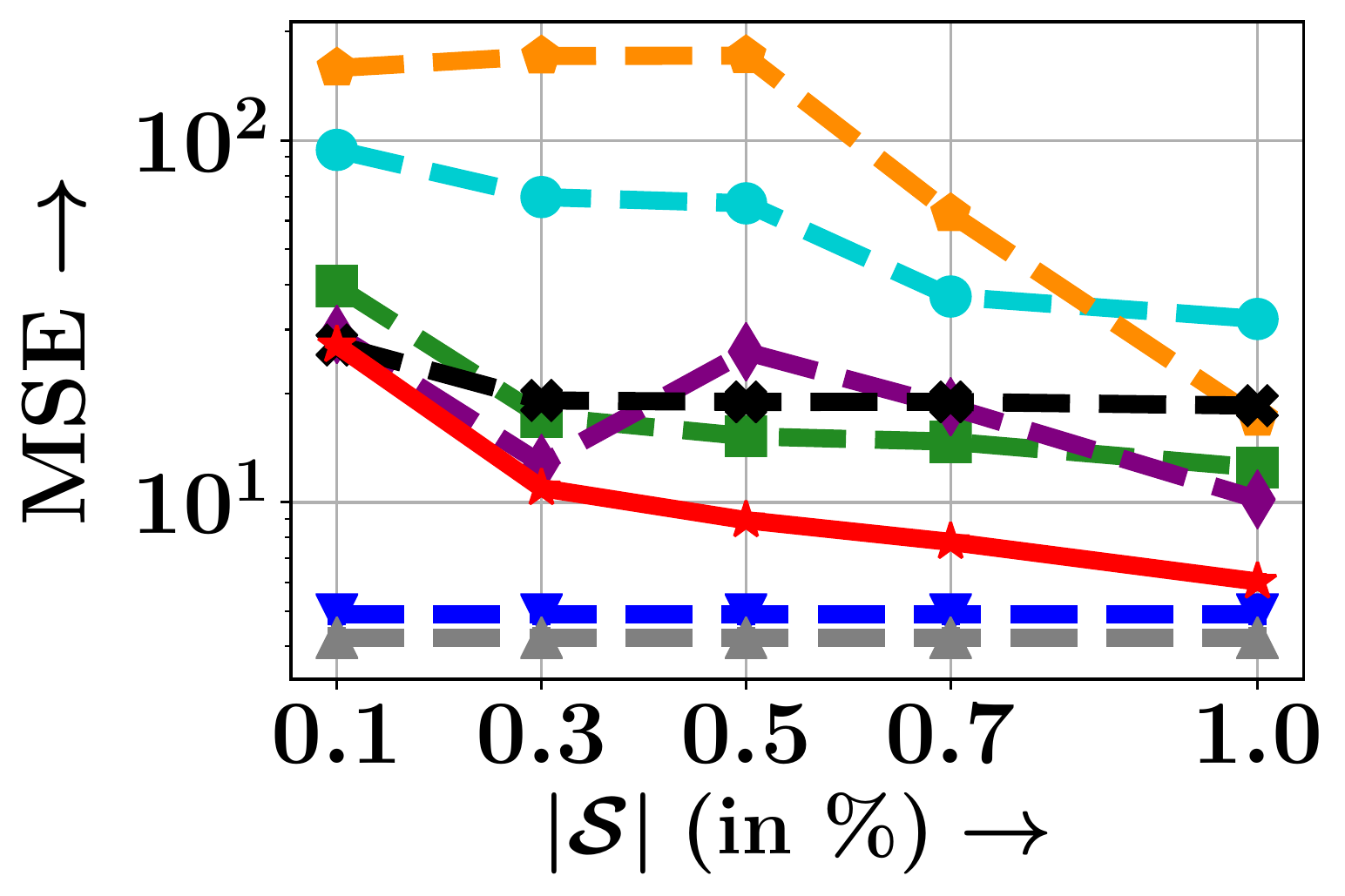}}\\
\hspace*{-0.1cm}\hspace*{-0.6cm}\subfloat[Cadata]{\setcounter{subfigure}{1} \includegraphics[ width=0.20\textwidth]{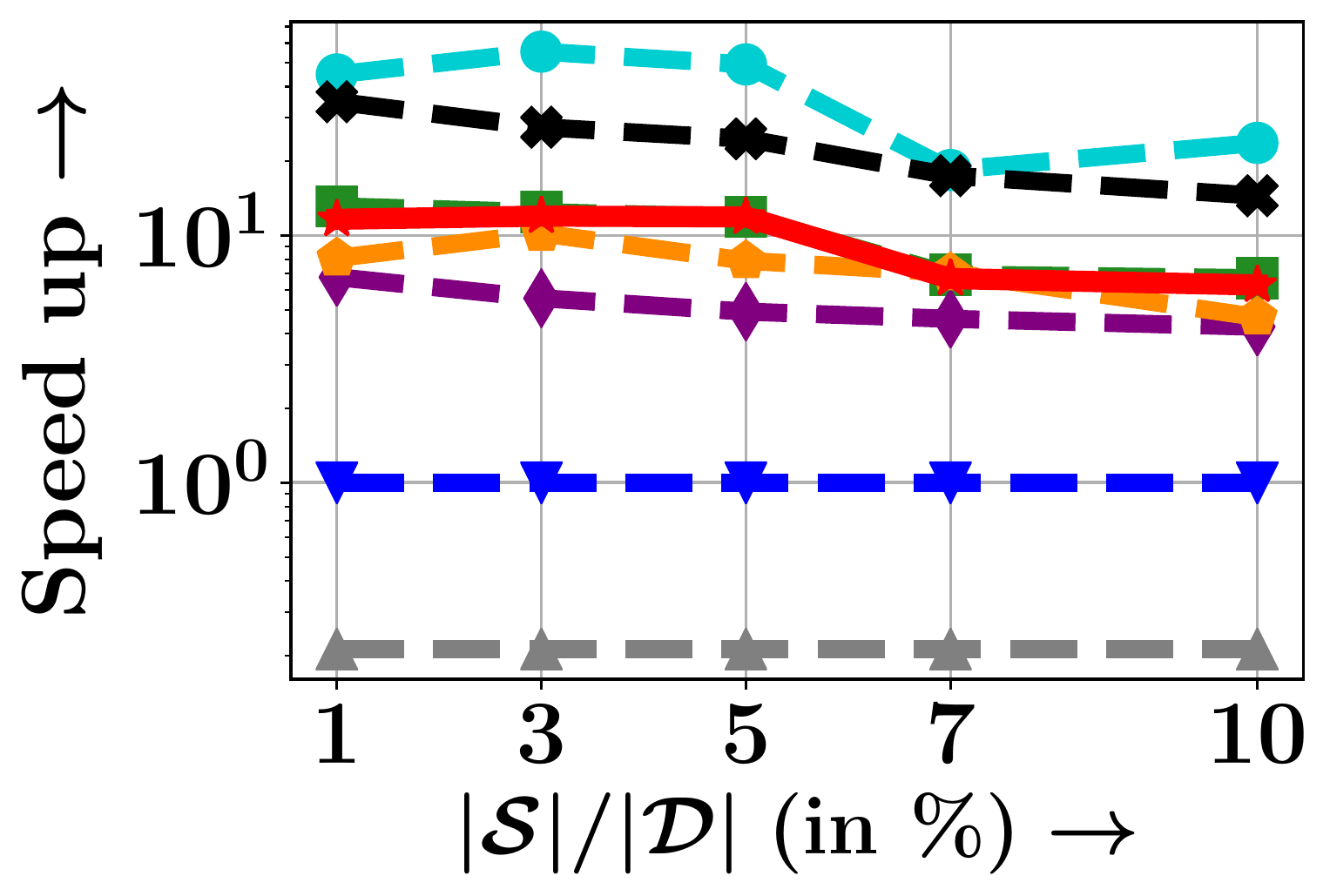}}\hspace{2mm}
\hspace*{0.2cm}\subfloat[Law]{\includegraphics[width=0.20\textwidth]{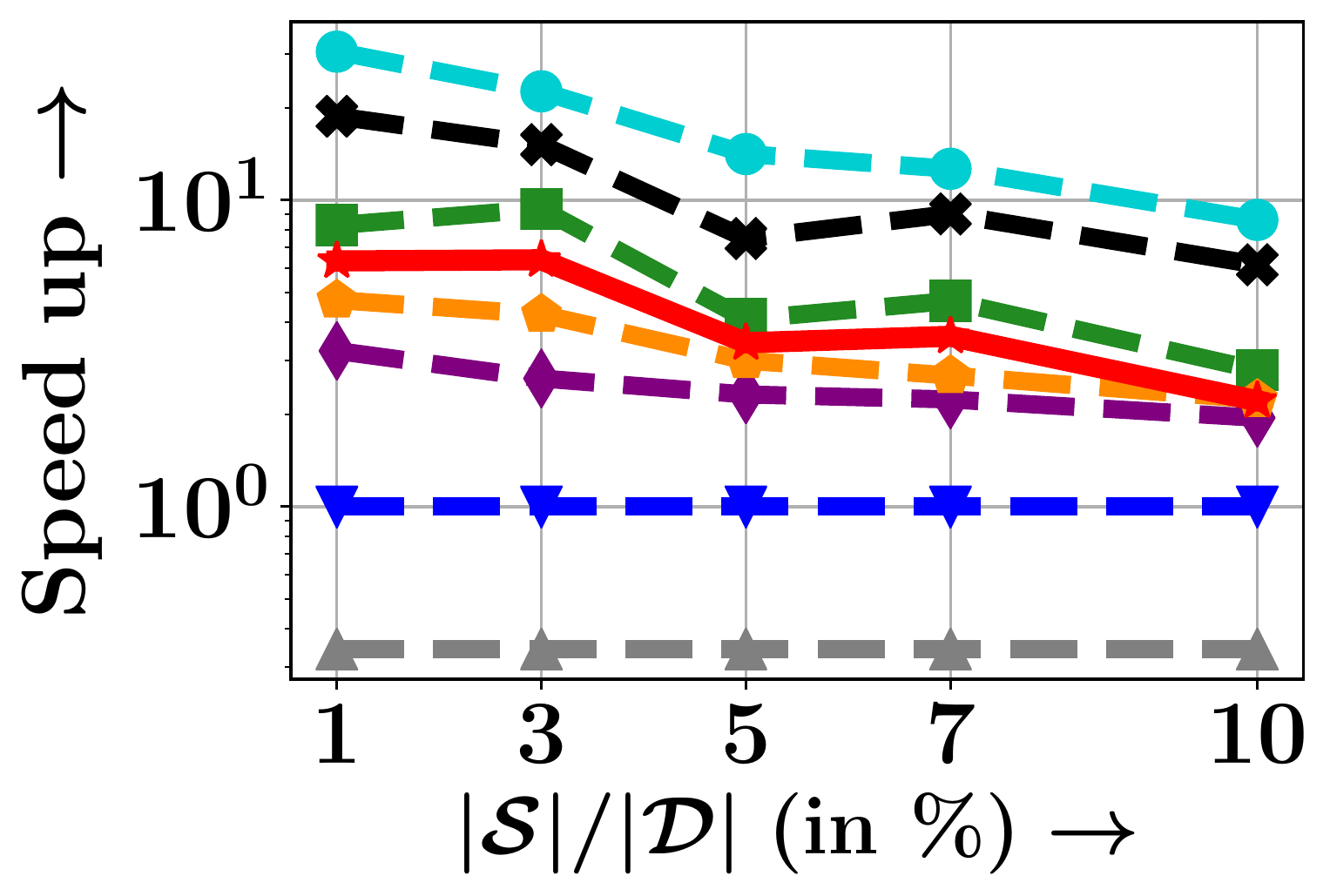}}\hspace{2mm}
\hspace*{0.2cm}\subfloat[NYSE-High]{ \includegraphics[width=0.20\textwidth]{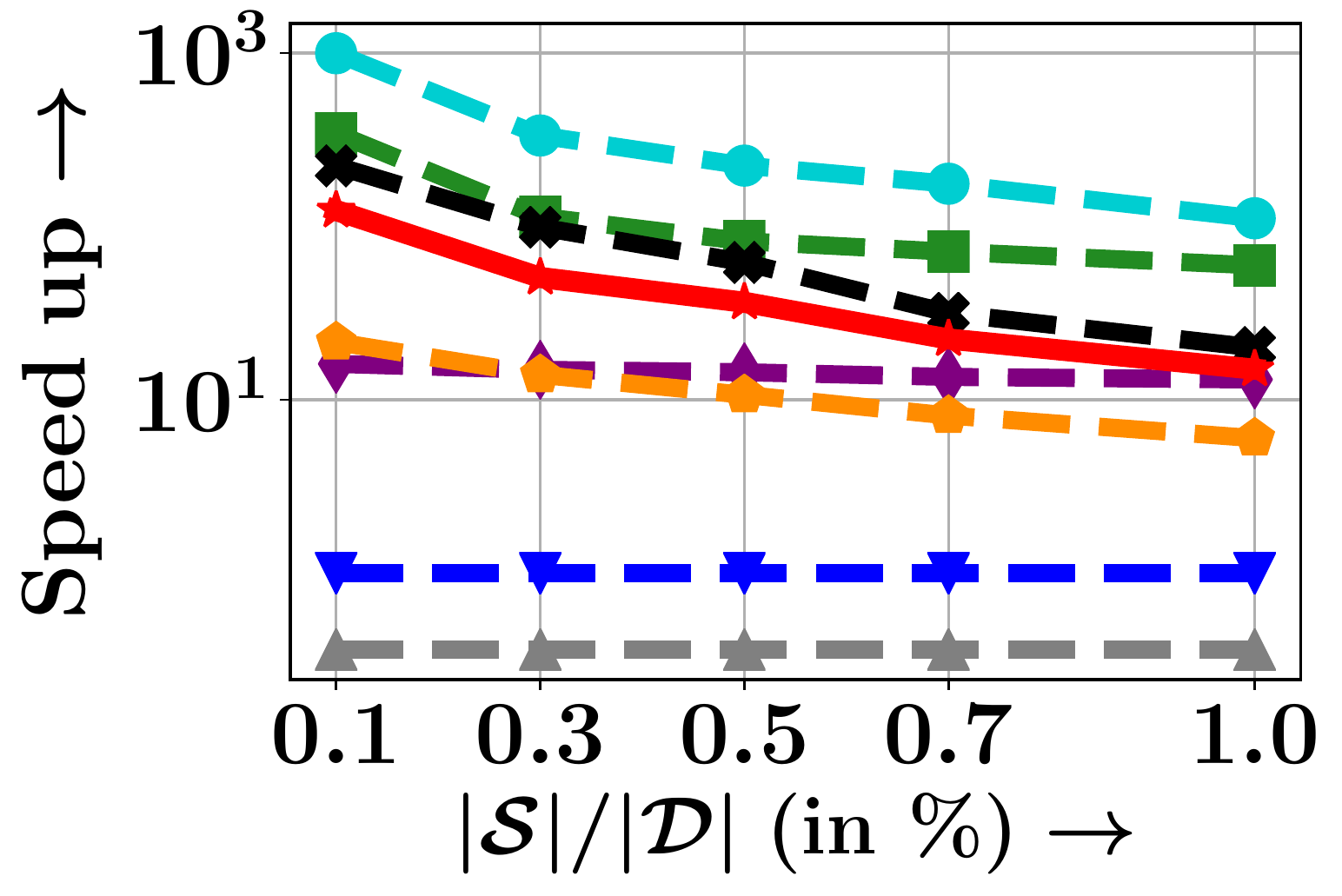}}\hspace{2mm}
\hspace*{0.2cm}\subfloat[NYSE-Close]{ \includegraphics[width=0.20\textwidth]{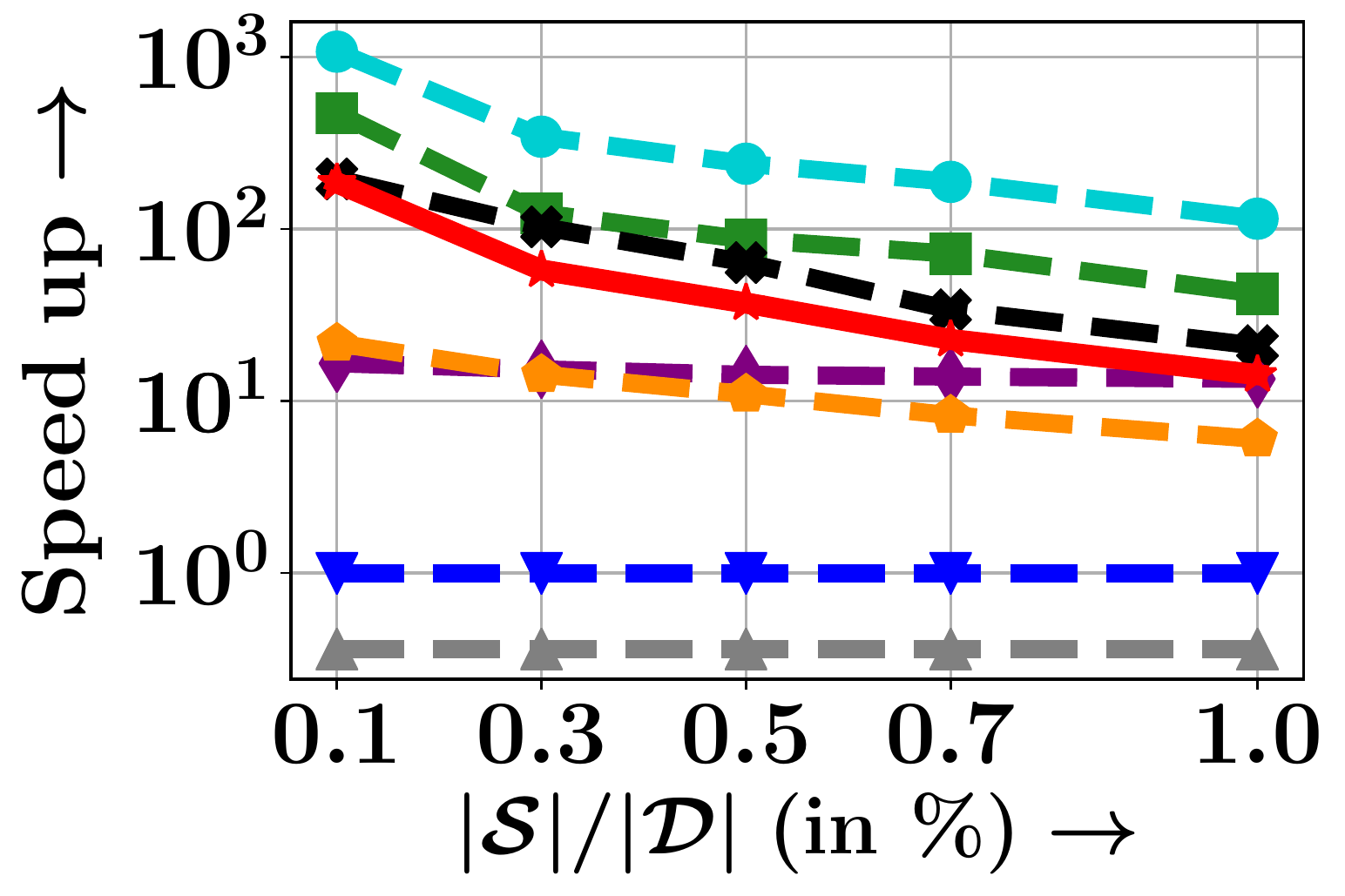}}
\caption{Variation of performance in terms of the mean squared error (MSE, top row) and the computational 
efficiency in terms of speed up  with respect to \full (bottom row) for all methods, {\em i.e.}, \our (Algorithm~\ref{alg:selcon}), \our-without-constraints, \random, \randomc, \full, \fullc, CRAIG~\cite{mirzasoleiman2019coresets} and GLISTER~\cite{killamsetty2020glister} across different datasets with $10\%$ held-out set and 1\% validation set. 
We set  the number of partitions $Q=1$. }
\label{fig:main}
\end{figure*}
\begin{figure*}[!!!t]
\centering
{ \includegraphics[width=0.96\textwidth]{FIG/Acc_vs_S/leg.pdf}}\\ \vspace*{-5mm}
\subfloat{ \includegraphics[width=0.22\textwidth]{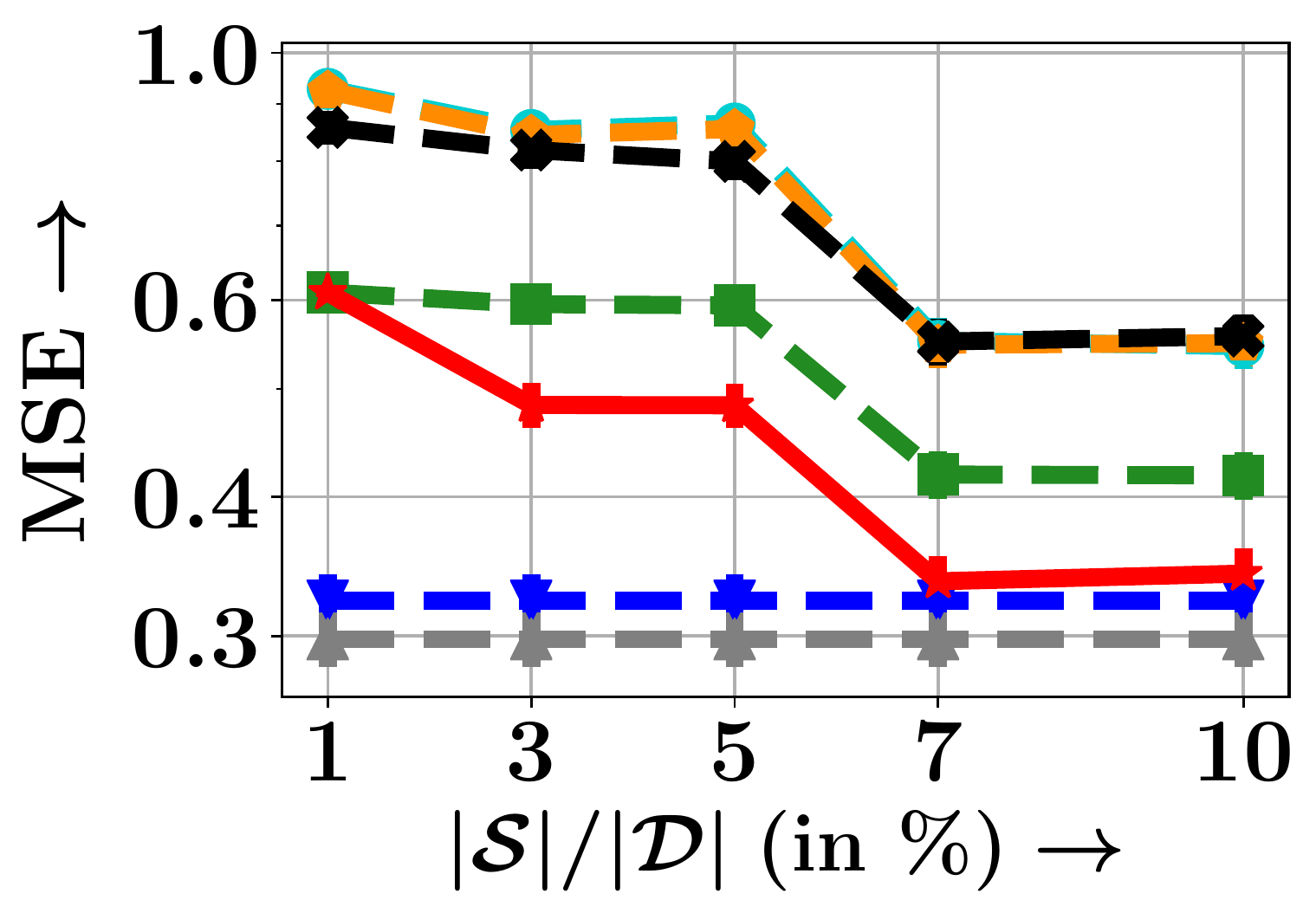}}\hspace{2mm}
\subfloat{\includegraphics[width=0.22\textwidth]{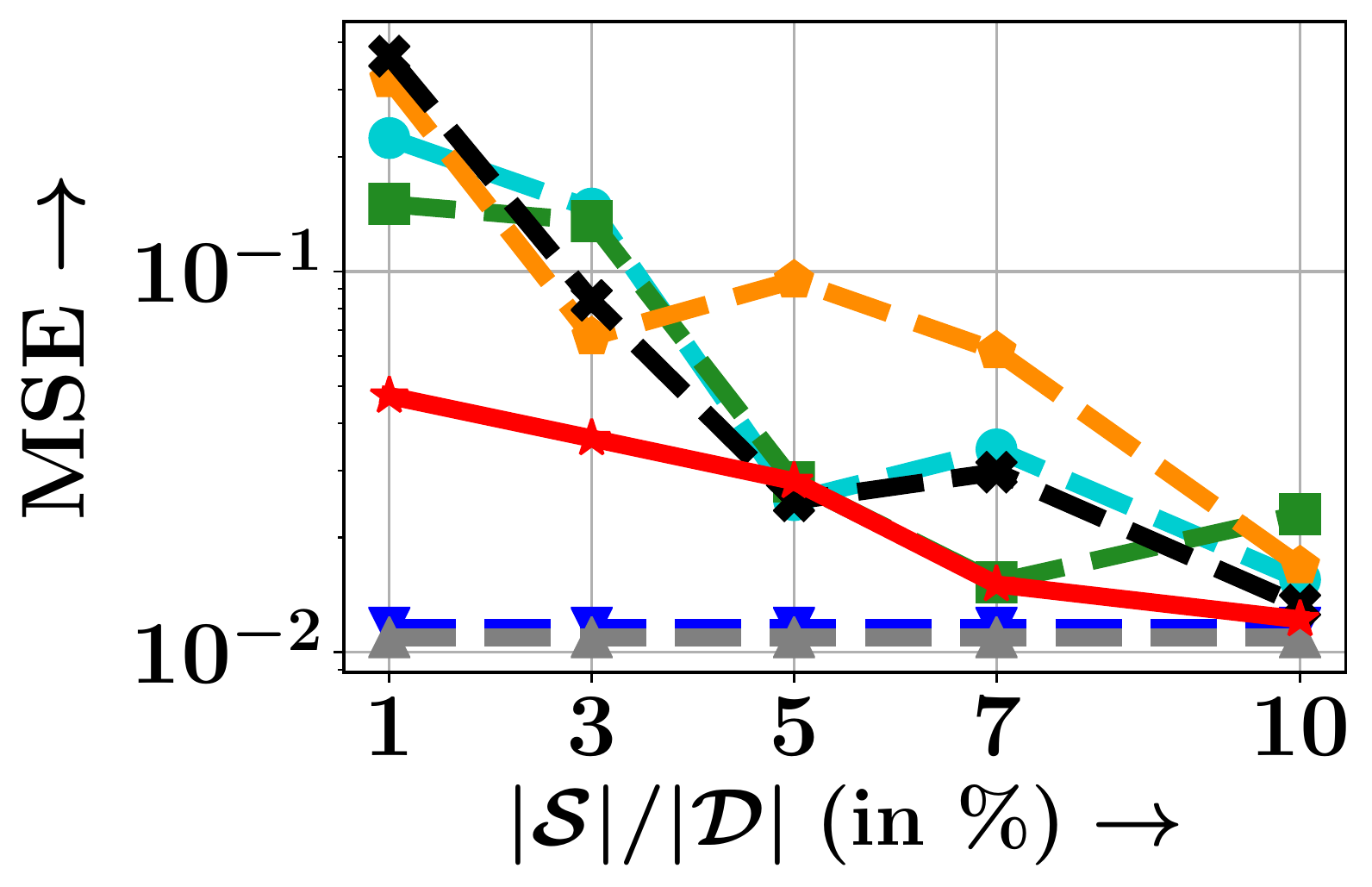}}\hspace{2mm}
\subfloat{\includegraphics[width=0.22\textwidth]{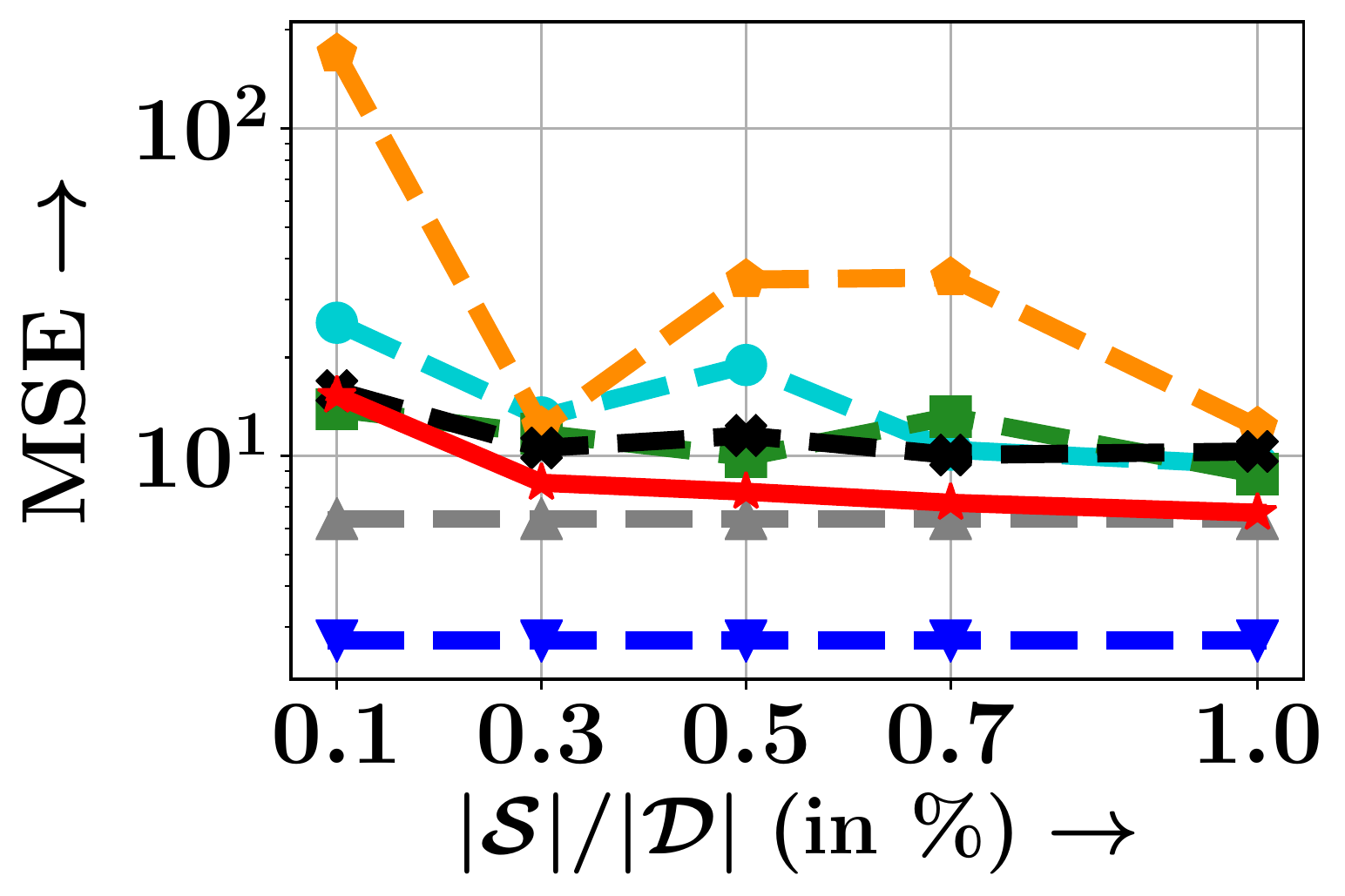}}\hspace{2mm}
\subfloat{ \includegraphics[,width=0.22\textwidth]{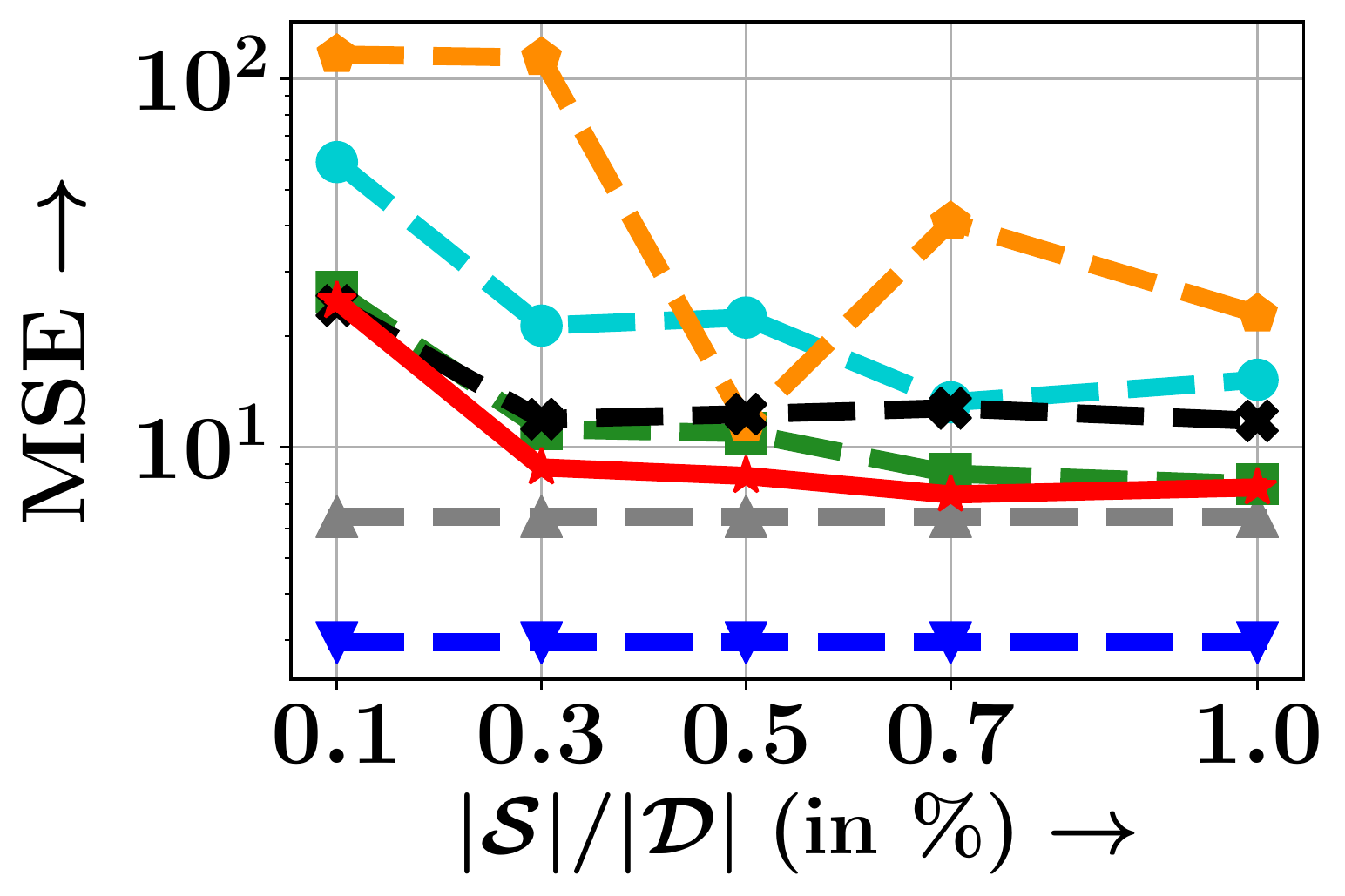}}\\
\vspace{3mm}
\subfloat[Cadata]{\setcounter{subfigure}{1}\includegraphics[ width=0.22\textwidth]{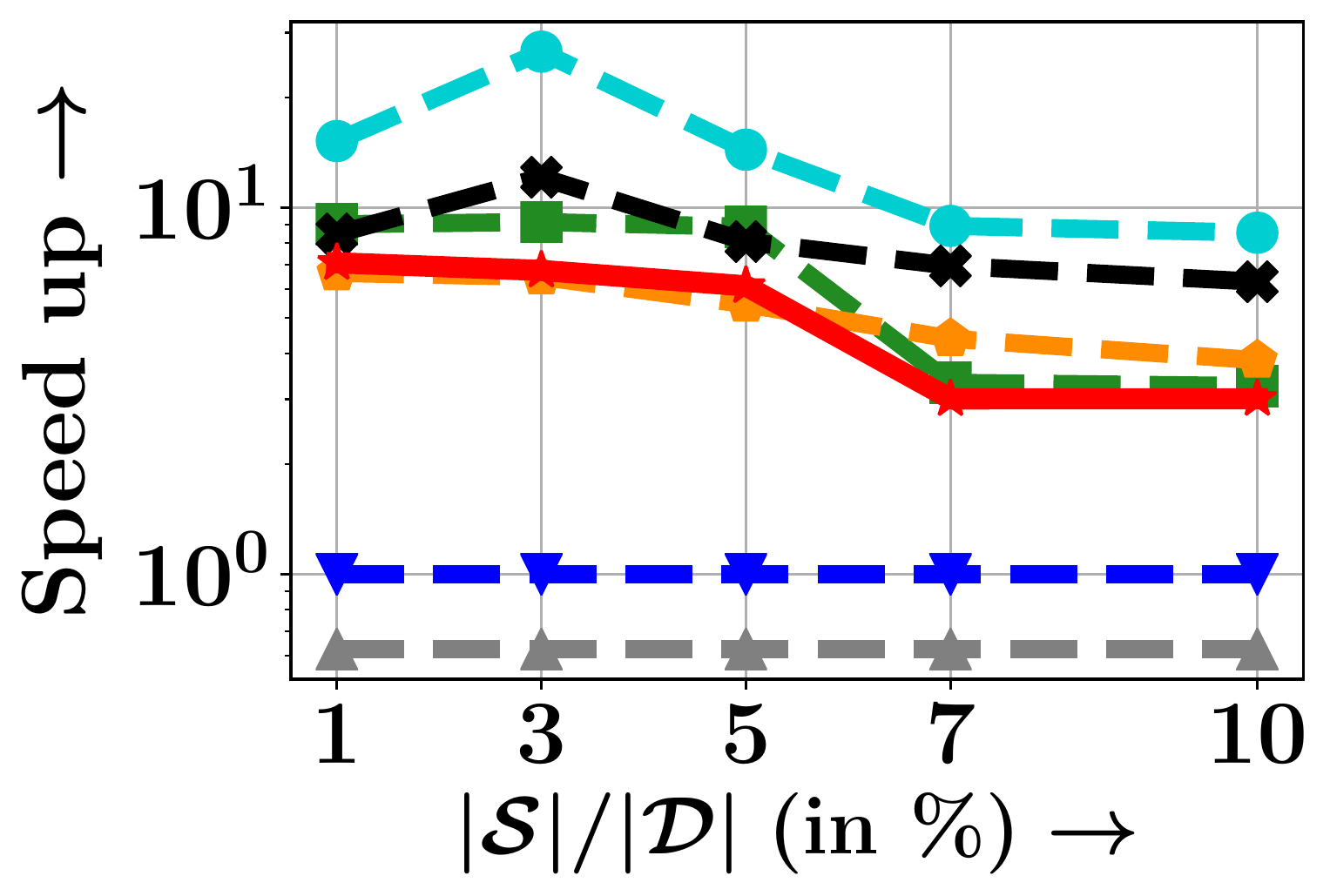}}\hspace{2mm}
\subfloat[Law]{\includegraphics[width=0.22\textwidth]{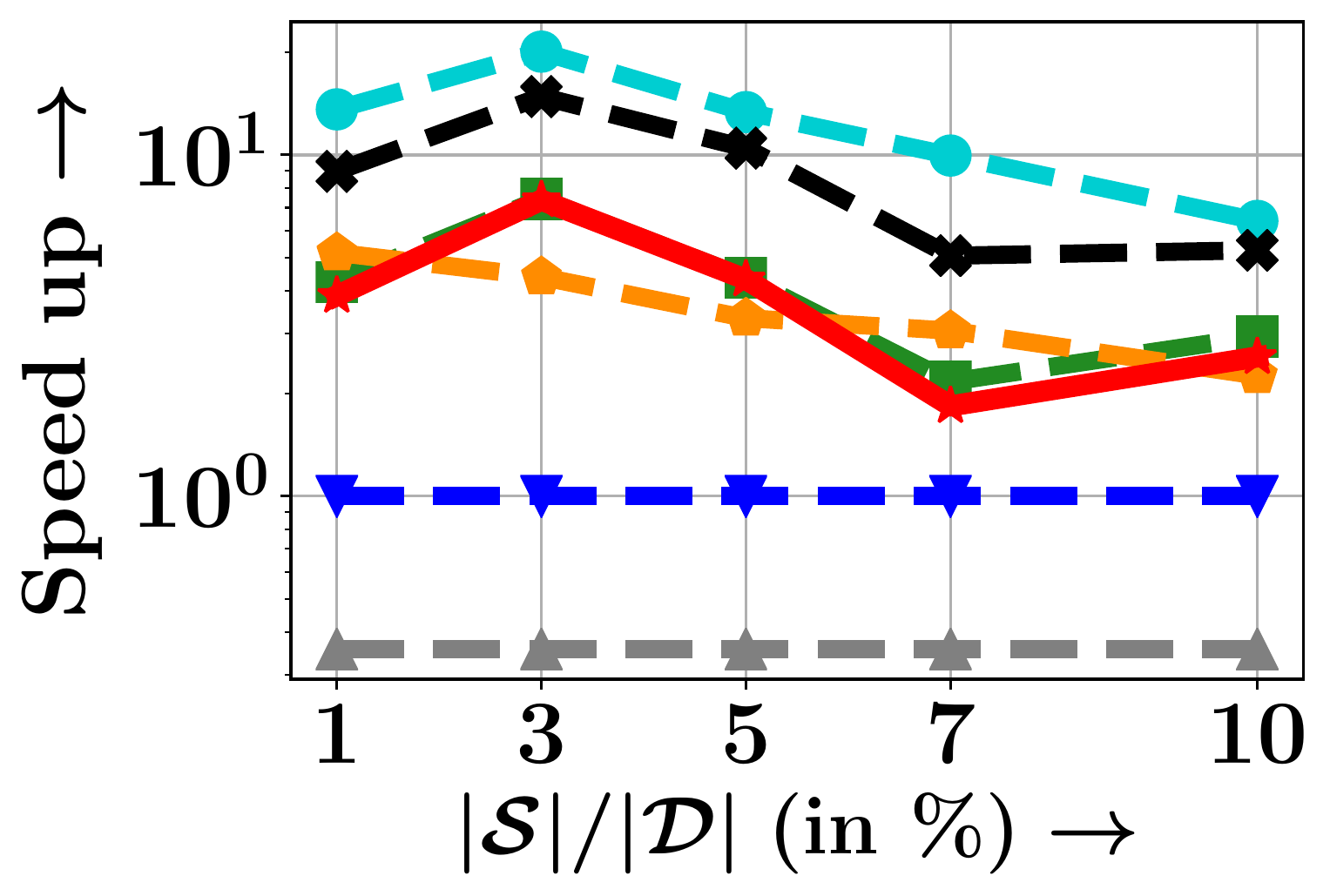}}\hspace{2mm}
\subfloat[NYSE-High]{\includegraphics[width=0.22\textwidth]{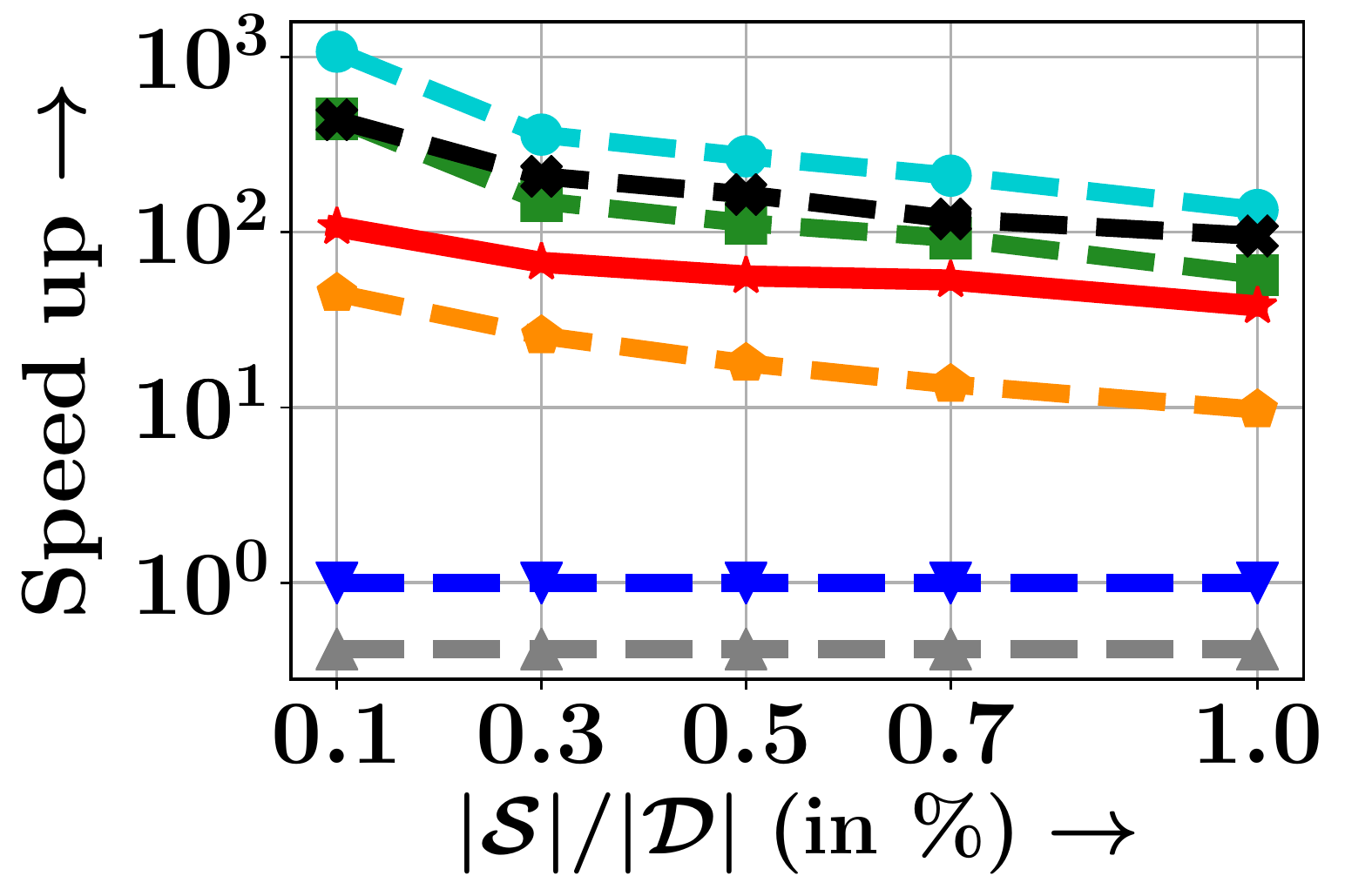}}\hspace{2mm}
\subfloat[NYSE-Close]{\includegraphics[width=0.22\textwidth]{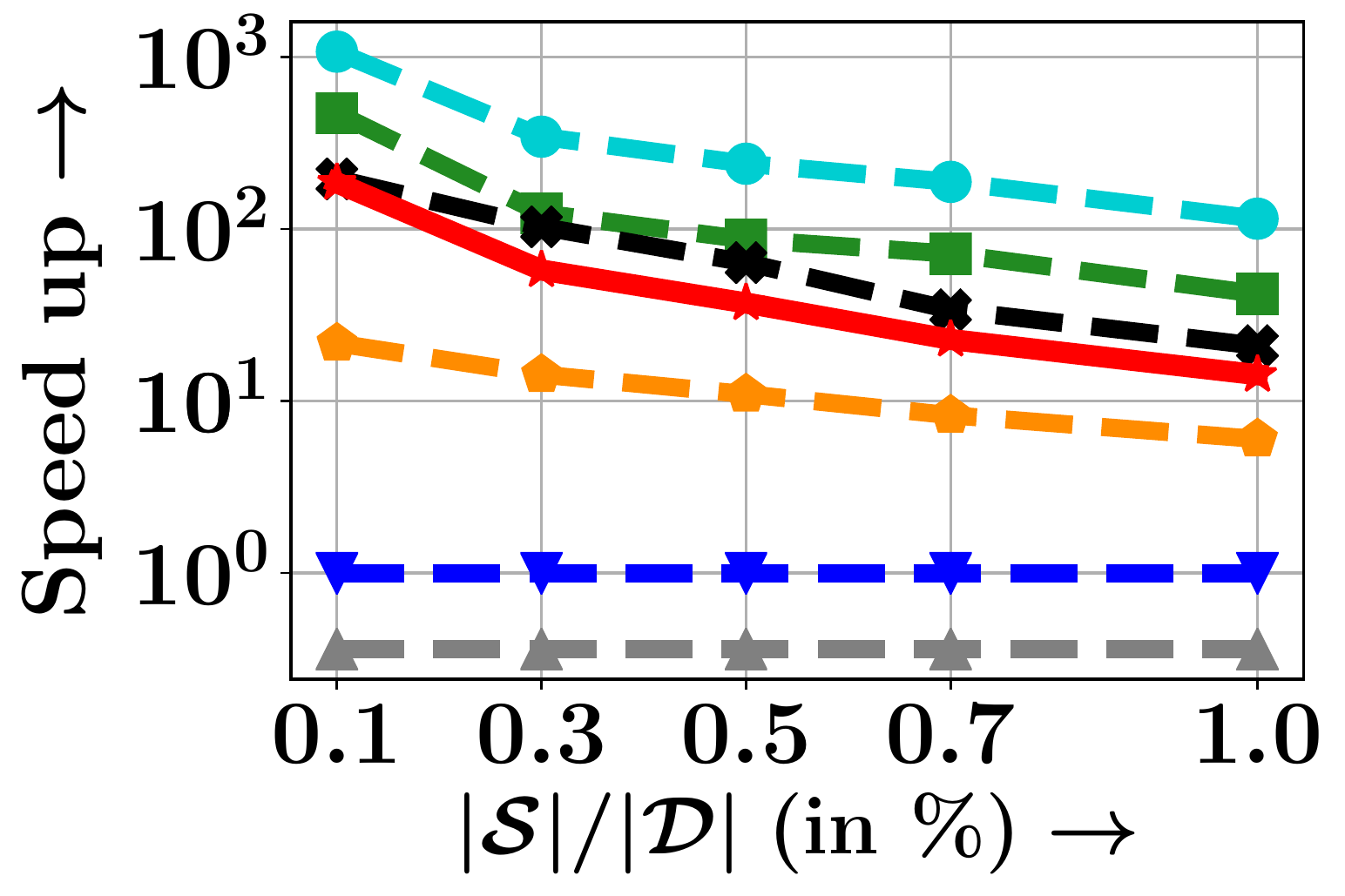}}
\caption{Variation of performance for nonlinear regression using $\hb_w(\xb) = \wb ^\top \text{ReLU}(\bm{W}\xb)$, in terms of the mean squared error (MSE, top row) and the computational efficiency in terms of speed up  with respect to \full (bottom row) for all methods, {\em i.e.}, \our (Algorithm~\ref{alg:selcon}), \our-without-constraints, \random, \randomc, \full, \fullc, CRAIG~\cite{mirzasoleiman2019coresets} and GLISTER~\cite{killamsetty2020glister} across different datasets with $10\%$ held-out set and 1\% validation set. 
We set  the number of partitions $Q=1$.}
\label{fig:deep}
\end{figure*}

\oursub{Predictive performance and  efficiency}
\label{sec:exp-main}
\vspace{-0.5mm}
We evaluate the performance of each data selection method in terms of the mean squared error (MSE) $\EE[(y-\hat{y})^2]$ on
 the test set.
 We also compute the computational efficiency of a method in terms of the speed-up it achieves with respect to \full, {\em i.e.}, $\text{RunTime}_{\text{\Full}}/\text{RunTime}_{\text{method}}$, where $\text{RunTime}_{\bullet}$ is time taken by the corresponding method to complete both the subset selection and model training.  
Here, we constrain the total loss on the validation set, {\em i.e.}, we set $Q=1$.

\xhdr{Linear regression}
Here, we compare the performance of \our for linear regression ($h_{\wb}(\xb)=\wb^\top \xb$) against all the baselines across the first four datasets\footnote{\scriptsize{Due to its small size, we ignore \ccr in this experiment.}}, described in Section~\ref{sec:exp-setup}. 
Moreover, for smaller datasets, {\em i.e.}, Cadata and Law, we consider $|\Scal|/|\Dcal| \in [0.01, 0.1]$, whereas, for larger datasets, {\em i.e.}, NYSE-High and NYSE-Close, we consider $|\Scal|/|\Dcal|\in [0.001,0.01]$.
In Figure~\ref{fig:main}, we summarize the results.  We make the following observations.
(i) \our shows better predictive accuracy than all the baselines except \full and \fullc in most of the cases.
The performance of \randomc is often comparable with \our especially when  $|\Scal|/|\Dcal|$ is too high ( $> 5\%$ in  Cadata and Law) or too low ($<0.3\%$ in NYSE datasets). On the Law dataset, \our's performance is noteworthy -  with 1\% training data, it performs at par with \Full.
In most cases, the performance gain provided by \our over \randomc is statistically significant (Wilcoxon signed-rank test, p-value $= 0.05$) while \our consistently outperforms the other baselines. 
 %
(ii) \our shows a significant speed up with respect to \fullc, \full, \glister and \craig.
In fact, with 1\% subset size, \our shows a 10$\times$ speed up with respect to \full,  often with negligible loss in accuracy (see Law and NYSE-Close).
%
%
However, \our is slower than \random, \randomc and \ourcon. This is because \ourcon does not have any validation loss constraints; and, none of the random heuristics involves any additional overhead time due to subset selection.
(iii) \craig and \glister do not involve any explicit validation set constraints, which often curbs their predictive power. 
On the other hand,  even \randomc is able to outperform them in terms of
the predictive performance, which is because of their improved generalization ability due to the presence of the explicit validation error constraints.
%
%

\xhdr{Nonlinear regression} Next, we analyze the performance and efficiency of \our, when $\hb_w(\xb) = \wb ^\top \text{ReLU}(\bm{W}\xb)$.
In Figure~\ref{fig:deep}, we summarize the results\footnote{\scriptsize{We omitted the results for CRAIG in nonlinear regression because the data selection component of  CRAIG needs to be run for several epochs for non-convex losses~\cite{mirzasoleiman2019coresets}, and hence, it did not scale for the large datasets.}} which shows that \our can trade off between 
efficiency and performance more effectively than the baselines  (results similar to  linear regression). 

\xhdr{Effect of $\delta$}
We next investigate the effect of $\delta$ on the predictive performance for different sizes of $|\Scal|$.
In Figure~\ref{fig:delta-vs-err}, we summarize the results for linear models, which shows that for different values of $|\Scal|$, the performance generally improves as $\delta$ decreases. 

\vspace*{-2mm}
\subsection{Application to fair regression}
\vspace*{-1mm}
\label{sec:exp-fairness}
Fairness in regression requires that the prediction error limited to any protected group is below a pre-specified label~\cite{agarwal2019fair} and therefore, such an application naturally fits in our setting. To that end, 
we apply our approach to the Law and \ccr datasets~\cite{agarwal2019fair} and enforce fairness violation measured in terms of $\EE[ |(y_i-\hat{y}_i)^2-(y_j-\hat{y}_j)^2 | \,\big|\, i\in V_q, j\in \Vcal\backslash V_q]$ ($Q=8$ in Law, $Q=4$ in \ccr) in the dataset, we  partition the validation set $\Vcal$ into the subsets $V_1,..,V_{Q}$, so that each subset $V_q$ consists of individuals with the race $r_q$, {\em i.e.}, $V_q = \set{(\xb_j,y_j)\,|\, \text{Race of individual }j  = r_q}$.
 \begin{figure}[!!!t]
\centering
\vspace{-3mm}

\hspace*{0.1cm}\hspace*{-0.6cm}\subfloat{\setcounter{subfigure}{1}  \includegraphics[width=0.12\columnwidth]{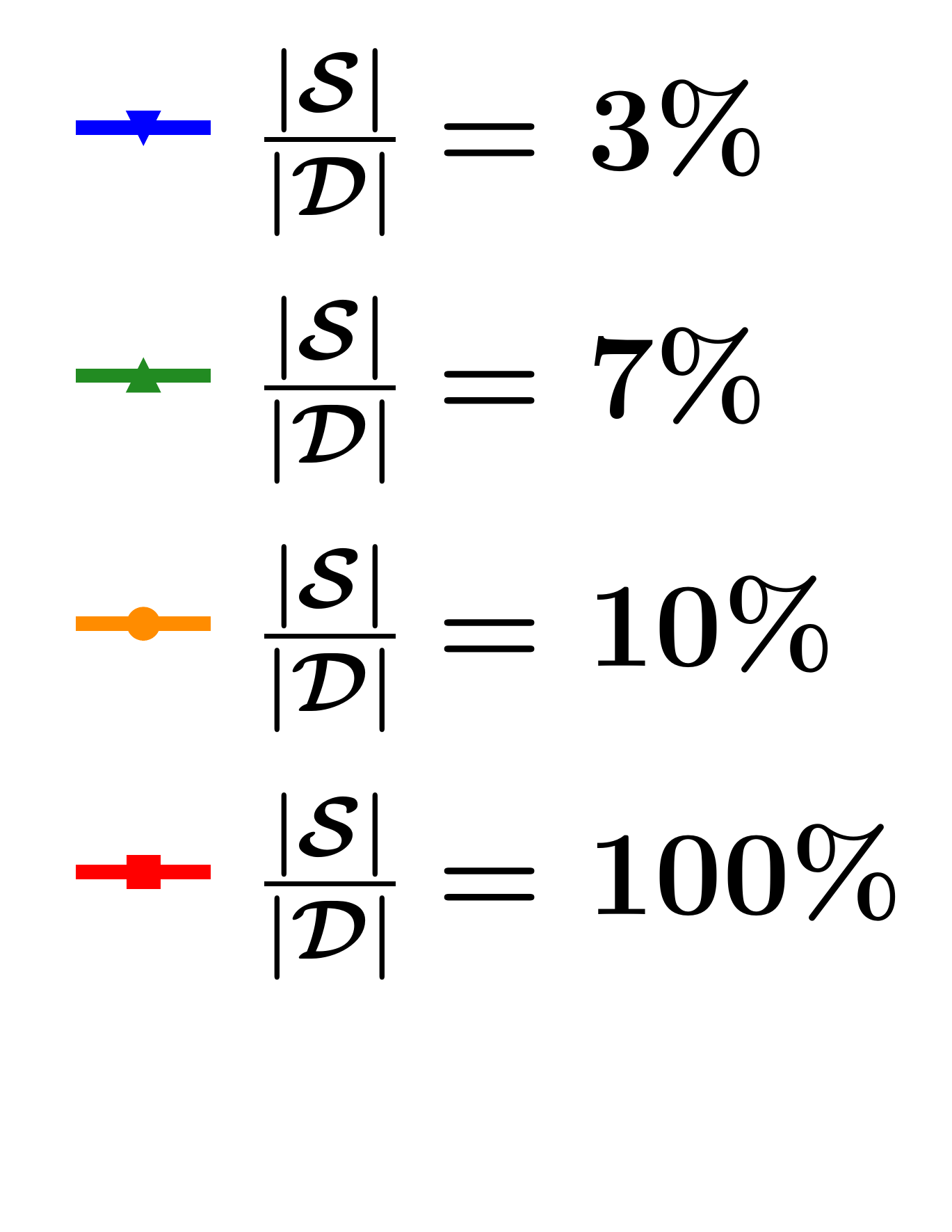}}
\hspace*{0.1cm}\subfloat[Cadata]{\setcounter{subfigure}{1}  \includegraphics[ width=0.25\columnwidth]{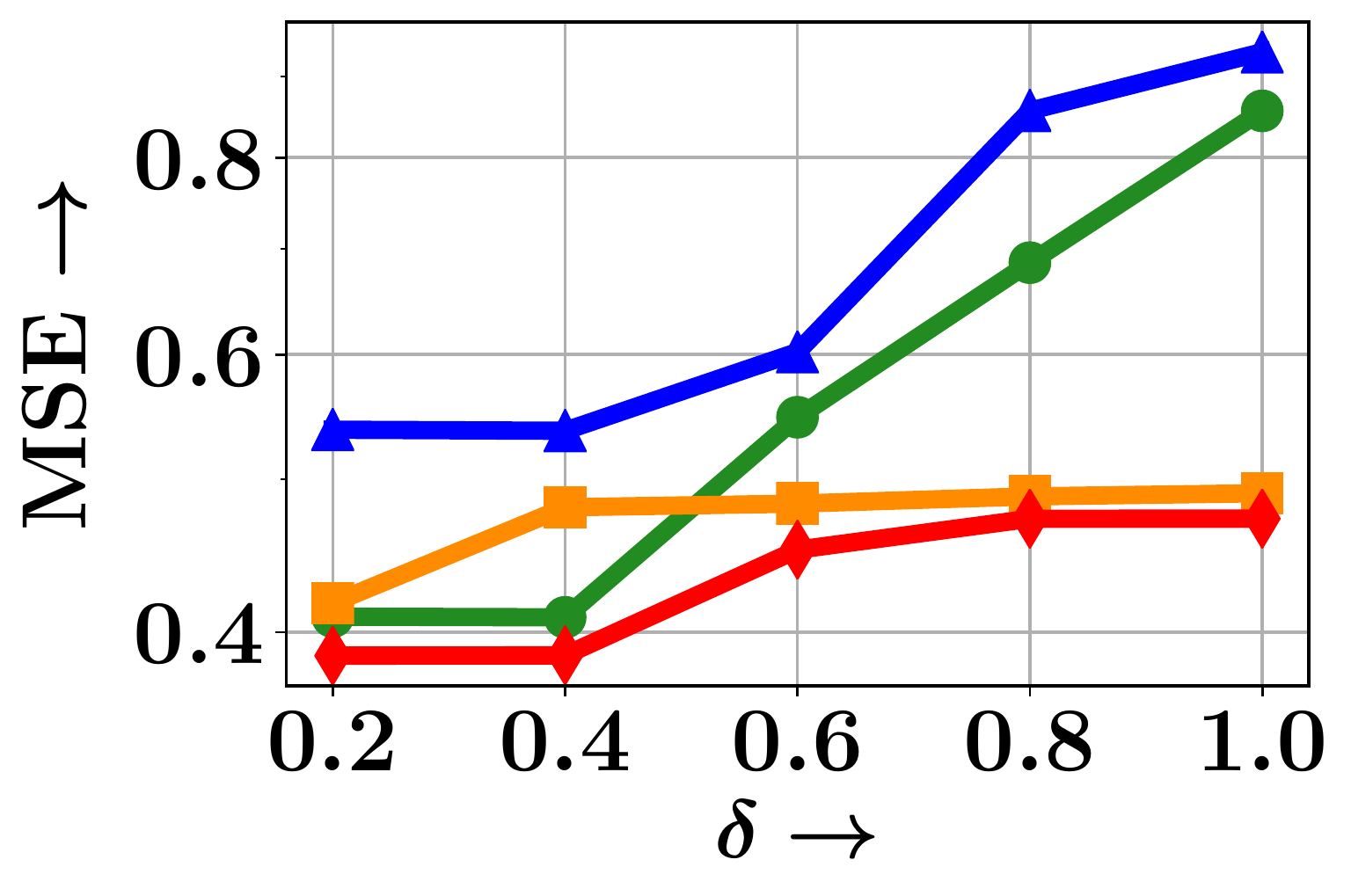}}  
\hspace*{0.1cm}\subfloat[Law]{ \includegraphics[width=0.25\columnwidth]{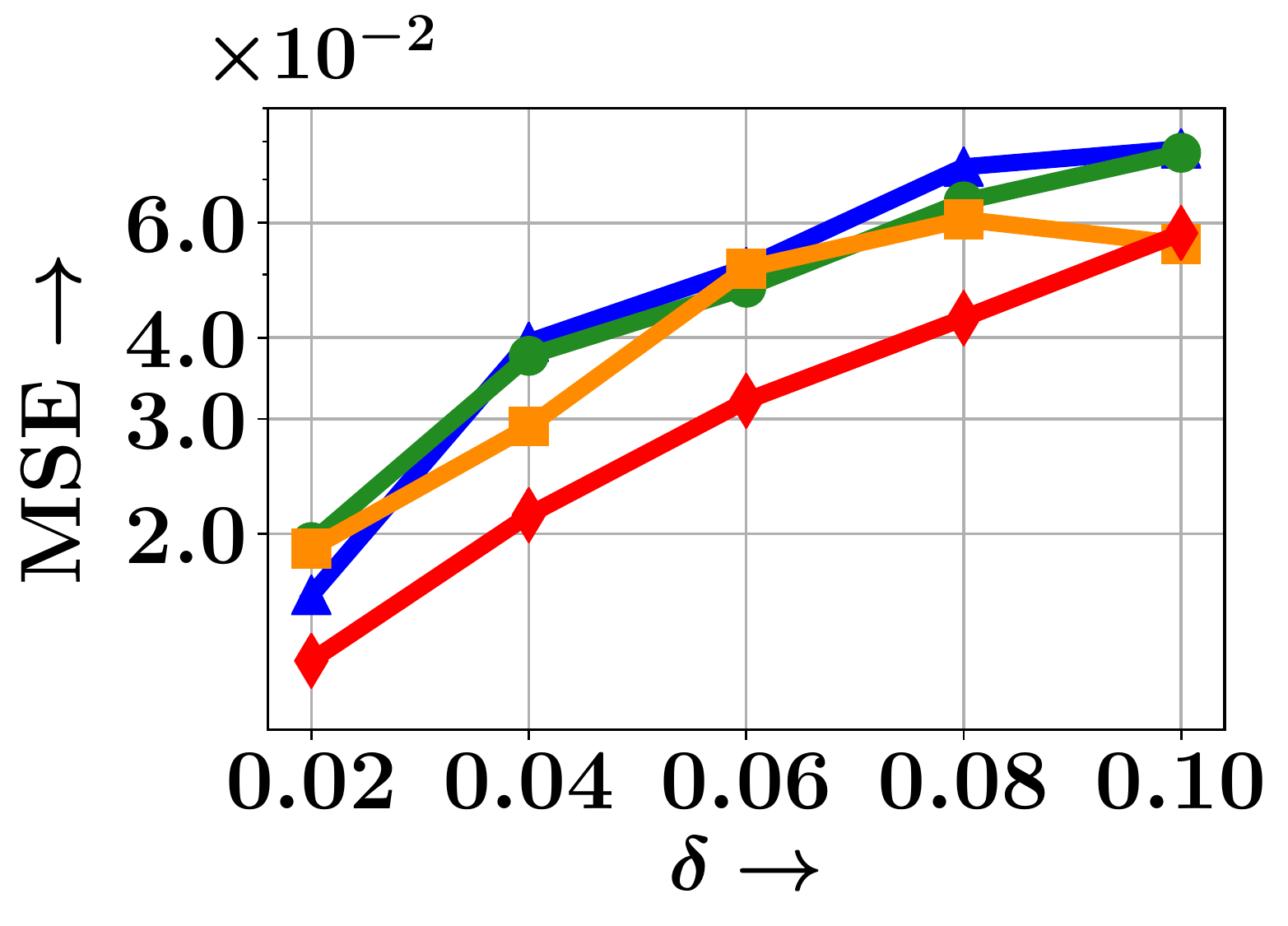}}
\vspace{-1.8mm}
\caption{Variation of mean squared error (MSE) across different values of validation error bound $\delta$ for different sizes of $|\Scal|$. We observe that for different values of $|\Scal|$, the performance generally improves as $\delta$ decreases. }
\label{fig:delta-vs-err}
\vspace{-2mm}
\end{figure}
\begin{figure}[!!!t]
\centering
\hspace{2mm}{ \includegraphics[trim={0 20 0 0}, clip,width=0.5\columnwidth]{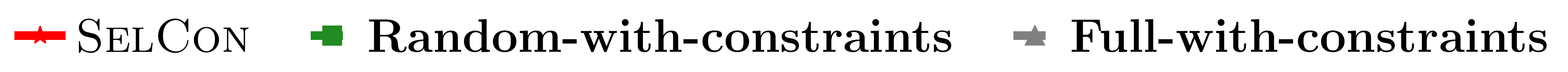}}\\
\vspace*{-3.5mm}
\hspace*{0.1cm}\hspace*{-0.6cm}\subfloat[Law]{  \setcounter{subfigure}{1} \includegraphics[ width=0.30\columnwidth]{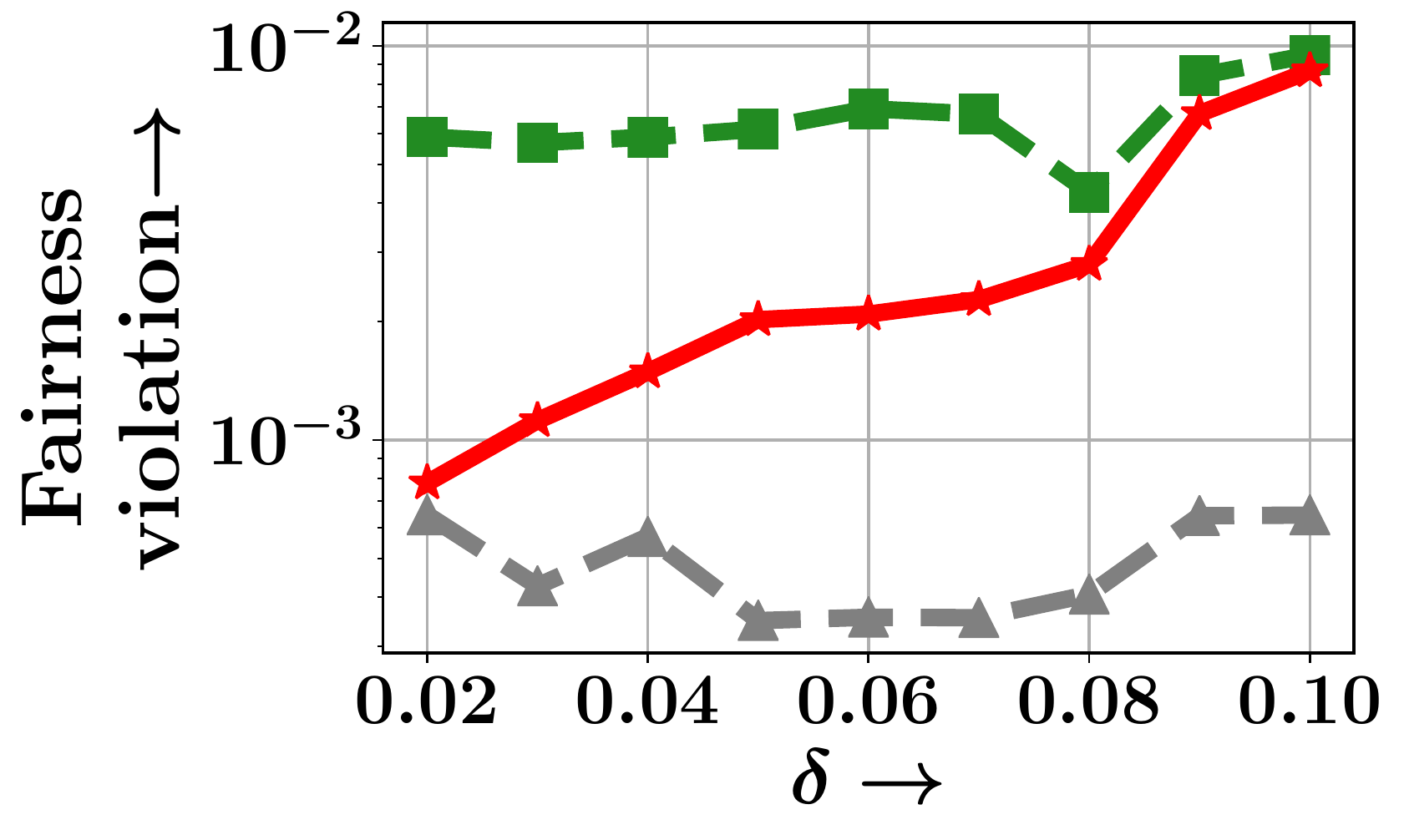}} \hspace{2mm}
\hspace*{0.1cm}\subfloat[\ccr]{ \includegraphics[width=0.30\columnwidth]{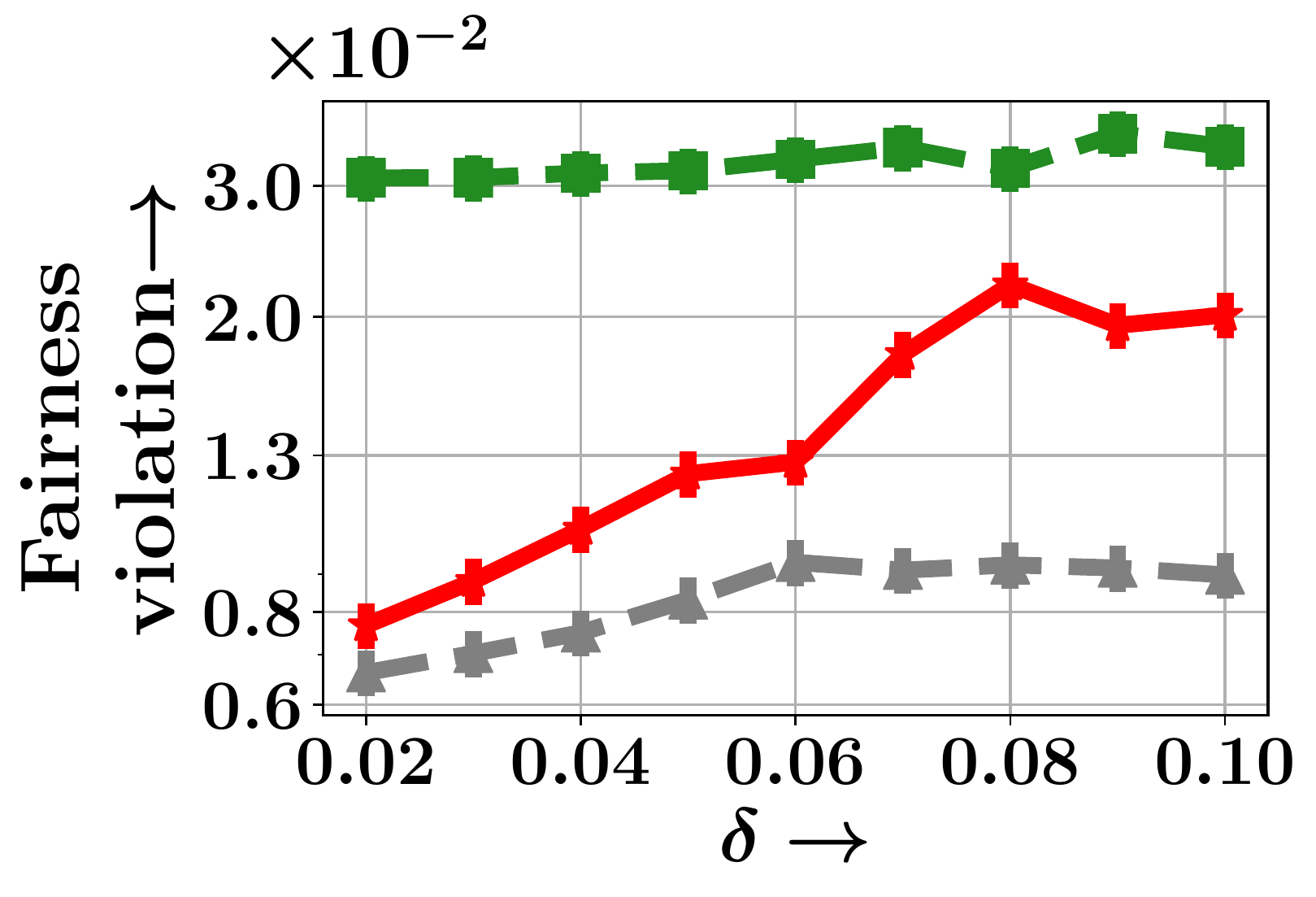}}
\vspace{-2mm}
\caption{Data selection in fair regression - These plots show the
variation of fairness violation measured in terms of $\EE[ |(y_i-\hat{y}_i)^2-(y_j-\hat{y}_j)^2 | \,\big|\, i\in V_q, j\in \Vcal\backslash V_q]$ with $\delta$ for $|\Scal|=0.1|\Vcal|$. 
Here,  $V_q$ consists
of individuals with a particular race $r_q$. We observe that \our guarantees fairness more effectively than \randomc. }
\label{fig:fair}
\vspace{-1mm}
\end{figure}

\xhdr{Results}  In Figure~\ref{fig:fair}, we plot the performance of \our in terms of the mean fairness violation, as measured by $\EE[|(y_i-\hat{y}_i)^2-(y_j-\hat{y}_j)^2| \,\big|\, i\in V_q, j\in \Vcal\backslash V_q]$ for various values of  $\delta$. We compare \our's performance against \fullc and \randomc, the only other methods that can enforce fairness by means of error constraints on the validation set.
Evidently, \our guarantees fairness more effectively than \randomc. Moreover, for low values of $\delta$, the performance of \our is close to \fullc.


\section{Conclusion}
\vspace{-1mm}
We presented  a novel data subset selection formulation that aims to select a subset $\Scal$ by controlling the generalization errors.
%
Specifically, we considered $L_2$ regularized regression over candidate training set $\Scal$, subject to the error bounds on different partitions of the validation set. 
Such error bounds reduce the generalization error that could otherwise increase owing to training on a small sized data.
Thereafter, we reformulated our data selection task as a new optimization problem and showed its equivalence to minimization of a monotone and $\alpha$-submodular function.
Finally, we designed a majorization-minimization based
approximation algorithm \our to solve this problem in the face of imperfect training.
Our experiments show that \our can more effectively trade off between accuracy and efficiency than several baselines. Our work opens several areas for future work; \eg,  it can be extended to data selection for classification as well as  data removal for outlier detection.

\section*{Acknowledgements}
We thank anonymous reviewers for providing constructive feedback. Durga Sivasubramanian is supported by a the Prime Minisiter Research Fellowship. Ganesh and Abir are also grateful to IBM Research, India (specifically the IBM AI Horizon Networks - IIT Bombay initiative) for their support and sponsorship. Abir also acknowledges the DST Inspire Award and IITB Seed Grant. Rishabh acknowledges support from the UT Dallas startup grant.

\bibliographystyle{abbrv}
\bibliography{refs.bib}
\newpage
 
\appendix
\onecolumn
\onecolumn

\begin{center}
 \Large{Appendix}
\end{center}

\section{Proofs of the technical results in Section~\ref{sec:formulation}}
\label{app:formulation}

\subsection{Proof of Proposition~\ref{prop:dual}}
\label{app:proof-dual}
\begin{numproposition}{\ref{prop:dual}}
Given a fixed training set $\Scal$, let $\mub=[\mu_q]_{q\in[Q]}$ be the Lagrangian multipliers for the constraints
$ \set{\frac{1}{{|V_q|}}{\sum_{j\in V_q}(y_j-h_{\wb}(\xb_j))^2}\le   \delta + \xi_q}_{q\in[Q]}$ in the optimization problem~\eqref{eq:opt-soft} and $\dual(\wb,\mub,\Scal)$ be defined as follows:
\begin{align}
\hspace{-2.5mm}\dual(\wb,\mub,\Scal)& \hspace{-0.3mm}=    \sum_{i\in\Scal}  [ \lambda\bnm{\wb}^2 +  (y_i-h_{\wb}(\xb_i))^2] \nn \\[-2ex]
 & \ \ \  +  \hspace{-1mm} \sum_{q\in[Q]}  \hspace{-1mm} \mu_q  \hspace{-1mm} \left[ \frac{\sum_{j\in V_q}(y_j-h_{\wb}(\xb_j))^2}{|V_q|}  -  \delta\right] \label{eq:adefxx}  \hspace{-2mm}
\end{align}
Then, for the fixed set $\Scal$, the dual of the optimization problem~\eqref{eq:opt-soft} for estimating $\wb$ and $\set{\xi_q}$ is given by,
\begin{align}
 \maxi_{\bm{0}\le \mub \le C \bm{1}} \ \mini_{\wb} \ \ &  \dual(\wb,\mub,\Scal)\label{eq:aopt-app}
\end{align}
\end{numproposition}
\begin{proof}
The dual problem of our data selection problem~\eqref{eq:opt-soft} is given as:
\begin{align}
 &\maxi_{\mub \ge 0,\bm{\nu}} \mini_{\wb, \{\xi_q\}_{q\in[Q]}}\,   \sum_{i\in\Scal} [ \lambda\bnm{\wb}^2 \hspace{-1mm} +  (y_i-h_{\wb}(\xb_i))^2]  \hspace{-0.4mm}  +  \hspace{-0.4mm} C\hspace{-1mm} \sum_{q\in V_q} \xi_q +  \sum_{q\in[Q]}\mu_q \left[\frac{\sum_{j\in V_q}(y_j-h_{\wb}(\xb_j))^2}{|V_q|}  - \delta - \xi_q \right] - \nu_q \xi_q \nn 
\end{align}
Differentiating with respect to $\bm{\xi}$, we get $\mub+\bm{\nu} = C \bm{1}$, which proves the Proposition (giving us the constraint $\bm{0} \leq \bm{\mu} \leq C\bm{1}$).
\end{proof}
\subsection{Proof of Proposition~\ref{prop:hardness}}
\label{app:proof-hardness}

\begin{numproposition}{\ref{prop:hardness}}
 Both the variants of the data selection problems~\eqref{eq:gopt} and~\eqref{eq:fopt} are NP-Hard.
\end{numproposition}

\begin{proof}
Consider our data selection problem as follows:
\begin{align}
 &\hspace{-3mm} \mini_{\Scal\subset \trn,\wb, \{\xi_q\}_{q\in[Q]}}\,   \sum_{i\in\Scal} [ \lambda\bnm{\wb}^2 \hspace{-1mm} +  (y_i-h_{\wb}(\xb_i))^2]  \hspace{-0.4mm}  +  \hspace{-0.4mm} C\hspace{-1mm} \sum_{q\in V_q} \xi_q,\hspace{-2mm}  \nn\\ 
 & \hspace{1mm}\text{such that, }  \frac{\sum_{j\in V_q}(y_j-h_{\wb}(\xb_j))^2}{|V_q|}  \le  \delta + \xi_q \quad \forall q\in [Q], \nn\\ 
 & \qquad \qquad \quad \xi_q \ge 0\ \quad \forall \, q \in [Q] \text{ and, }\ |\Scal| = k \label{eq:opt-soft-app} 
\end{align}
We make $C=0$ and $h_{\wb}(\xb)=\wb^\top \xb$. Then the problem becomes equivalent to the robust regression problem~\cite{bhatia2017consistent}, \ie,
\begin{align}
 \mini_{\Scal\subset \trn,\wb}\,   \sum_{i\in\Scal} [ \lambda\bnm{\wb}^2 \hspace{-1mm} +  (y_i- \wb^\top \xb)^2], \qquad\text{such that, } |\Scal| = k, 
\end{align}
which is known to be NP-hard.
\end{proof}
\newpage
\section{Techninal results on Section~\ref{sec:method} and their proofs}
\label{app:method-proofs}

\subsection{Proof of Proposition~\ref{prop:mon}}
\label{app:proof-mon}

\begin{numproposition}{~\ref{prop:mon}}
For any model $h_{\wb}$, $f(\Scal)$
is monotone, {\em i.e.}, $f(\Scal\cup\setel{a})-f(\Scal) \ge 0$ for all $\Scal\subset \Dcal$ and $a\in \Dcal\cp\Scal$.
\end{numproposition}

\begin{proof}
We note that
\begin{align}
  f(\Scal \cup   \setel{a}) -f(\Scal)
  &  =  F\left( \wb^*(\mub^*(\Scal\cup\setel{a}),\Scal\cup \setel{a}), \mub^*(\Scal\cup \setel{a}),\Scal\cup\setel{a}\right)
-F\left( \wb^*(\mub^*(\Scal),\Scal), \mub^*(\Scal),\Scal\right) \\
& = \underbrace{F\left( \wb^*(\mub^*(\Scal\cup\setel{a}),\Scal\cup \setel{a}), \mub^*(\Scal\cup \setel{a}),\Scal\cup\setel{a}\right)-
 F\left( \wb^*(\mub^*(\Scal),\Sa ), \mub^*(\Scal),\Sa\right)}_{\ge 0}\nn \\
&\quad + F\left( \wb^*(\mub^*(\Scal),\Sa), \mub^*(\Scal),\Sa\right)
-F\left( \wb^*(\mub^*(\Scal),\Scal), \mub^*(\Scal),\Scal\right)\\
& \overset{(i)}{\ge} F\left( \wb^*(\mub^*(\Scal),\Sa), \mub^*(\Scal),\Sa\right)
-F\left( \wb^*(\mub^*(\Scal),\Scal), \mub^*(\Scal),\Scal\right)\\
&=   F\left( \wb^*(\mub^*(\Scal),\Sa), \mub^*(\Scal),\Sa\right)- F\left( \wb^*(\mub^*(\Scal),\Sa), \mub^*(\Scal),\Scal\right)\nn \\
&\qquad + \ubr{F\left( \wb^*(\mub^*(\Scal),\Sa), \mub^*(\Scal),\Scal\right) -F\left( \wb^*(\mub^*(\Scal),\Scal), \mub^*(\Scal),\Scal\right)}_{\ge 0}\\
& \ov{(ii)}{\ge} 
   F\left( \wb^*(\mub^*(\Scal),\Sa), \mub^*(\Scal),\Sa\right)- F\left( \wb^*(\mub^*(\Scal),\Sa), \mub^*(\Scal),\Scal\right)\nn\\
   &  = \sum_{i\in\Sa}  [ \lambda\bnm{\wb^*(\mub^*(\Scal),\Sa)}^2 +  (y_i-h_{\wb^*(\mub^*(\Scal),\Sa)}(\xb_i))^2]  \nn\\
&  \qquad  +   \sum_{q\in[Q]}   \mu^* _ q (\Scal)   \left[ \frac{\sum_{j\in V_q}(y_j-h_{\wb^*(\mub^*(\Scal),\Sa)}(\xb_j))^2}{|V_q|}  -  \delta\right]  \\
& \qquad - \sum_{i\in\Scal}  [ \lambda\bnm{\wb^*(\mub^*(\Scal),\Sa)}^2 +  (y_i-h_{\wb^*(\mub^*(\Scal),\Sa)}(\xb_i))^2]  \nn\\
&  \qquad \qquad   -   \sum_{q\in[Q]}   \mu^* _ q (\Scal)   \left[ \frac{\sum_{j\in V_q}(y_j-h_{\wb^*(\mub^*(\Scal),\Sa)}(\xb_j))^2}{|V_q|}  -  \delta\right]  \\
& = \lambda\bnm{\wb^*(\mub^*(\Scal),\Sa)}^2 +  (y_a-h_{\wb^*(\mub^*(\Scal),\Sa)}(\xb_a))^2
\end{align}
Here, inequality (i) is due to the fact that:
 $\mub^*(\Scal\cup\setel{a}) = \argmax_{0\le \mub\le C} F\left( \wb^*( \mub ,\Scal\cup \setel{a}), \mub,\Scal\cup\setel{a}\right)$; and, inequality (ii) is due to the fact that:
$ \wb^*(\mub^*(\Scal),\Scal) = \argmin_{\wb} \ F\left( \wb, \mub^*(\Scal),\Scal\right)$.
\end{proof} 

\subsection{Proof of Theorem~\ref{thm:alpha-sub-nonlin}}
\label{app:alpha-sub-key}
\begin{numtheorem}{\ref{thm:alpha-sub-nonlin}}
Assume that $|y|\le y_{\max}$; 
$h_{\wb}(\xb)=0$ if $\wb=\bm{0}$, {\em i.e.}, $h_{\wb}(\xb)$ has no bias term; 
$h_{\wb}$ is $H$-Lipschitz, \ie, $|h_{\wb}(\xb)|\le H\bnm{\wb} $;
the eigenvalues of the Hessian matrix of $(y- h_{\wb}(\xb))^2)$ have a finite upper bound, {\em i.e.},
$\displaystyle\mathrm{Eigenvalue}  (\nabla^2 _{\wb} (y- h_{\wb}(\xb))^2)$ $ \le  2\chi_{\max} ^2 $; and, 
define $ \ell^* = \min_{a\in\Dcal} {\min_{\wb}}\ \ \chi_{\max} ^2 \cdot \bnm{\wb}^2 +  (y_a-h_{\wb}(\xb_a))^2~>~0$. 
Then, for  $\lambda \ge \max\left\{ \chi_{\max} ^2,   {32(1+CQ)^2\ymx ^2 H^2}/{ \ell^*}  \right\}$,  
 $f(\Scal)$ is a $\alpha$-submodular set function, where 
\begin{align}
    \alpha \ge \widehat{\alpha}_f = 1- \dfrac{32(1+CQ)^2\ymx ^2 H^2}{{\lambda \ell^*}},\label{eq:alphaNonxx}
\end{align}
\end{numtheorem} 

\begin{proof}
We assume that: $\Scal \subset \Tcal$. Hence, $|\Tcal|>0$.
Let us define: $\lossa(\wb) = \lambda \bnm{\wb}^2 +  (y_a-h_{\wb}(\xb_a))^2$,  $\wmin= \text{argmin}_{\wb} \lossa(\wb)$.
Finally, we denote $\ell^* = \min_{a\in\Dcal} {\min_{\wb}} \chi_{\max} ^2 \bnm{\wb}^2 +  (y_a-h_{\wb}(\xb_a))^2 $.
Next, we have that:
\begin{align}
  \dfrac{ f(\Scal \cup \setel{a})  -f(\Scal)}{f(\Tcal \cup \setel{a}) -f(\Tcal)}
 &  \ge \dfrac{\lossa\left(\wb^*(\mub^*(\Scal),\Sa)\right)}{ \lossa\left(\wb^*(\mub^*(\Ta),\Tcal)\right) }
 \quad (\text{Due to Lemma~\ref{lem:fbounds}})
\nn \\
&  \ge \dfrac{\lossa\left(\wmin\right)}{ \lossa\left(\wb^*(\mub^*(\Ta),\Tcal)\right) }  \quad (\text{Since $\lossa\left(\wb^*(\mub^*(\Scal),\Sa)\right) \ge \lossa\left(\wmin\right)$})\nn 
\end{align}
\begin{align}
& \ov{(i)}{\ge} \dfrac{\lossa\left(\wb^*(\mub^*(\Tcal),\Ta)\right) - (\lambda + \chi_{\max} ^2) \bnm{\wmin  - \wb^*(\mub^*(\Ta),\Tcal)  }^2}{ \lossa\left(\wb^*(\mub^*(\Ta),\Tcal)\right) } \nn \\
&  \ge 1- \dfrac{ (\lambda + \chi_{\max} ^2) \bnm{\wmin  - \wb^*(\mub^*(\Ta),\Tcal)  }^2}{  \lossa\left(\wb^*(\mub^*(\Ta),\Tcal)\right)}\nn \\
& \ge 1- \dfrac{ (\lambda + \chi_{\max} ^2) }{ \lossa(\wmin)}
\bnm{\wmin  - \wb^*(\mub^*(\Ta),\Tcal)  }^2\quad (\text{Since $\lossa\left(\wb^*(\mub^*(\Scal),\Sa)\right) \ge \lossa\left(\wmin\right)$})\nn \\
& \ov{(ii)}{\ge} 1- \dfrac{32 \lambda }{\ell^*} \dfrac{(1+CQ)^2\ymx ^2 H^2 }{\lambda^2}  \nn \\
& = 1-  \dfrac{32(1+CQ)^2\ymx ^2 H^2}{{\lambda \ell^*}},\label{eq:ffinal}
 \end{align}
Inequality (i) is due to the following:
\begin{align}
 \lossa & (\wb^*(\mub^*(\Ta),\Tcal)) \\
 & = \lossa(\wmin) + \nabla  \lossa (\wmin) ^\top (\wb^*(\mub^*(\Ta),\Tcal)-\wmin)  +(\wb^*(\mub^*(\Ta),\Tcal)-\wmin)^\top \nabla^2 \lossa (\wb')   (\wb-\wmin)^\top   \\
 &   \le \lossa(\wmin) +  \frac{\max_{\set{\text{eig} (\nabla^2 \lossa)}}}{2} \bnm{\wb^*(\mub^*(\Ta),\Tcal)-\wmin}^2 
 \quad \text{($ \nabla  \lossa (\wmin) =0$)}\\
  & \le \lossa(\wmin) +  (\lambda + \chi_{\max} ^2)\bnm{\wb^*(\mub^*(\Ta),\Tcal)-\wmin}^2 ;
\end{align}
and inequality (ii) follows from 
\begin{align}
&(1)\quad\lossa(\wmin) = \lambda \bnm{\wmin}^2 +  (y_a-h_{\wmin}(\xb_a))^2 \ge \chi_{\max} ^2\bnm{\wmin}^2 +  (y_a-h_{\wmin}(\xb_a))^2 \ge  \min_{\wb}\,\chi_{\max} ^2\bnm{\wb}^2 +  (y_a-h_{\wb}(\xb_a))^2 = \ell^*, \nn\\
&(2) \quad \bnm{\wb^*(\mub^*(\Ta),\Tcal)-\wmin} \le 2 w_{\max} = \dfrac{ 4 (1 +CQ) y_{\max} H}{\lambda}\qquad \left(\text{Due to Lemma~\ref{lem:wmax}}\right),\nn\\
&(3) \quad \lambda \ge \chi_{\max} ^2.
\end{align}
\end{proof}
\vspace{-5mm}
\subsection{Proof of Proposition~\ref{cor:alpha-sub-lin}}
\label{app:cor-alpha-sub-lin}
\vspace{-1mm}
\begin{numproposition}{\ref{cor:alpha-sub-lin}}
Given $0< y_{\min} \le |y|~\le~y_{\max}$,~$h_{\wb}(\xb)=\wb^\top \xb$,  
$\bnm{\xb}\le x_{\max}$,  we set the regularizing coefficient as $\lambda \ge\max\big\{\xm^2,   { {16}(1+CQ)^2\ymx ^2 \xm^2}/{    { y_{\min} ^2 } }.\big\}$. Then $f(\Scal)$ is a $\alpha$-submodular set function, where 
\begin{align}
   \alpha  \ge \widehat{\alpha}_f =   1 -  \dfrac{ {16}(1+CQ)^2\ymx ^2 \xm^2}{\lambda   { y_{\min} ^2 } }. \label{eq:alphaLin-app}
\end{align}
\end{numproposition}
\begin{proof}
The proof exactly follows the previous proof, except in the highlighted part. 
We assume that: $\Scal \subset \Tcal$. Hence, $|\Tcal|>0$ and define $\lossa(\wb) = \lambda \bnm{\wb}^2 +  (y_a-h_{\wb}(\xb_a))^2$,  $\wmin= \text{argmin}_{\wb} \lossa(\wb)$; $\ell^* = \min_{a\in\Dcal} {\min_{\wb}} \chi_{\max} ^2 \bnm{\wb}^2 +  (y_a-h_{\wb}(\xb_a))^2 $.
Then, we have that:
\begin{align}
  \dfrac{ f(\Scal \cup \setel{a})  -f(\Scal)}{f(\Tcal \cup \setel{a}) -f(\Tcal)}
 &  {\ge} \dfrac{\lossa\left(\wb^*(\mub^*(\Scal),\Sa)\right)}{ \lossa\left(\wb^*(\mub^*(\Ta),\Tcal)\right) }\nn \\
&   {\ge} \dfrac{\lossa\left(\wmin\right)}{ \lossa\left(\wb^*(\mub^*(\Ta),\Tcal)\right) } \nn \\
&  {\ge} \dfrac{\lossa\left(\wb^*(\mub^*(\Tcal),\Ta)\right) - (\lambda + \chi_{\max} ^2) \bnm{\wmin  - \wb^*(\mub^*(\Ta),\Tcal)  }^2}{ \lossa\left(\wb^*(\mub^*(\Ta),\Tcal)\right) } \nn \\
&  {\ge} 1- \dfrac{ (\lambda + \chi_{\max} ^2) \bnm{\wmin  - \wb^*(\mub^*(\Ta),\Tcal)  }^2}{  \lossa\left(\wb^*(\mub^*(\Ta),\Tcal)\right)}\nn \\
&  {\ge} 1- \dfrac{ (\lambda + \chi_{\max} ^2) }{ \lossa(\wmin)}\bnm{\wmin  - \wb^*(\mub^*(\Ta),\Tcal)  }^2\nn \\
&  {\ge} 1- \dfrac{\hll{8} \lambda }{\ell^*} \dfrac{(1+CQ)^2\ymx ^2 \xm^2 }{\lambda^2} \nn
\end{align}
\begin{align}
& = 1-  \dfrac{\hll{8}(1+CQ)^2\ymx ^2 \xm^2}{{\lambda \ell^*}}\nn\\
& \ge    1 -  \dfrac{\hll{16}(1+CQ)^2\ymx ^2 \xm^2}{\lambda  \hll{$y_{\min} ^2$} }
 \end{align}
where the highlighted part is due to second part of Lemma~\ref{lem:wmax} which gives:
\begin{align}
&\bnm{\wb^*(\mub^*(\Ta),\Tcal)-\wmin} \le 2 w_{\max} = \dfrac{ 2 (1 +CQ) y_{\max} x_{\max}}{\lambda}; 
\end{align}
and,  Claim~\ref{claim:solw} which shows that
 $\ell^* = \frac{\lambda y_{\min} ^2}{\lambda+\xm^2} \ge  y_{\min} ^2/2$.
 \end{proof}
\vspace{-2mm}
\subsection{Proof of Proposition~\ref{prop:curv}}
\label{app:curv}
\vspace{-1mm}
\begin{numproposition}{~\ref{prop:curv}}
Given the assumptions stated in Theorem~\ref{thm:alpha-sub-nonlin}.  the generalized curvature 
$k_{f}(\Scal)$ for any set $\Scal$ satisfies $\kappa_f(\Scal) \le \widehat{\kappa}_f = 1-\dfrac{\ell^*}{(CQ+1)\ymx^2}$.
\end{numproposition}
\begin{proof}
Let us define: $\lossa(\wb) = \lambda \bnm{\wb}^2 +  (y_a-h_{\wb}(\xb_a))^2$,  $\wmin= \text{argmin}_{\wb} \lossa(\wb)$.
Finally, we denote $\ell^* = \min_{a\in\Dcal} {\min_{\wb}} \chi_{\max} ^2 \bnm{\wb}^2 +  (y_a-h_{\wb}(\xb_a))^2 $.
By definition, we have
$1-\kappa_f (\Scal) = \min_{a\in \Dcal} \frac{f(a|\Scal\cp \setel{a})}{ f(a|\emptyset)}$.
We show that,
from Lemma~\ref{lem:fbounds}, we have that:
\begin{align}
  f(a|\Scal\cp \setel{a}) & \ge \lambda\bnm{\wb^*(\mub^*(\Scal\cp \setel{a}),\Scal)}^2 +  (y_a-h_{\wb^*(\mub^*(\Scal\cp\setel{a}),\Scal)}(\xb_a))^2  \ge  \lossa(\wmin) \ge \ell^* \label{eq:kappa-1}
\end{align}
Next, we note that:
\begin{align}
 f(a|\emptyset) & = f(\setel{a}) -f(\es)\\
 & = \lambda \bnm{\wb^* (\mub^*(\setel{a}),\setel{a})}^2 + (y_a-h_{ \wb^* (\mub^*(\setel{a}),\setel{a}) }(\xb_a))^2\nn\\
&\qquad+ \sum_{q\in[Q]}  \mu^* _q (\setel{a})\sum_{j\in V_q}
\left[  \dfrac{(y_j-h_{ \wb^* (\mub^*(\setel{a}),\setel{a}) }(\xb_j))^2 }{|V_q|}-\delta\right]\nn
  - \sum_{q\in[Q]} \mu^* _q (\es)\sum_{j\in V_q}\left[  \dfrac{(y_j-h_{ \wb^* (\mub^*(\es),\es) }(\xb_j))^2 }{|V_q|}-\delta\right]\nn\\
   & \ov{(i)}{\le} \lambda \bnm{\wb^* (\mub^*(\setel{a}),\setel{a})}^2 + (y_a-h_{ \wb^* (\mub^*(\setel{a}),\setel{a}) }(\xb_a))^2 \nn\\
&\qquad+ \sum_{q\in[Q]}  \mu^* _q (\setel{a})\sum_{j\in V_q}\left[  \dfrac{(y_j-h_{ \wb^* (\mub^*(\setel{a}),\setel{a}) }(\xb_j))^2 }{|V_q|}-\delta\right]
- \sum_{q\in[Q]} \mu^* _q (\setel{a})\sum_{j\in V_q}\left[  \dfrac{(y_j-h_{ \wb^* (\mub^*(\es),\es) }(\xb_j))^2 }{|V_q|}-\delta\right] \nn\\
  & {=} \lambda \bnm{\wb^* (\mub^*(\setel{a}),\setel{a})}^2 + (y_a-h_{ \wb^* (\mub^*(\setel{a}),\setel{a}) }(\xb_a))^2\nn\\
&\qquad+ \sum_{q\in[Q]}  \mu^* _q (\setel{a})\sum_{j\in V_q}\left[  \dfrac{(y_j-h_{ \wb^* (\mub^*(\setel{a}),\setel{a}) }(\xb_j))^2 }{|V_q|}\right] - \sum_{q\in[Q]} \mu^* _q (\setel{a})\sum_{j\in V_q}\left[  \dfrac{(y_j-h_{ \wb^* (\mub^*(\es),\es) }(\xb_j))^2 }{|V_q|}\right] \nn\\
    &  {\le} \lambda \bnm{\wb^* (\mub^*(\setel{a}),\setel{a})}^2 + (y_a-h_{ \wb^* (\mub^*(\setel{a}),\setel{a}) }(\xb_a))^2\nn\\
&\qquad+ \sum_{q\in[Q]}  \mu^* _q (\setel{a})\sum_{j\in V_q}\left[  \dfrac{(y_j-h_{ \wb^* (\mub^*(\setel{a}),\setel{a}) }(\xb_j))^2 }{|V_q|}\right]\label{eq:dummy} \\
& \ov{(ii)}{\le} (CQ+1)\,\ymx^2 \label{eq:kappa-2}
\end{align}
Here, (i) is because $\mub^*(\es) =\argmax_{\mub}\sum_{q\in[Q]}  \mu  _q  \sum_{j\in V_q}\left[  \dfrac{(y_j-h_{ \wb^* (\mub ,\es) }(\xb_j))^2 }{|V_q|}-\delta\right]$, (ii) is obtained by putting $\wb=\bm{0}$ in Eq.~\eqref{eq:dummy} which is now at the minimum, \ie,
\begin{align}
\wb^* (\mub^*(\setel{a}),\setel{a}) =\argmin_{\wb} \lambda \bnm{\wb}^2 + (y_a-h_{\wb }(\xb_a))^2
 + \sum_{q\in[Q]}  \mu^* _q (\setel{a})\sum_{j\in V_q}\left[  \dfrac{(y_j-h_{\wb}(\xb_j))^2 }{|V_q|}\right]
\end{align}

Hence, Eqs.~\eqref{eq:kappa-1} and~\eqref{eq:kappa-2} show that, $\kappa_f(\Scal) \le 1-\dfrac{\ell^*}{(CQ+1)\,\ymx^2}$.
\end{proof}
%
\newpage
\subsection{Auxiliary Lemmas}

\begin{lemma} 
\label{lem:fbounds}
If $f(\cdot)$ defined in Eq.~\eqref{eq:def}, we have that
\begin{align}
f(\Scal \cup \setel{a}) -f(\Scal) \ge \lambda\bnm{\wb^*(\mub^*(\Scal),\Sa)}^2 +  (y_a-h_{\wb^*(\mub^*(\Scal),\Sa)}(\xb_a))^2.   
\end{align}
and, 
\begin{align}
f(\Scal \cup \setel{a}) -f(\Scal) \le \lambda\bnm{\wb^*(\mub^*(\Sa),\Scal)}^2 +  (y_a-h_{\wb^*(\mub^*(\Sa),\Scal)}(\xb_a))^2.   
\end{align}

\end{lemma}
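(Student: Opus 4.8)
The plan is to exploit the same max-min structure that was used in the proof of monotonicity (Proposition~\ref{prop:mon}), which already essentially produced the lower bound. For the first inequality, I would start from the chain of inequalities in the proof of Proposition~\ref{prop:mon}: writing $\Sa=\Scal\cup\set{a}$ and inserting/removing the optimizers, one gets
\begin{align}
f(\Sa)-f(\Scal)
&\ge F\!\left(\wb^*(\mub^*(\Scal),\Sa),\mub^*(\Scal),\Sa\right)-F\!\left(\wb^*(\mub^*(\Scal),\Scal),\mub^*(\Scal),\Scal\right)\nn\\
&\ge F\!\left(\wb^*(\mub^*(\Scal),\Sa),\mub^*(\Scal),\Sa\right)-F\!\left(\wb^*(\mub^*(\Scal),\Sa),\mub^*(\Scal),\Scal\right),\nn
\end{align}
where the first step uses that $\mub^*(\Sa)$ maximizes $F(\wb^*(\mub,\Sa),\mub,\Sa)$ over $\mub$ and the second uses that $\wb^*(\mub^*(\Scal),\Scal)$ minimizes $F(\cdot,\mub^*(\Scal),\Scal)$. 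Now the key observation is that the two terms on the right share the same $\wb$ and the same $\mub$, and $F(\wb,\mub,\Scal)$ as a function of $\Scal$ only changes through the sum $\sum_{i\in\Scal}[\lambda\|\wb\|^2+(y_i-h_\wb(\xb_i))^2]$ (the $\mub$-dependent validation term does not involve $\Scal$). Hence the difference telescopes to exactly the single added term $\lambda\|\wb^*(\mub^*(\Scal),\Sa)\|^2+(y_a-h_{\wb^*(\mub^*(\Scal),\Sa)}(\xb_a))^2$, giving the lower bound.

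For the second (upper bound) inequality, I would run the symmetric argument with the roles of $\Scal$ and $\Sa$ swapped in the "which optimizer to insert" choices. Concretely, bound
\begin{align}
f(\Sa)-f(\Scal)
&\le F\!\left(\wb^*(\mub^*(\Sa),\Sa),\mub^*(\Sa),\Sa\right)-F\!\left(\wb^*(\mub^*(\Sa),\Scal),\mub^*(\Sa),\Scal\right),\nn
\end{align}
using this time that $\wb^*(\mub^*(\Sa),\Scal)$ minimizes $F(\cdot,\mub^*(\Sa),\Scal)$ (so replacing it by $\wb^*(\mub^*(\Sa),\Sa)$ can only increase that subtracted term... wait—one must be careful with signs) — more precisely, I would upper-bound $f(\Sa)=\min_\wb F(\wb,\mub^*(\Sa),\Sa)$ is actually a max over $\mub$; the clean route is: $f(\Sa)\le \max_\mub F(\wb^*(\mub^*(\Sa),\Scal),\mub,\Sa)$? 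This is where care is needed, so I will instead argue directly: by weak duality / definition, $f(\Sa)= F(\wb^*(\mub^*(\Sa),\Sa),\mub^*(\Sa),\Sa)$ and $f(\Scal)=\max_{\mub}\min_\wb F(\wb,\mub,\Scal)\ge \min_\wb F(\wb,\mub^*(\Sa),\Scal)$, and then use that $\wb^*(\mub^*(\Sa),\Scal)$ is the minimizer of $F(\cdot,\mub^*(\Sa),\Scal)$ while $F(\wb^*(\mub^*(\Sa),\Scal),\mub^*(\Sa),\Sa)\ge F(\wb^*(\mub^*(\Sa),\Sa),\mub^*(\Sa),\Sa)$ since the latter $\wb$ minimizes $F(\cdot,\mub^*(\Sa),\Sa)$. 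Subtracting, the common-$\wb$, common-$\mub$ difference again telescopes over the single index $a$.

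The main obstacle is precisely getting the inequality directions consistent in the upper-bound half: one must choose at each step an optimizer that is genuinely optimal for the relevant $(\wb$ or $\mub$, set$)$ pair so that each replacement moves the expression in the correct direction, and then verify that the residual is exactly the $a$-indexed summand with no leftover cross terms — which works because the $\mub$-weighted validation penalty in $F$ is independent of $\Scal$ and the regularizer/loss sum over $\Scal$ is additive. Once the bookkeeping is pinned down, both bounds are immediate; no analytic estimates (Lipschitz, Hessian, etc.) are needed here.
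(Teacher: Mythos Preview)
Your proposal is correct and is essentially the paper's proof: for the lower bound you reproduce the monotonicity chain verbatim, and for the upper bound your ``direct'' route---lower-bounding $f(\Scal)$ by $F(\wb^*(\mub^*(\Sa),\Scal),\mub^*(\Sa),\Scal)$ via $\mub^*(\Scal)=\argmax_{\mub}$, then upper-bounding $F(\wb^*(\mub^*(\Sa),\Sa),\mub^*(\Sa),\Sa)$ by $F(\wb^*(\mub^*(\Sa),\Scal),\mub^*(\Sa),\Sa)$ via $\wb^*(\mub^*(\Sa),\Sa)=\argmin_{\wb}$, and telescoping---is exactly the two insertions the paper makes. Your mid-proposal hesitation was unwarranted; the signs all line up as you eventually wrote them.
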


\begin{proof}
The proof of the lower bound of the marginal gain
\begin{align}
 f(\Scal \cup \setel{a}) -f(\Scal) \ge \lambda\bnm{\wb^*(\mub^*(\Scal),\Sa)}^2 +  (y_a-h_{\wb^*(\mub^*(\Scal),\Sa)}(\xb_a))^2. 
\end{align}
follows from the proof of Proposition~\ref{prop:mon}. 

Next we prove that
\begin{align}
f(\Scal \cup \setel{a}) -f(\Scal) \le \lambda\bnm{\wb^*(\mub^*(\Sa),\Scal)}^2 +  (y_a-h_{\wb^*(\mub^*(\Sa),\Scal)}(\xb_a))^2.   
\end{align}
To show this, we prove that:
\begin{align}
  f(\Scal \cup & \setel{a}) -f(\Scal)\nn\\
  &  =  F\left( \wb^*(\mub^*(\Scal\cup\setel{a}),\Scal\cup \setel{a}), \mub^*(\Scal\cup \setel{a}),\Scal\cup\setel{a}\right)
-F\left( \wb^*(\mub^*(\Scal),\Scal), \mub^*(\Scal),\Scal\right) \\
& =   F\left( \wb^*(\mub^*(\Scal\cup\setel{a}),\Scal\cup \setel{a}), \mub^*(\Scal\cup \setel{a}),\Scal\cup\setel{a}\right)-
 F\left( \wb^*(\mub^*(\Sa),\Scal ), \mub^*(\Sa),\Scal\right)  \nn \\
&\quad + \ubr{ F\left( \wb^*(\mub^*(\Sa),\Scal ), \mub^*(\Sa),\Scal\right)
-F\left( \wb^*(\mub^*(\Scal),\Scal), \mub^*(\Scal),\Scal\right)}_{\le 0}\\
& \overset{(i)}{\le}  F\left( \wb^*(\mub^*(\Scal\cup\setel{a}),\Scal\cup \setel{a}), \mub^*(\Scal\cup \setel{a}),\Scal\cup\setel{a}\right)-
 F\left( \wb^*(\mub^*(\Sa),\Scal ), \mub^*(\Sa),\Scal\right)\\
& =   \ubr{F\left( \wb^*(\mub^*(\Scal\cup\setel{a}),\Scal\cup \setel{a}), \mub^*(\Scal\cup \setel{a}),\Scal\cup\setel{a}\right)
 - F\left( \wb^*(\mub^*(\Sa),\Scal ), \mub^*(\Sa),\Sa \right)}_{\le 0} \nn \\
 &\qquad  +F\left( \wb^*(\mub^*(\Sa),\Scal ), \mub^*(\Sa),\Sa \right) -
 F\left( \wb^*(\mub^*(\Sa),\Scal ), \mub^*(\Sa),\Scal\right)\nn\\
& \ov{(ii)}{\le} 
 F\left( \wb^*(\mub^*(\Sa),\Scal ), \mub^*(\Sa),\Sa \right) -
 F\left( \wb^*(\mub^*(\Sa),\Scal ), \mub^*(\Sa),\Scal\right)\nn\\
   &  = \sum_{i\in\Sa}  [ \lambda\bnm{\wb^*(\mub^*(\Sa),\Scal)}^2 +  (y_i-h_{\wb^*(\mub^*(\Sa),\Scal)}(\xb_i))^2]  \nn\\
&  \qquad  +   \sum_{q\in[Q]}   \mu^* _ q (\Sa)   \left[ \frac{\sum_{j\in V_q}(y_j-h_{\wb^*(\mub^*(\Sa),\Scal)}(\xb_j))^2}{|V_q|}  -  \delta\right]  \\
& \qquad - \sum_{i\in\Scal}  [ \lambda\bnm{\wb^*(\mub^*(\Sa),\Scal)}^2 +  (y_i-h_{\wb^*(\mub^*(\Sa),\Scal)}(\xb_i))^2]  \nn\\
&  \qquad \qquad  -  \sum_{q\in[Q]}   \mu^* _ q (\Sa)   \left[ \frac{\sum_{j\in V_q}(y_j-h_{\wb^*(\mub^*(\Sa),\Scal)}(\xb_j))^2}{|V_q|}  -  \delta\right]  \\
& = \lambda\bnm{\wb^*(\mub^*(\Sa),\Scal)}^2 +  (y_a-h_{\wb^*(\mub^*(\Sa),\Scal)}(\xb_a))^2.
\end{align}
Here (i) is due to the fact that,
\begin{align}
\mub^*(\Scal)  = \argmax_{\mub} F(\wb^*(\mub,\Scal), \mub, \Scal)  
\end{align}
and (ii) is due to the fact that:
\begin{align}
\wb^*(\mub^*(\Sa), \Sa)  = \argmin_{\wb} F(\wb, \mub^*(\Sa), \Sa)  
\end{align}

\end{proof}
\newpage
\begin{lemma}
\label{lem:wmax}
Given that $\Scal \neq \es$; $h_{\wb}(\xb)=0$ for $\wb=\bm{0}$; and, $h_{\wb}(\xb)$ is $H$-Lipschitz, \ie,
$|h_{\wb}(\xb)|\le H \bnm{w}$. Then, we have $\bnm{\wg}\le \dfrac{ 2 (1 +CQ) y_{\max} H}{\lambda} $.
Moreover, if $h_{\wb}(\xb) =\wb^\top \xb$, we have that $\bnm{\wg}\le \dfrac{(1 +CQ) y_{\max} x_{\max}}{\lambda}$.
Note that, for the linear model, we are able to exploit the structure of the model much better and therefore the bound is tighter.
\end{lemma}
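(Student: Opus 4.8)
The plan is to exploit that, for fixed $\mub$ with $\bm{0} \le \mub \le C\bm{1}$ and fixed $\Scal \ne \es$, the point $\wg$ is the unconstrained minimizer of $\dual(\cdot,\mub,\Scal)$ — which exists since the $\lambda\bnm{\wb}^2$ terms (with $|\Scal|\ge 1$) make $\dual$ coercive in $\wb$. I would combine two consequences: the value inequality $\dual(\wg,\mub,\Scal) \le \dual(\bm{0},\mub,\Scal)$ and the stationarity condition $\nabla_{\wb}\dual(\wg,\mub,\Scal) = \bm{0}$. Throughout, write $r_a \defeq y_a - h_{\wg}(\xb_a)$ and introduce weights $c_a = 1$ for $a\in\Scal$ and $c_a = \mu_q/|V_q|$ for $a\in V_q$, so that $\sum_a c_a = |\Scal| + \sum_{q} \mu_q \le |\Scal| + CQ$.

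First I would record the value bound. Since $h_{\bm{0}}(\xb)=0$ for all $\xb$, the additive term $-\delta\sum_q\mu_q$ of $\dual$ takes the same value at $\wg$ and at $\bm{0}$ and hence cancels in $\dual(\wg,\mub,\Scal)\le \dual(\bm{0},\mub,\Scal)$; dropping the nonnegative term $\lambda|\Scal|\bnm{\wg}^2$ and using $|y_a|\le \ymx$, $0\le\mu_q\le C$ leaves $\sum_a c_a r_a^2 \le \sum_a c_a y_a^2 \le (|\Scal|+CQ)\ymx^2$. Next, differentiating $\dual$ in $\wb$ at $\wg$ gives $\lambda|\Scal|\,\wg = \sum_a c_a\, r_a\,\nabla_{\wb}h_{\wg}(\xb_a)$. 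Taking norms, using the Lipschitz hypothesis in the form $\bnm{\nabla_{\wb}h_{\wg}(\xb_a)} \le H$, and applying Cauchy--Schwarz with the weights $c_a$,
\[
\lambda|\Scal|\,\bnm{\wg} \;\le\; H\sum_a c_a|r_a| \;\le\; H\Big(\textstyle\sum_a c_a\Big)^{\!1/2}\Big(\textstyle\sum_a c_a r_a^2\Big)^{\!1/2} \;\le\; H\,(|\Scal|+CQ)\,\ymx .
\]
Dividing by $\lambda|\Scal|$ and using $|\Scal|\ge 1$ yields $\bnm{\wg} \le \frac{(|\Scal|+CQ)\ymx H}{\lambda|\Scal|} \le \frac{(1+CQ)\ymx H}{\lambda}$, which is even slightly stronger than the stated bound (the factor $2$ is slack; it also appears via the cruder route of bounding $|r_a|\le 2\ymx$ termwise, after noting $|h_{\wg}(\xb)|\le H\bnm{\wg}\le \ymx$ once $\lambda \ge (1+CQ)H^2$).

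For the linear model $h_{\wb}(\xb)=\wb^\top\xb$ the same stationarity condition becomes the normal equation $M\wg = b$ with $M = \lambda|\Scal| I + \sum_{i\in\Scal}\xb_i\xb_i^\top + \sum_q \tfrac{\mu_q}{|V_q|}\sum_{j\in V_q}\xb_j\xb_j^\top \succeq \lambda|\Scal| I$ and $b = \sum_{i\in\Scal} y_i\xb_i + \sum_q\tfrac{\mu_q}{|V_q|}\sum_{j\in V_q} y_j\xb_j$. The point is that the $\wg$-dependence is absorbed entirely into $M$, so $\bnm{b}$ is bounded with no self-reference: $\bnm{b} \le \big(|\Scal| + \sum_q\mu_q\big)\ymx\xm \le (|\Scal|+CQ)\ymx\xm$, and hence $\bnm{\wg} = \bnm{M^{-1}b} \le \bnm{b}/(\lambda|\Scal|) \le \frac{(1+CQ)\ymx\xm}{\lambda}$, the tighter bound.

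The main obstacle is the nonlinear case: a priori $h_{\wg}$ need not be bounded, so the residuals $r_a$ could be large, and the stationarity identity on its own would only give a $\lambda^{-1/2}$ rate for $\bnm{\wg}$ — too weak for the downstream use in Theorem~\ref{thm:alpha-sub-nonlin}, which needs $\alpha\to 1$ as $\lambda\to\infty$. The fix is precisely the value inequality $\dual(\wg,\mub,\Scal)\le \dual(\bm{0},\mub,\Scal)$: it controls $\sum_a c_a r_a^2$, and funneling this through Cauchy--Schwarz into the stationarity identity upgrades the estimate to the required $\lambda^{-1}$ rate. A minor point to flag is that the Lipschitz assumption is used in the gradient form $\bnm{\nabla_{\wb}h_{\wg}(\xb)}\le H$, of which the stated ``$|h_{\wb}(\xb)|\le H\bnm{\wb}$'' is a consequence.
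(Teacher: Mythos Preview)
Your linear-model argument is exactly the paper's: write the normal equation, bound the right-hand side by $(|\Scal|+CQ)\ymx\xm$, and use $M\succeq \lambda|\Scal| I$.

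For the nonlinear case your route diverges from the paper's, and the difference matters. The paper uses \emph{only} the value inequality $\dual(\wg,\mub,\Scal)\le \dual(\bm{0},\mub,\Scal)$: it expands each $(y-h_{\wg}(\xb))^2 = y^2 - 2y\,h_{\wg}(\xb) + h_{\wg}(\xb)^2$, drops the nonnegative $h_{\wg}^2$ terms, identifies the remainder as $\dual(\bm{0},\mub,\Scal)$ plus $\lambda|\Scal|\bnm{\wg}^2$ minus the cross terms, and then bounds $|y\,h_{\wg}(\xb)|\le \ymx H\bnm{\wg}$ directly from the stated hypothesis $|h_{\wb}(\xb)|\le H\bnm{\wb}$. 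This yields $\lambda|\Scal|\,\bnm{\wg}^2 \le 2(|\Scal|+CQ)\ymx H\,\bnm{\wg}$, hence the bound with the factor $2$. No stationarity, no Cauchy--Schwarz, and crucially no gradient bound on $h_{\wb}$.

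Your argument instead combines stationarity with the value inequality (used to control $\sum c_a r_a^2$) via Cauchy--Schwarz. This buys you the tighter constant (no factor $2$), but at the cost of invoking $\bnm{\nabla_{\wb}h_{\wg}(\xb)}\le H$. You flag this honestly, but note that the lemma's hypothesis is stated as $|h_{\wb}(\xb)|\le H\bnm{\wb}$, which does \emph{not} imply the gradient bound (the implication runs the other way). So as written, your proof does not establish the lemma under its stated hypotheses; the paper's more elementary expansion avoids the issue entirely. If you want to keep your sharper constant, you would need to strengthen the lemma's hypothesis to genuine $H$-Lipschitzness in $\wb$.
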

\vspace{-2mm}
\begin{proof}
First we define $\dehz(\xb)=\deh|_{\wb=\bm{0}}$.
\begin{align}
&F(\wg ,\mub,\Scal)\nn\\
&= \lambda \bnm{\wg}^2  |\Scal|+
\sum_{i\in\Scal} (y_i-h_{\wg} (\xb_i))^2 
+ \sum_{q\in[Q]} \frac{\mu_q}{|V_q|} \sum_{j\in V_q} \left[{(y_j-h_{\wg} (\xb_j))^2-\delta}\right] \nn\\ 
&= \lambda \bnm{\wg}^2  |\Scal| + \sum_{i\in\Scal} y_i ^2 +\sum_{q\in [Q]}\frac{\mu_q}{V_q} \sum_{j\in V_q}y_j ^2-\sumv \delta\nn\\
&\quad - 2\sum_{i\in\Scal} y_i h_{\wg}(\xb_i) -2  \sumv y_j h_{\wg}(\xb_j) + \underbrace{\sum_{i\in\Scal}  h^2 _{\wg}(\xb_i)
+ \sumv h^2 _{\wg}(\xb_j)}_{\ge 0}\nn\\[-4ex]
&\overset{(i)}{\ge} \lambda \bnm{\wg}^2  |\Scal| +\overbrace{\sum_{i\in\Scal} y_i ^2 +\sum_{q\in [Q]}\frac{\mu_q}{V_q} \sum_{j\in V_q}y_j ^2-\sumv \delta}^{F(\bm{0},\mu,\Scal)}\nn\\
&\quad - 2\sum_{i\in\Scal} y_i h_{\wg}(\xb_i) -2  \sumv y_j h_{\wg}(\xb_j).  \label{eq:pp1}
\end{align}
Here (i) is due to $\displaystyle \sum_{i\in\Scal}  h^2 _{\wg}(\xb_i)
+ \sumv h^2 _{\wg}(\xb_j) \ge 0$. 
Now since $\displaystyle F(\bm{0},\mu,\Scal)=\sum_{i\in\Scal} y_i ^2 +\sum_{q\in [Q]}\frac{\mu_q}{V_q} \sum_{j\in V_q}y_j ^2-\sumv \delta$, Eq.~\eqref{eq:pp1} gives us:
\begin{align}
 F(\wg,\mub,\Scal) - F(\bm{0},\mu,\Scal)& \ge \lambda \bnm{\wg}^2  |\Scal|\nn\\[1ex]   
 &\ - 2\sum_{i\in\Scal} y_i h_{\wg}(\xb_i) -2  \sumv y_j h_{\wg}(\xb_j) \label{eq:pp2}
\end{align}
Now since $\wg =\argmin_{\wb} F(\wb,\mub,\Scal)$, we have that
$F(\wg,\mub,\Scal) \le F(\bm{0},\mu,\Scal)$. Then, Eq.~\eqref{eq:pp2} implies that
\begin{align}
 &\lambda \bnm{\wg}^2  |\Scal|  
 - 2\sum_{i\in\Scal} y_i h_{\wg}(\xb_i) -2  \sumv y_j h_{\wg}(\xb_j) \le 0\nn\\
& \overset{(i)}{\implies}  \lambda \bnm{\wg}^2  |\Scal|  \le  2 |\Scal|\,  y_{\max} H \bnm{\wg} + 2  \sumv y_{\max} H   \bnm{\wg}\nn\\
&\hspace{3.6cm}  \le  2 (|\Scal| +CQ) y_{\max} H \bnm{\wg} \nn\\
&\implies  \bnm{\wg} \le
\frac {2 (|\Scal| +CQ) y_{\max} H}{\lambda |\Scal|} \le \frac{ 2 (1 +CQ) y_{\max} H}{\lambda} 
 \end{align}
 Here $(i)$ is due to $H$-Lipschitzness of $\hb_{\wg}(\xb)$.
 For linear model, we have $H=x_{\max}$. However, we use the structure of the model to obtain a better bound. More specifically, for linear model, we have:
 \begin{align}
 \wg &= \left(\lambda |\Scal| \II + \sum_{i\in\Scal} \xb_i \xb _i ^\top +\sumv \xb_j \xb _j ^\top \right)^{-1} 
 \left(\sum_{i\in\Scal} y_i \xb_i +\sumv y_j \xb_j\right) \nn\\
 {\implies} \bnm{\wg}& \le \frac{(|\Scal|+CQ)x_{\max} y_{\max}} {\text{Eig}_{\min}\left(\lambda |\Scal| \II + \sum_{i\in\Scal} \xb_i \xb _i ^\top +\sumv \xb_j \xb _j ^\top \right)}\nn\\
& \le  \frac{(|\Scal|+CQ)x_{\max} y_{\max}}{\lambda|\Scal|}\nn\\
 &\le \frac{(1+CQ)x_{\max} y_{\max}}{\lambda}.
 \end{align}
\end{proof}
 \begin{claim}
 \label{claim:solw}
$\min_{\wb} [\lambda \bnm{\wb}^2 +  (y_a-\wb^\top\xb_a)^2] = \frac{\lambda y_a ^2}{\lambda + \bnm{\xb_a}^2} $ 
 \end{claim}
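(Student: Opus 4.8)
The plan is to minimize the one-dimensional-in-spirit quadratic $\phi(\wb) = \lambda\bnm{\wb}^2 + (y_a - \wb^\top\xb_a)^2$ directly by computing the gradient and setting it to zero. First I would note that $\phi$ is a strictly convex function of $\wb$ (the Hessian is $2\lambda\II + 2\xb_a\xb_a^\top \succ 0$ since $\lambda > 0$), so the unique minimizer is the stationary point. Taking the gradient gives $\nabla\phi(\wb) = 2\lambda\wb - 2(y_a - \wb^\top\xb_a)\xb_a = 0$, which rearranges to $\lambda\wb = (y_a - \wb^\top\xb_a)\xb_a$. This shows the optimal $\wb$ is a scalar multiple of $\xb_a$, say $\wb = c\,\xb_a$.

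Next I would substitute $\wb = c\,\xb_a$ back into the stationarity condition to solve for the scalar $c$. We get $\lambda c\,\xb_a = (y_a - c\bnm{\xb_a}^2)\xb_a$, hence $\lambda c = y_a - c\bnm{\xb_a}^2$, giving $c = \dfrac{y_a}{\lambda + \bnm{\xb_a}^2}$. Then the residual is $y_a - \wb^\top\xb_a = y_a - c\bnm{\xb_a}^2 = y_a\left(1 - \dfrac{\bnm{\xb_a}^2}{\lambda + \bnm{\xb_a}^2}\right) = \dfrac{\lambda y_a}{\lambda + \bnm{\xb_a}^2}$, and $\bnm{\wb}^2 = c^2\bnm{\xb_a}^2 = \dfrac{y_a^2\bnm{\xb_a}^2}{(\lambda + \bnm{\xb_a}^2)^2}$.

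Finally I would plug these into $\phi$:
\begin{align}
\phi(\wb^*) &= \lambda\cdot\frac{y_a^2\bnm{\xb_a}^2}{(\lambda + \bnm{\xb_a}^2)^2} + \frac{\lambda^2 y_a^2}{(\lambda + \bnm{\xb_a}^2)^2} \nn\\
&= \frac{\lambda y_a^2(\bnm{\xb_a}^2 + \lambda)}{(\lambda + \bnm{\xb_a}^2)^2} = \frac{\lambda y_a^2}{\lambda + \bnm{\xb_a}^2},
\end{align}
which is the claimed value. There is essentially no obstacle here; the only thing to be slightly careful about is justifying that the minimizer lies in the span of $\xb_a$ (immediate from the stationarity equation, or alternatively one could note that any component of $\wb$ orthogonal to $\xb_a$ only increases the regularizer without affecting the fit term, so it must vanish at the optimum). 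An alternative quicker route is to recognize this as ridge regression with a single data point and apply the closed form $\wb^* = (\lambda\II + \xb_a\xb_a^\top)^{-1}y_a\xb_a$ together with the Sherman–Morrison identity, but the direct stationarity computation above is cleaner and self-contained.
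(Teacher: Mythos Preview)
Your proof is correct. It differs from the paper's argument in a mild but genuine way: the paper writes down the ridge-regression closed form $\wmin = y_a(\lambda\II + \xb_a\xb_a^\top)^{-1}\xb_a$, substitutes it into the objective, and then invokes the Sherman--Morrison identity to simplify $\xb_a^\top(\lambda\II + \xb_a\xb_a^\top)^{-1}\xb_a$. You instead observe from the stationarity equation that the optimum must lie in the span of $\xb_a$, reduce to a one-variable problem in the scalar $c$, and compute directly. Your route is more elementary and self-contained (no matrix inversion or rank-one update formula needed), while the paper's route is the generic ridge-regression computation specialized to a single sample. Amusingly, you already flag the paper's method as the ``alternative quicker route'' in your final remark, so you are aware of both; either argument is perfectly adequate here.
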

 \begin{proof}
  We note that:
  \begin{align}
   \wmin = y_a(\lambda+\xb_a \xb_a ^\top)^{-1}\xb_a
  \end{align}
Hence, we have that:
\begin{align}
 \lambda \bnm{\wmin}^2   +  (y_a-\wmin ^\top\xb_a)^2  
& = y_a ^2 -2 y_a\wmin ^\top \xb_a + \wmin^\top (\lambda \II + \xb_a \xb_a ^\top) \wmin\\
 & = y_a ^2 -y_a\wmin ^\top \xb_a\\
 & = y_a ^2 -y_a ^2 \xb_a ^\top (\lambda+\xb_a \xb_a ^\top)^{-1} \xb_a\\
 & = y_a ^2 - y_a ^2 \xb_a ^\top \left[ \dfrac{1}{\lambda} - \dfrac{\xb_a \xb_a ^\top/\lambda^2}{1+\xb_a ^\top \xb_a/\lambda}  \right] \xb_a \ \   \text{(Due to Sherman Morrison formula)}\\
 & = \frac{\lambda y_a ^2}{\lambda + \bnm{\xb_a}^2} \nn\\
 &\ge \frac{\lambda y_{\min} ^2}{\lambda + x_{\max}^2}
\end{align}
 \end{proof}

\newpage
\section{Proofs of the technical results in Section~\ref{sec:algo}}
\label{app:algo-proofs}
\subsection{Proof of Lemma~\ref{mod-upper-weaksubmod}}
\label{app:modupper}
\begin{numlemma}{\ref{mod-upper-weaksubmod}}
Given a fixed set $\Schat$ and an $\alpha$-submodular function $f(\Scal)$, let the modular function $\mfs$ be defined as follows:
\begin{align}
\hspace{-4mm}\mfs= &    f(\Schat) -   \sum_{i \in \Schat}  \alpha f(i | \Schat \backslash \set{i})\nn\\[-1ex]
&+ \sum_{i \in \Schat \cap \Scal} \alpha f(i | \Schat \backslash \set{i}) +   \sum_{i \in \Scal \backslash \Schat} \frac{f(i | \emptyset)}{\alpha} .\hspace{-1mm}\label{eq:mdef-app}
\end{align}
Then, $f(\Scal) \le \mfs$  for all $\Scal\subseteq\Dcal$.
\end{numlemma}
\begin{proof}
 Recall that $f$ is $\alpha$-submodular with coefficient $\alphaf$ if $f(a | \Scal) \geq \alphaf f(a | \Tcal), a \notin \Tcal, \Scal \subseteq \Tcal$. Given this, the following inequalities follow  directly from:
 \begin{align}\label{temp-eq12}
     \alphaf [f(\Scal) - f(\Schat \cap \Scal)] \leq \sum_{i \in \Scal \backslash \Schat} f(i | \emptyset)
 \end{align}
 and similarly, 
  \begin{align}\label{temp-eq123}
     [f(\Schat) - f(\Schat \cap \Scal)] \geq \alphaf \sum_{i \in \Schat \backslash \Scal} f(i | \Schat \backslash i)
 \end{align}
 The inequalities above hold by considering a chain of sets from $\Schat \cap \Scal$ to either $\Schat$ or $\Scal$ and applying the weak-submodularity definition by considering sets $\Scal$ and $\Tcal$ appropriately. We then multiply $-1$ to inequality~\eqref{temp-eq123}, multiply $1/\alphaf$ to equation~\eqref{temp-eq12} and add both of them together. We then achieve:
 \begin{align}
     f(\Scal) \leq f(\Schat) - \alphaf \sum_{i \in \Schat \backslash \Scal} f(i | \Schat \backslash i) + \frac{1}{\alphaf} \sum_{i \in \Scal \backslash \Schat} f(i | \emptyset)
 \end{align}
 Rearranging this, we get the expression for the Lemma. 
\end{proof}

\subsection{Proof of Theorem~\ref{thm:a0}}
\label{app:a0}

\begin{numtheorem}{\ref{thm:a0}}
If the training algorithm in Algorithm~\ref{alg:selcon} (lines \ftrain, \strain, \ttrain)
provides perfect  estimates of the model parameters, it obtains a set $\widehat{\Scal}$ which satisfies:
\begin{align}
    f(\widehat{\Scal}) \leq \frac{k}{\af (1 + (k-1)(1 - \kf)\af)} f(\Scal^*)
\end{align}
where $\alphaf$ and $\kappaf$ are as stated in Theorem~\ref{thm:alpha-sub-nonlin} and   Proposition~\ref{prop:curv} respectively.
\end{numtheorem}

\begin{proof}
 From the definition of $\alpha$-submodularity, note that $\alphaf f(\Scal) \leq \sum_{i \in \Scal} f(i)$. Next, we can obtain the following inequality for any $k \in \Scal$ using weak submodularity:
\begin{align}
    f(\Scal) - f(k) \geq \alphaf \sum_{j \in \Scal \backslash k}( f(j | \Scal \backslash j)
\end{align}
We can add this up for all $k \in \Scal$ and obtain:
\begin{align}
    |\Scal| f(\Scal) - \sum_{k \in \Scal} f(k) &\geq \alphaf \sum_{k \in \Scal} \sum_{j \in \Scal \backslash k}( f(j | \Scal \backslash j) \nn \\
    &\geq \alphaf (|\Scal| - 1) \sum_{k \in \Scal} f(k | \Scal \backslash k)
\end{align}
Finally, from the definition of curvature, note that $f(k | \Scal \backslash k) \leq (1 - \kappaf) f(k)$.  Combining all this together, we obtain:
\begin{align}
    |\Scal| f(\Scal) \geq (1 + \alphaf (1 - \kappaf)(|S| - 1)) \sum_{j \in \Scal} f(j)
\end{align}
which implies:
\begin{align}
    \sum_{j \in \Scal} f(j) \leq \frac{|\Scal|}{1 + \alphaf (1 - \kappaf)(|S| - 1)} f(\Scal)
\end{align}
Combining this with the fact that $\alphaf f(\Scal) \leq \sum_{i \in \Scal} f(i)$, we obtain that:
\begin{align}
f(\Scal) \leq \frac{1}{\alphaf} \sum_{i \in \Scal} f(i) \leq \frac{|\Scal|}{\alphaf(1 + \alphaf (1 - \kappaf)(|S| - 1))} f(\Scal)
\end{align}
The approximation guarantee then follows from some simple observations. In particular, given an approximation
\begin{align}
    m^f(\Scal) = \frac{1}{\alphaf} \sum_{i \in \Scal} f(i)
\end{align}
which satisfies $f(\Scal) \leq m^f(\Scal) \leq \beta_f f(\Scal)$, we claim that optimizing $m^f$ essentially gives a $\beta_f$ approximation factor. To prove this, let $\Scal^*$ be the optimal subset, and $\hat{\Scal}$ be the subset obtained after optimizing $m^f$. The following chain of inequalities holds:
\begin{align}
    f(\hat{\Scal}) \leq m^f(\hat{\Scal}) \leq m^f(\Scal^*) \leq \beta_f f(\Scal^*)
\end{align}
This shows that $\hat{\Scal}$ is a $\beta_f$ approximation of $\Scal^*$. Finally, note that this is just the first iteration of \our, and with subsequent iterations, \our is guaranteed to reduce the objective value (see Appendix~\ref{app:conv-prop}).
\end{proof}

\subsection{Proof of Theorem~\ref{thm:a1}}
\label{app:a1}
\begin{numtheorem}{\ref{thm:a1}}
If the training algorithm (lines \ftrain, \strain, \ttrain) in Algorithm~\ref{alg:selcon} 
provides imperfect estimates, so that $\bnm{F(\west,\muest,\Scal)- F(\wb^*(\mub^*(\Scal),\Scal),\mub^*(\Scal),\Scal)} \le \epsilon$\ for any $\Scal$,
then Algorithm~\ref{alg:selcon} obtains a set $\widehat{\Scal}$ that satisfies:
\begin{align}
    f(\widehat{\Scal}) \leq \left(\dfrac{k}{\af(1 + (k-1)(1 - \kf)\af)} +  \frac{2k\epsilon}{\ell}\right) f(S^*),\nn
\end{align}
where $\ell= \min_{a\in\Dcal}\min_{\wb} \lambda||\wb||^2 + (y_i-h_{\wb}(\xb_i))^2 $, $\alphaf$ and  $\kappaf$ are obtained in Theorem~\ref{thm:alpha-sub-nonlin} and Proposition~\ref{prop:curv}, respectively.
\end{numtheorem}
\begin{proof}
Define:
\begin{align}
    \beta_f = \dfrac{k}{(1 + (k-1)(1 - \kf)\af)}
\end{align}
and also define, $\hat{f}(\Scal) = F(\west,\muest,\Scal)$ and $f(\Scal) = F(\wb^*(\mub^*(\Scal),\Scal),\mub^*(\Scal),\Scal)$. Note that instead of having access to $f$, the algorithm has access to $\hat{f}$ which satisfies:
\begin{align}
    |f(\Scal) - \hat{f}(\Scal)| \leq \epsilon, \forall \Scal
\end{align}
Let us assume that $\hat{f}$ is always smaller compared to $f$, i.e. in other words,
\begin{align}
    f(\Scal) \leq \hat{f}(\Scal) \leq f(\Scal) + \epsilon
\end{align}
Combining this with the fact that:
\begin{align}
    f(\Scal) \leq \frac{1}{\alphaf} \sum_{j \in \Scal} f(j) \leq \frac{\beta_f}{\alphaf} f(\Scal)
\end{align}
we obtain the following chain of inequalities:
\begin{align}
    f(\Scal) \leq \frac{1}{\alphaf} \sum_{j \in \Scal} f(j) \leq \frac{1}{\alphaf} \sum_{j \in \Scal} [\hat{f}(j)] \leq \frac{1}{\alphaf} \sum_{j \in \Scal} [f(j) + \epsilon] \leq \frac{\beta_f}{\alphaf} f(\Scal) + \frac{k \epsilon}{\alphaf}
\end{align}
where $|\Scal| = k$. Finally, we get the approximation factor by dividing by a lower bound of $l = \min_{\Scal: |\Scal| = k} f(\Scal)$ which can be obtained via a very similar proof technique to the weak submodularity and curvature results. Hence we get the final approximation factor as $\frac{\beta_f}{\alphaf} + \frac{k \epsilon}{l\alphaf}$.

We end by pointing out that we can get a similar result even if we do not assume that $\hat{f}$ is always smaller compared to $f$ and in fact, assume the more general condition:
\begin{align}
    f(\Scal) - \epsilon \leq \hat{f}(\Scal) \leq f(\Scal) + \epsilon
\end{align}
The only difference is we have an additional factor of 2 in the additive bound. In particular, we get the following chain of inequalities:
\begin{align}
    f(\Scal) \leq \frac{1}{\alphaf} \sum_{j \in \Scal} f(j) \leq \frac{1}{\alphaf} \sum_{j \in \Scal} [\hat{f}(j) + \epsilon] \leq \frac{1}{\alphaf} \sum_{j \in \Scal} [f(j) + 2\epsilon] \leq \frac{\beta_f}{\alphaf} f(\Scal) + \frac{2k \epsilon}{\alphaf}
\end{align}
The chain of inequalities holds because $f(j) \leq \hat{f}(j) + \epsilon$ and $\hat{f}(j) \leq f(j) + \epsilon$.
\end{proof}

%

\subsection{Convergence property}
\label{app:conv-prop}
We begin this section by showing that \our is guaranteed to reduce the objective value at every iteration as long as we obtain perfect solutions from the training algorithm (lines 3, 6, 8 in Algorithm~\ref{alg:selcon}).
\begin{lemma}
\our (Algorithm~\ref{alg:selcon}) is guaranteed to reduce the objective value of $f$ at every iteration as long as we obtain perfect solutions from the training sub-routine.
\end{lemma}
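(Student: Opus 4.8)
The plan is to recognize \our as a textbook majorization--minimization (MM) scheme for minimizing $f$, and to invoke the standard ``sandwich'' argument; the only extra work is (i) checking that the modular surrogate is tight at the anchor set and (ii) checking that the ``$k$ smallest elements'' step is an \emph{exact} minimization of that surrogate. First I would recall Lemma~\ref{mod-upper-weaksubmod}: for any fixed set $\widehat{\Scal}$, the modular function $m_{\widehat{\Scal}}(\Scal)$ of Eq.~\eqref{eq:mdef} is a global upper bound, $f(\Scal)\le m_{\widehat{\Scal}}(\Scal)$ for all $\Scal\subseteq\Dcal$ (the majorization step). Then I would establish \emph{tightness at the anchor}: substituting $\Scal=\widehat{\Scal}$ into Eq.~\eqref{eq:mdef}, the term $\sum_{i\in\widehat{\Scal}\cap\widehat{\Scal}}\alpha f(i\mid\widehat{\Scal}\setminus\set{i})$ cancels exactly against $-\sum_{i\in\widehat{\Scal}}\alpha f(i\mid\widehat{\Scal}\setminus\set{i})$, and $\sum_{i\in\widehat{\Scal}\setminus\widehat{\Scal}}f(i\mid\emptyset)/\alpha$ is an empty sum, so $m_{\widehat{\Scal}}(\widehat{\Scal})=f(\widehat{\Scal})$.

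Next I would argue that one iteration of \our performs an exact minimization of this surrogate over $k$-subsets. Fix iteration $l$ with current estimate $\widehat{\Scal}=\Scal^{(l-1)}$. Under the perfect-training hypothesis the routine returns exact optimizers, so the quantities the algorithm computes are the true marginal gains: the weight assigned to $i\in\widehat{\Scal}$ is $\alpha f(i\mid\widehat{\Scal}\setminus\set{i})$ and the weight assigned to $i\notin\widehat{\Scal}$ is $f(i\mid\emptyset)/\alpha$. Reading Eq.~\eqref{eq:mdef} coordinatewise, $m_{\widehat{\Scal}}(\Scal)=c(\widehat{\Scal})+\sum_{i\in\Scal}m[i]$, where $c(\widehat{\Scal})=f(\widehat{\Scal})-\sum_{i\in\widehat{\Scal}}\alpha f(i\mid\widehat{\Scal}\setminus\set{i})$ is independent of $\Scal$ and $m[i]$ is exactly the weight above. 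Hence minimizing $m_{\widehat{\Scal}}$ over $\set{\Scal:|\Scal|=k}$ amounts to selecting the $k$ indices of smallest $m[i]$, which is precisely the update rule; call the output $\Scal^{(l)}$. Since $\widehat{\Scal}$ is itself a feasible $k$-subset, $m_{\widehat{\Scal}}(\Scal^{(l)})\le m_{\widehat{\Scal}}(\widehat{\Scal})$.

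Finally I would chain the three facts:
\begin{align}
f(\Scal^{(l)})\;\le\; m_{\widehat{\Scal}}(\Scal^{(l)})\;\le\; m_{\widehat{\Scal}}(\widehat{\Scal})\;=\;f(\widehat{\Scal})\;=\;f(\Scal^{(l-1)}),
\end{align}
where the first inequality is the majorization bound (Lemma~\ref{mod-upper-weaksubmod}), the second is optimality of the update, and the equality is tightness at the anchor. This gives $f(\Scal^{(l)})\le f(\Scal^{(l-1)})$, i.e.\ the objective is non-increasing along the iterates, which is the claim. As a remark, combined with the fact that $\min_{|\Scal|=k}f(\Scal)$ is bounded below by a positive constant (the same lower bound used in the proof of Theorem~\ref{thm:a1}), monotonicity plus boundedness below yields convergence of the objective sequence.

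The only genuinely non-routine step is the bookkeeping in the second paragraph: one must verify term-by-term that the per-coordinate weights $m[i]$ actually computed by Algorithm~\ref{alg:selcon} match the $\Scal$-dependent part of Eq.~\eqref{eq:mdef} (so that ``pick the $k$ smallest'' is an exact, not heuristic, minimizer of the surrogate), and that ``perfect training'' is exactly the hypothesis that lets us replace the estimated objective $\widehat{f}$ by $f$ throughout this identification. Everything else is the standard MM monotonicity sandwich, so I do not expect any real obstacle.
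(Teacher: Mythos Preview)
Your proposal is correct and follows essentially the same majorization--minimization sandwich argument as the paper's own proof: both rely on the chain $f(\Scal^{(l)})\le m_{\widehat{\Scal}}(\Scal^{(l)})\le m_{\widehat{\Scal}}(\widehat{\Scal})=f(\widehat{\Scal})$, using Lemma~\ref{mod-upper-weaksubmod} for the first inequality, optimality of the ``$k$ smallest'' update for the second, and tightness of the surrogate at the anchor for the equality. Your version is in fact more careful than the paper's, since you explicitly verify the tightness identity and spell out why the per-coordinate weights computed in Algorithm~\ref{alg:selcon} make the update an \emph{exact} minimizer of $m_{\widehat{\Scal}}$ over $k$-subsets, whereas the paper simply asserts both facts.
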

\begin{proof}
\our essentially uses modular upper bounds $m^f$ of $f$ at every iteration. Denote $\Scal_l$ as the set obtained in the $l$th iteration and let $\Scal_{l+1}$ be the one from the $l+1$th iteration. Then the following chain of inequalities hold:
\begin{align}
    f(\Scal_{l+1}) \leq m^f(\Scal_{l+1}) \leq m^f(\Scal_l) = f(\Scal_l)
\end{align}
The first inequality holds because $m^f$ is a modular upper bound, the second inequality holds because $\Scal_{l+1}$ is the solution of minimizing $m^f$ (and hence $m^f(\Scal_{l=1})$ is lower in value compared to $m^f(\Scal_l)$). The last equality holds because $m^f$ is a modular upper bound which is tight at $\Scal_l$ and hence $m^f(\Scal_l) = f(\Scal_l)$. This shows that $f(\Scal_{l+1}) \leq f(\Scal_{l})$.
\end{proof}
We end this section by pointing out that this chain of inequalities does not hold if we get inexact or approximation solutions to the training sub-routine.
%
In practice, we observe that the objective value of $f$ still reduces even though we obtain only inexact solutions since the inexact solutions are often close to the true solutions of the training step. 

\newpage
\section{Additional details about experimental setup}
\label{app:expsetup}
\subsection{Dataset details}

\begin{itemize}
    \item \textbf{Cadata:} California housing dataset is obtained from the LIBSVM package~\footnote{\url{https://www.csie.ntu.edu.tw/~cjlin/libsvmtools/datasets/regression.html}}. This spatial dataset contains 20,640 observations on housing prices with 9 economic covariates. As described in \cite{pace1997sparse}, here $\xb$ are information about households in a block, say median age, median income, total rooms/population, bedrooms/population, population/households,households etc. and $y$ is median price in median housing prices by all California census blocks. It has $\mathrm{dimension}(\xb)=$ 8.
    \item \textbf{Law:} This refers to the dataset on Law School Admissions Council’s National Longitudinal Bar Passage Study~\cite{law}. Here $\xb$ is information about a law student, including information on gender, race, family income, age, {\em etc.} and $y$ indicates GPA normalised to $[0,1]$ . We use race as a protected attribute for the fairness experiments.
    It has $\mathrm{dimension}(\xb)=$ 10.
    \item \textbf{NYSE-High:}  This dataset is obtained from the New York stock exchange (NYSE)~\footnote{\url{https://www.kaggle.com/dgawlik/nyse}} dataset as follows.
    Given the set $\set{s_i}$ with $s_i$ corresponding to the the highest stock price of the $i^{th}$ day, we define $s_{k+1} = \sum_{i\in[100]}w_i s_{k+1-i}$. Here $y_k=s_{k+1}$ and $\xb_k = [s_{k}, s_{k-1},...,s_{k-99}]$. This dataset has $\mathrm{dimension}(\xb)=100$.
    \item \textbf{NYSE-close:}  This dataset is obtained from the New York stock exchange (NYSE)~\footnote{\url{https://www.kaggle.com/dgawlik/nyse}} dataset as follows.
    Given the set $\set{s_i}$ with $s_i$ corresponding to the the closing stock price of the $i^{th}$ day, we define $s_{k+1} = \sum_{i\in[100]}w_i s_{k+1-i}$. Here $y_k=s_{k+1}$ and $\xb_k = [s_{k}, s_{k-1},...,s_{k-99}]$. This dataset has $\mathrm{dimension}(\xb)=100$.
    \end{itemize}



\subsection{Implementation details}

\xhdr{Our models}
We use two models--- a simple linear regression model and a two layer neural network that consists of a linear layer of $5$ hidden nodes and a ReLU activation unit. In all our experiments, we use a learning rate of 0.01. We choose the value of $\delta$
as the 30\% of the mean validation error obtained using \full. 

\xhdr{Implementation of \craig} \craig \cite{mirzasoleiman2019coresets} 
requires computing a $\mathcal{D} \times \mathcal{D}$ matrix with similarity measure for each pair of points in the training set. 
For the larger datasets, \ie, NYSE-close and NYSE-high, such a computation requires a large amount of memory. Hence, we use a stochastic version where we randomly select $R$ points and build $R \times R$ matrix and select $\frac{kR}{\mathcal{D}}$ each time and repeat the process $\frac{\Dcal}{R}$ times. We use $R = 50000$. Note that, for other datasets, since $|\Dcal| < 50000$ the stochastic version is same as the original version. 

\craig requires us to select the subset only once, since  features will not change even as the training proceeds. However, since \craig is an adaptive method, for the non-linear setting, we need to run \craig every epoch.   Despite using the stochastic version, we found \craig to be very slow in the non-linear setting and  therefore we don't report it.

\xhdr{Implementation of \glister}
\glister~\cite{killamsetty2020glister}, an another adaptive subset selection method where we select a new subset every $35^{\text{th}}$ epoch to help make a fair comparison against \our. 
We  update the model parameters after every selection step. 

\xhdr{Machine configuration}
We performed our experiments on a computer system with Ubuntu 16.04.6 LTS, an i-7 with 6 cores CPU and a total RAM of 125 GBs. The system had a single GeForce GTX 1080 GPU which was employed in our experiments.
%
%

 \section{Additional experiments}
\label{app:real} 

\begin{figure}[h]
\centering
\hspace*{-0.1cm}\hspace*{-0.6cm}\subfloat[Cadata]{\setcounter{subfigure}{1} \includegraphics[ width=0.20\textwidth]{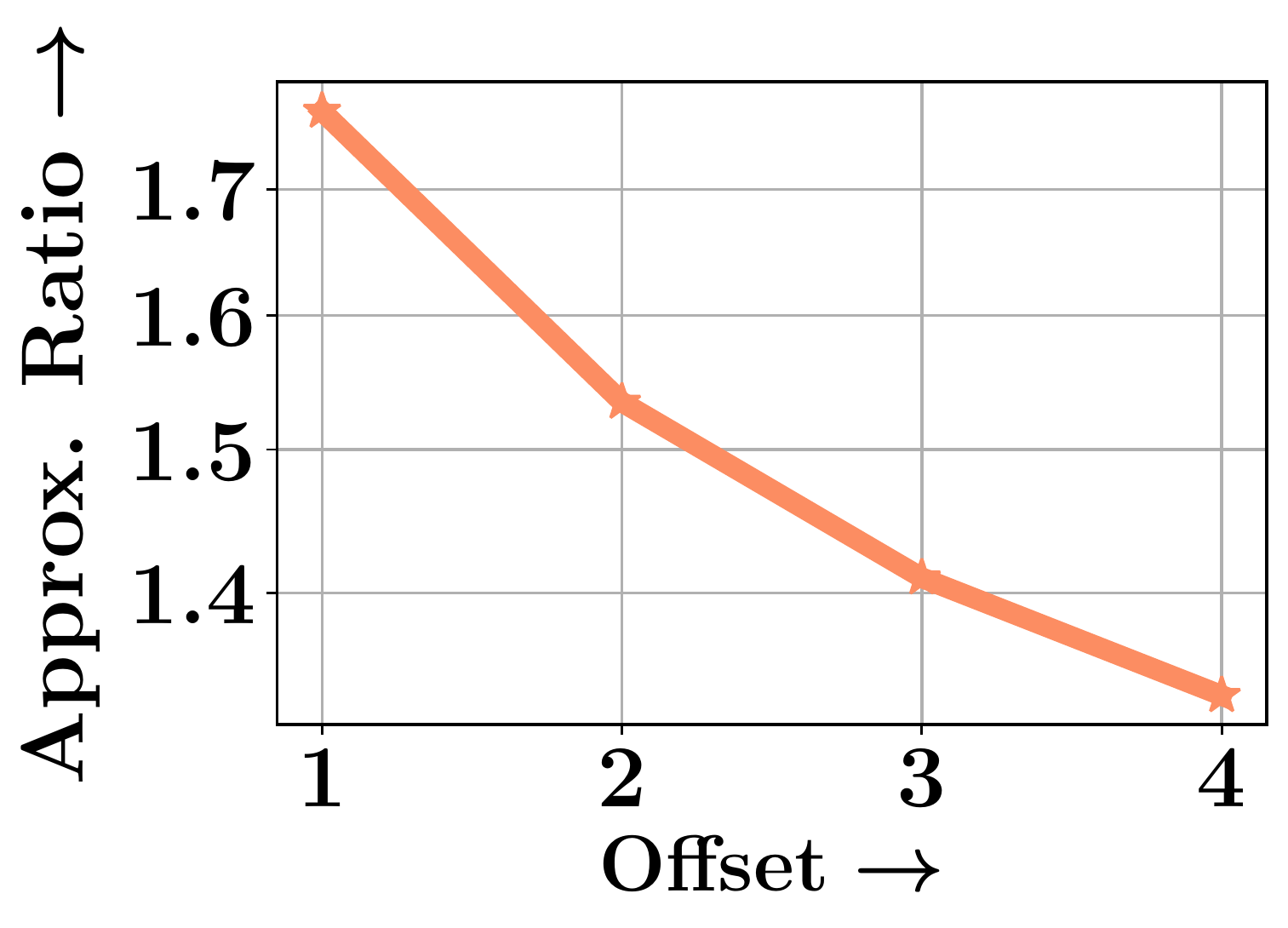}}\hspace{1mm}
\hspace*{0.1cm}\subfloat[Law]{\includegraphics[width=0.20\textwidth]{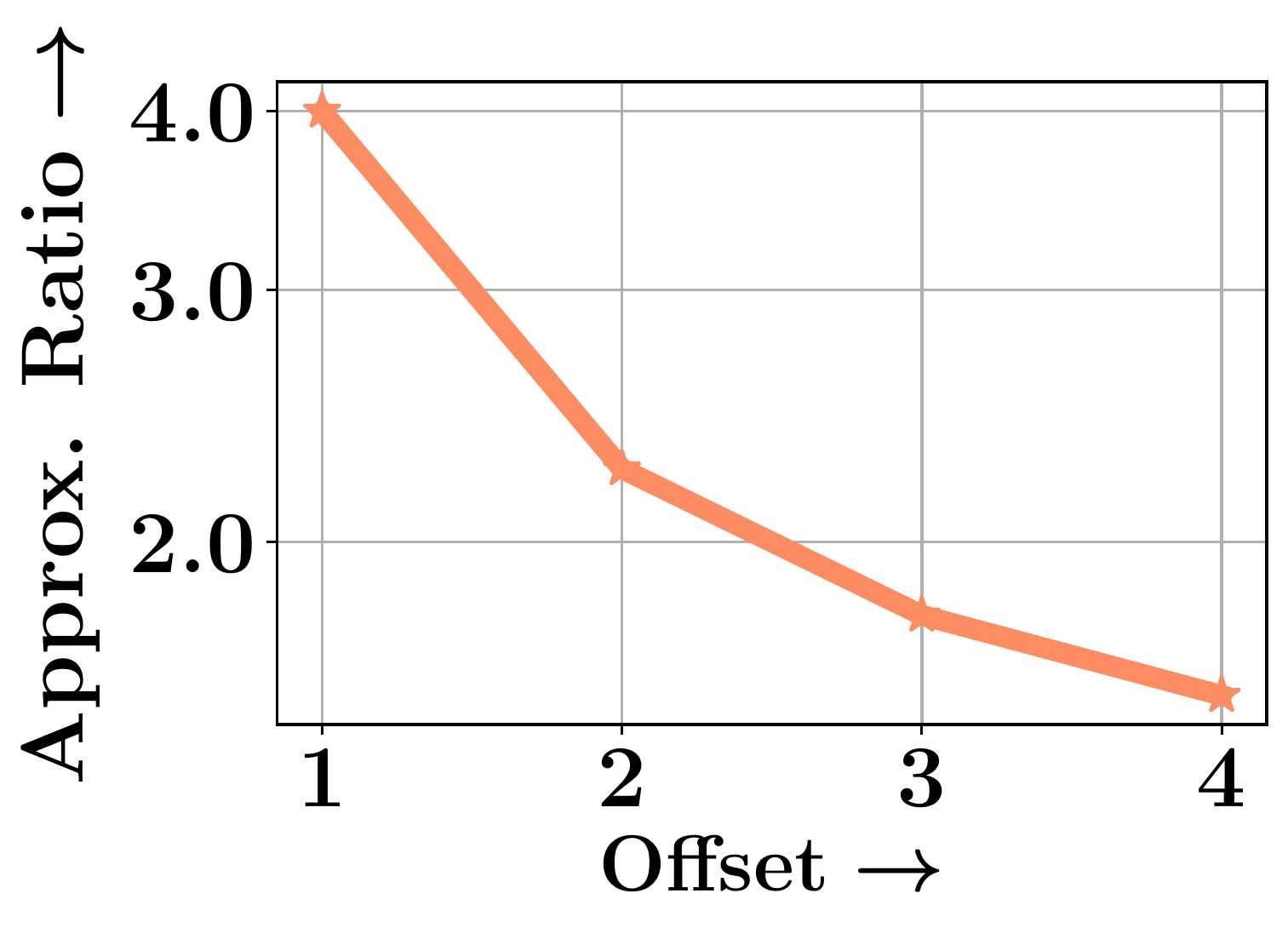}}\hspace{1mm}
\hspace*{0.1cm}\subfloat[NYSE-High]{ \includegraphics[width=0.20\textwidth]{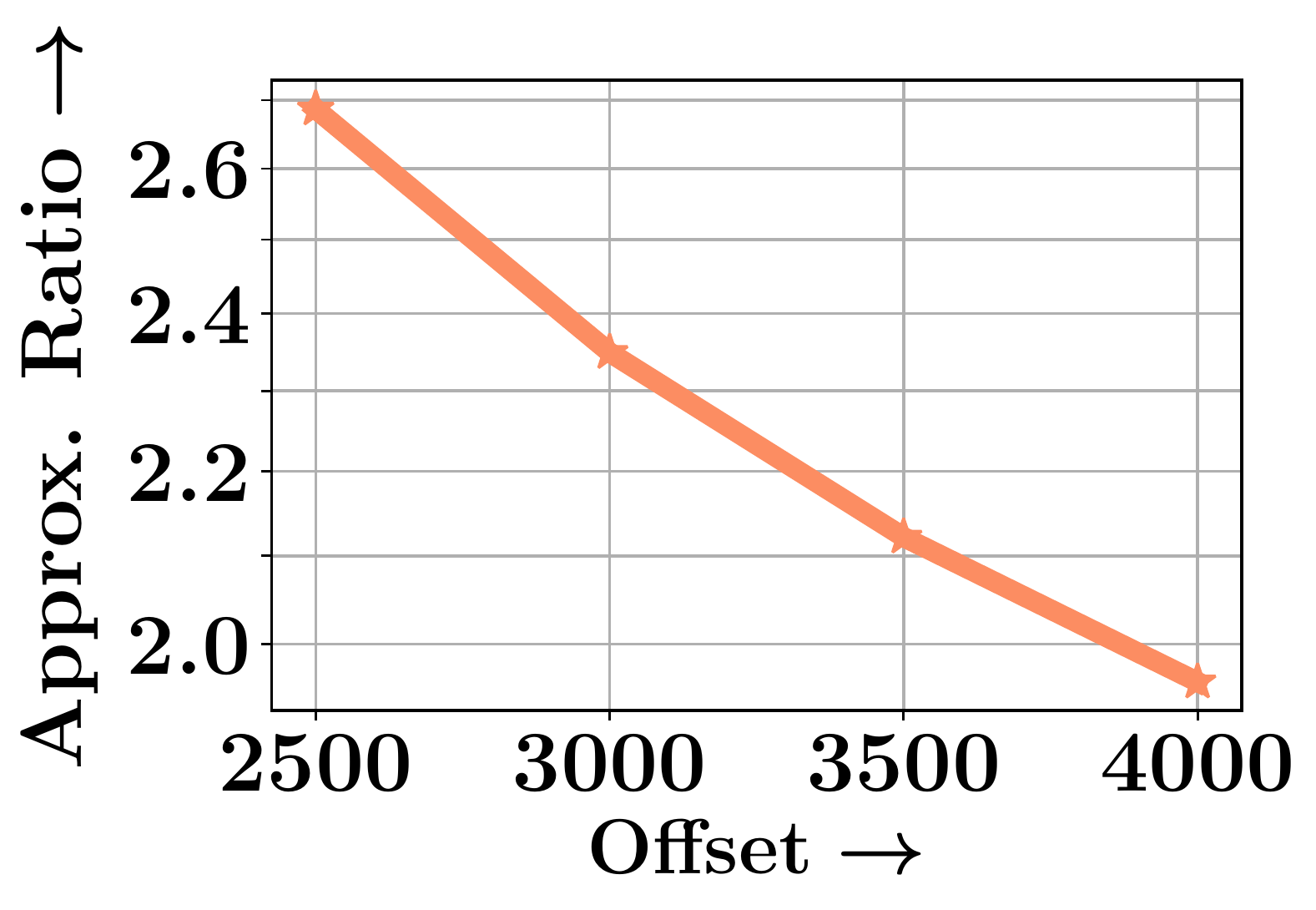}}\hspace{1mm}
\hspace*{0.1cm}\subfloat[NYSE-Close]{ \includegraphics[width=0.20\textwidth]{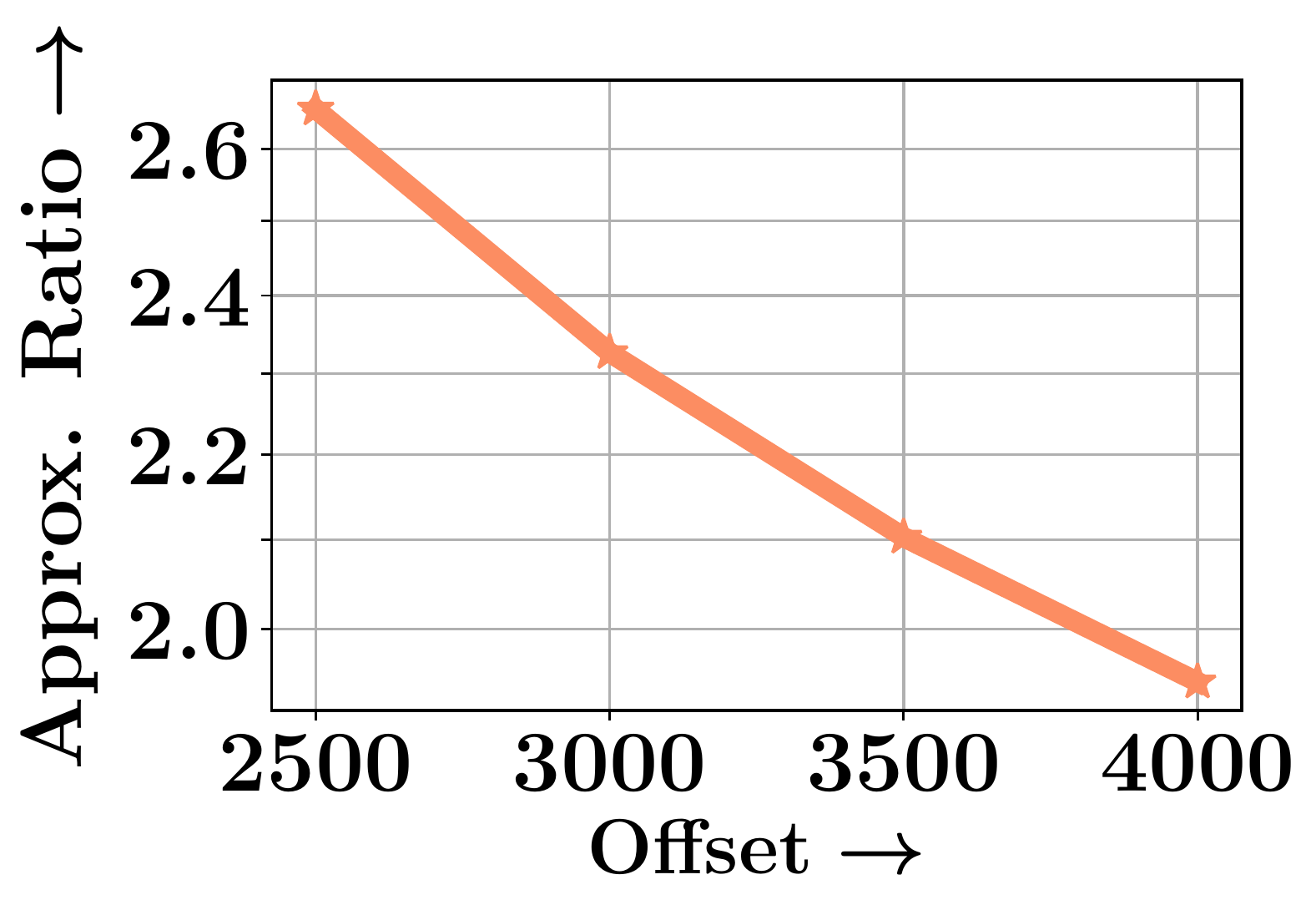}}
\caption{Variation in the approximation ratio with respect to the offset added to the response variables $y$.}
\label{fig:approx}
\end{figure}


\subsection{Discussion on adding offsets to the response variable $y$} The approximation ratio of \our is 
$f(\Schat)/f(\Scal^*) \le \dfrac{k}{\af(1 + (k-1)(1 - \kf)\af)}$ when the training method is accurate. A trite calculation shows that this quantity is $O(\ymx^4/y_{\min}^4)$. If $\ymx/y_{\min}$ is very high, the approximation ratio is affected. Such a problem can be easily overcome by adding an offset to  $y$ and then augmenting the feature $\xb$ with an additional term $1$--- which incorporates the effect of the added offset. 
We summarize the effect of this offset on the approximation ratio (for different datasets) in Figure~\ref{fig:approx} which shows that adding an offset
improves the approximation factor. Note that in the case of Cadata,  $y$ indicates the house price; whereas in the case of Law, $y$ indicates student GPA. Therefore, the approximation factor of these datasets is reasonable even without adding an offset. Whereas, for  NYSE-High and NYSE-Clone, the approximation factor is somewhat poorer at lower values of the offset (not shown in the plot).

\subsection{Significance Tests}

\begin{table}[!htbp]
    \centering
    
    \scalebox{0.7125}{
   
    \begin{tabular}{c|cccccc}
        
        \textsc{Random-selection} &  &   &   &   &     \\ \cline{2-2}
        \textsc{Random-with-constraints} & \multicolumn{1}{|c|}{0.000089} &  &   &   &   \\ \cline{2-3}
        \textsc{\craig} & \multicolumn{1}{|c|}{0.00012 } & \multicolumn{1}{c|}{0.50159} &  &  &    \\ \cline{2-4}
        \textsc{\glister} & \multicolumn{1}{|c|}{0.040043 } & \multicolumn{1}{c|}{0.00014}& \multicolumn{1}{c|}{0.00078} &  &   &   \\ \cline{2-5}
        \textsc{\our-without-constraints} & \multicolumn{1}{|c|}{0.001713} & \multicolumn{1}{c|}{0.601212} & \multicolumn{1}{c|}{0.88129} & \multicolumn{1}{c|}{0.00803} &  \\ \cline{2-6}
        \textsc{\our} & \multicolumn{1}{|c|}{0.0001} & \multicolumn{1}{c|}{0.00014} & \multicolumn{1}{c|}{ 0.00059} & \multicolumn{1}{c|}{ 0.0001} & \multicolumn{1}{c|}{0.0001} \\ \cline{2-7}
        \multicolumn{1}{c}{}& \rotatebox[origin=c]{60}{\textsc{Random-selection}} & \rotatebox[origin=c]{60}{\textsc{Random-with-constraints}} & \rotatebox[origin=c]{60}{\textsc{\craig}} & \rotatebox[origin=c]{60}{\textsc{\glister}} & \rotatebox[origin=c]{60}{\textsc{\our-without-constraints}} & \rotatebox[origin=c]{60}{\textsc{\our}}\\ 
    \end{tabular}}
    \caption{Pairwise significance p-values using Wilcoxon signed rank test.}
    \label{tab:significance}
\end{table}

In Table~\ref{tab:significance}, we show the p-values of two-tailed Wilcoxon signed-rank test \cite{wilcoxon1992individual} performed on every possible pair of data selection strategies to determine whether there is a significant statistical difference between the strategies in each pair, across all datasets. Our null hypothesis is that there is no difference between each pair of data selection strategies. From the results, it is evident that \textsc{SelCon} significantly outperforms other baselines at $p<0.01$.


\end{document}